%% file: main_arxiv.tex
\documentclass{article} %
\usepackage{hyperref}
\usepackage{url}
\usepackage[utf8]{inputenc}
\usepackage{booktabs} %
\usepackage{nicefrac} %
\usepackage{microtype} %
\usepackage{graphicx}
\usepackage{makecell}
\usepackage{wrapfig}

\usepackage{natbib}
\usepackage{gensymb}
\usepackage{color}
\usepackage{caption}
\usepackage{subcaption}
\usepackage{xcolor}
\usepackage{xspace}

\usepackage{amsfonts}
\usepackage{amsthm}
\usepackage{mathrsfs}
\usepackage{amsmath}
\usepackage{amssymb}
\usepackage{mathtools}
\usepackage{dsfont}
\usepackage{mathrsfs}
\usepackage{bbold}
\usepackage{bbm}

\usepackage{wrapfig}
\usepackage{soul}
\usepackage[textwidth=1.2in,textsize=tiny]{todonotes}
\usepackage{diagbox}
\usepackage{algorithm}

\ifdefined\nohyperref\else\ifdefined\hypersetup
  \definecolor{mydarkblue}{rgb}{0,0.08,0.45}
  \hypersetup{ %
    pdftitle={},
    pdfsubject={},
    pdfkeywords={},
    pdfborder=0 0 0,
    pdfpagemode=UseNone,
    colorlinks=true,
    linkcolor=mydarkblue,
    citecolor=mydarkblue,
    filecolor=mydarkblue,
    urlcolor=mydarkblue,
    }
  \fi
\fi



\usepackage{algpseudocode}
\usepackage{hhline}
\usepackage{multirow}

\definecolor{codegreen}{rgb}{0,0.3,0.6}
\definecolor{codegray}{rgb}{0.5,0.5,0.5}
\definecolor{codepurple}{rgb}{0.58,0,0.82}
\definecolor{backcolour}{rgb}{0.95,0.95,0.92}
\definecolor{orange}{rgb}{1,0.5,0}

\usepackage{enumitem}

\usepackage{listings}
\lstdefinestyle{mystyle}{
    basicstyle=\tiny,
    commentstyle=\color{codegreen},
    keywordstyle=\color{magenta},
    numberstyle=\tiny\color{codegray},
    stringstyle=\color{codepurple},
    basicstyle=\fontsize{8.5}{9}\selectfont\ttfamily,
    breakatwhitespace=false,         
    breaklines=true,                 
    captionpos=b,                    
    keepspaces=true,                 
    numbers=none,                    
    numbersep=5pt,                  
    showspaces=false,                
    showstringspaces=false,
}
\usepackage{hhline}
\usepackage{multirow}

\usepackage[
  letterpaper,
  top=1in,
  bottom=1in,
  left=1in,
  right=1in
]{geometry}

\usepackage{fontawesome}
\usepackage{marvosym}

\newlength{\defbaselineskip}
\RequirePackage[T1]{fontenc}
\RequirePackage[tt=false, type1=true]{libertine}
\RequirePackage[varqu]{zi4}
\RequirePackage[libertine]{newtxmath}

\newtheorem{theorem}{Theorem}[section]

\newtheorem{definition}[theorem]{Definition}
\newtheorem{lemma}[theorem]{Lemma}
\newtheorem{proposition}[theorem]{Proposition}

\newtheorem{remark}[theorem]{Remark}

\newcommand{\Mat}{\boldsymbol}

\newcommand{\real}{\mathbb{R}}

\newcommand{\wt}[1]{\widetilde{#1}}

\DeclareMathOperator{\diag}{diag}

\DeclareMathOperator{\gauss}{\mathcal{N}}

\DeclareMathOperator*{\argmin}{arg\,min}
\DeclareMathOperator*{\argmax}{arg\,max}
\DeclareMathOperator*{\lprod}{\overleftarrow{\prod}}

\newcommand{\name}{TTC-Net\xspace}

\newcommand{\myparagraph}[1]{\noindent\textbf{#1}}

\usepackage{authblk}

\title{Beyond Test-Time Memory: \\ State-Space Optimal Control for LLM Reasoning
}

\author{\normalsize
Peihao Wang$^{1,2}$, Shan Yang$^1$, Xijun Wang$^1$, Tesi Xiao$^1$,
Xin Liu$^1$, Changlong Yu$^1$, Yu Lou$^1$ \\
Pan Li$^3$, Zhangyang ``Atlas'' Wang$^2$, Ming Lin$^{1,4}$, Ren\'{e} Vidal$^{1,5}$
}

\affil[]{\normalsize
$^1$Amazon \quad
$^2$The University of Texas at Austin \quad
$^3$Georgia Institute of Technology \\
$^4$University of Maryland, College Park \quad
$^5$University of Pennsylvania \quad
}

\date{}

\begin{document}

\maketitle
\vspace{-3em}
\begin{center}
\faFlask~ \href{https://vita-group.github.io/TTC-Net}{\texttt{vita-group.github.io/TTC-Net}}
\end{center}

\begingroup
\renewcommand{\thefootnote}{}
\footnotetext{\Letter~Correspondence to: Peihao Wang <\url{peihaowang@utexas.edu}>, Shan Yang <\url{alexyangshan@gmail.com}>, Ming Lin <\url{lin@umd.edu}>, and Ren\'{e} Vidal <\url{vidalr@upenn.edu}>.}
\endgroup

\renewcommand{\thefootnote}{\arabic{footnote}}

\begin{abstract}
\normalsize
Associative memory has long underpinned the design of sequential models.
Beyond recall, humans reason by \textit{projecting future states and selecting goal-directed actions}, a capability that modern language models increasingly require but do not natively encode.
While prior work uses reinforcement learning or test-time training, planning remains external to the model architecture.
We formulate reasoning as \emph{optimal control} and introduce the \emph{Test-Time Control (TTC)} layer, which performs finite-horizon \emph{LQR} planning over latent states at inference time, represents a \emph{value function} within neural architectures, and leverages it as the nested objective to enable \emph{planning before prediction}.
To ensure scalability, we derive a hardware-efficient LQR solver based on a symplectic formulation and implement it as a fused CUDA kernel, enabling parallel execution with minimal overhead. 
Integrated as an adapter into pretrained LLMs, TTC layers improve mathematical reasoning performance by up to {\bf +27.8\%} on MATH-500 and {\bf 2-3$\times$} Pass@8 improvements on AMC and AIME, demonstrating that embedding optimal control as an architectural component provides an effective and scalable mechanism for reasoning beyond test-time training.
\end{abstract}

\input{tex/01_intro}

\input{tex/02_prelim}

\input{tex/03_method}
\input{tex/04_expr}
\input{tex/05_conclusion}

\section*{Acknowledgements}

PW thanks Liangzu Peng, Junbo Li, and Zun Wang for insightful discussions and for pointing to useful resources.
PW also thanks Ruisi Cai for proofreading this manuscript.
This work was done during PW's internship with Amazon.
PW is in part supported by Google PhD Fellowship in Machine Learning and ML Foundations.

\bibliographystyle{icml2026}
\bibliography{ref}

\clearpage
\appendix
\input{tex/xx_appendix}

\end{document}

%% file: tex/01_intro.tex
\section{Introduction}

Sequence-processing architectures have evolved from recurrent neural networks (RNNs) \citep{hochreiter1997long, sutskever2014sequence, cho2014learning} to transformers \citep{vaswani2017attention} and, more recently, to State-Space Models (SSMs) and linear RNNs \citep{gu2023mamba, yang2023gated, yang2024gated, beck2024xlstm}.  
Despite substantial architectural differences, these models share a common design principle: prediction through \emph{associative memory}. Past context is encoded as memory states, and the next token is generated by retrieving or decoding information from this stored representation \citep{hopfield1982neural, hopfield1985neural, widrich2020modern, wang2025test}.  
Attention mechanisms explicitly retain all past key-value pairs and match queries against them \citep{sun2024learning}, while linear RNNs and SSMs progressively compress historical context into a fixed-size latent state via online regression and decode this state to produce future predictions \citep{schlag2021linear, yang2024parallelizing, behrouz2024titans, behrouz2505atlas, behrouz2025s}.

This memory-centric paradigm has proven highly effective for language modeling, yet increasingly reveals limitations when models are required to \emph{reason}, \emph{discover}, or \emph{solve problems}. These tasks demand mechanisms beyond memorization and retrieval.  
From a cognitive perspective, human intelligence operates through an interplay between System~1 and System~2 thinking \citep{kahneman2011thinking}. System~1 relies on fast, automatic pattern matching over memory, while System~2 engages in deliberate, multi-step planning and long-horizon reasoning.  
Current LLM architectures largely instantiate System~1 behavior: they predict the next token by extrapolating from past context, but lack a dedicated architectural mechanism for System~2-style planning.

Recent advances in reinforcement learning (RL) have partially addressed this gap by making language models more goal-directed and planning-oriented, particularly in mathematical reasoning and coding \citep{shao2024deepseekmath, guo2025deepseek}.  
However, RL is typically applied as an external training or post-training procedure. The reward-driven optimization is absent during early pretraining and remains decoupled from the model's core inference mechanism \citep{hatamizadeh2025rlp, kobayashi2025emergent}.  
As a result, RL alone cannot fully overcome the reasoning ceilings imposed by memory-based architectures \citep{yue2504does}. The model learns \emph{what} to optimize, but not \emph{how} to reason through planning as part of its forward computation.

\subsection{Main Contributions}

\begin{wrapfigure}{r}{0.55\linewidth}
\vspace*{-1em}
  \centering
  \includegraphics[width=\linewidth]{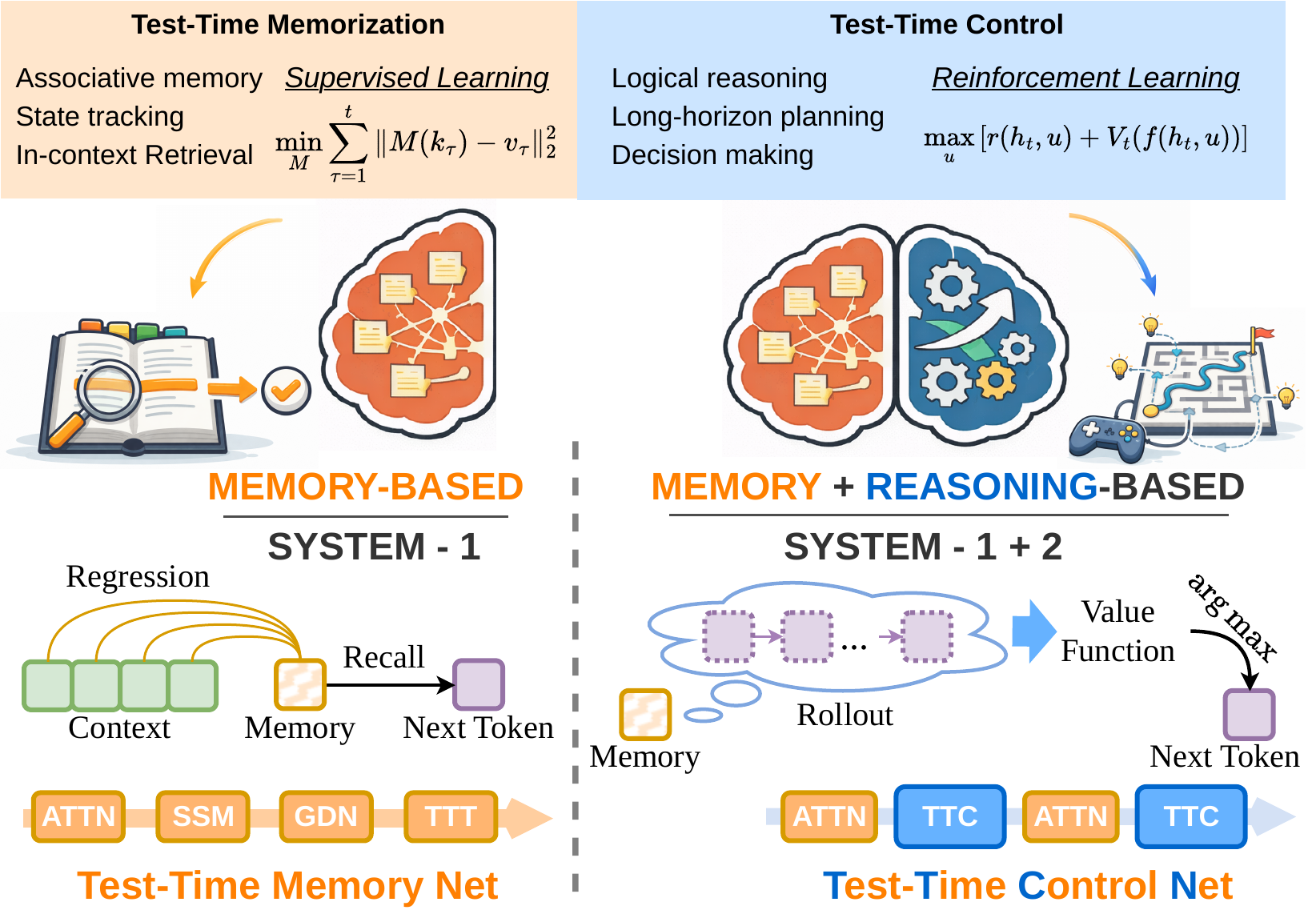}
  \caption{
  \textbf{Memory-only prediction vs. unified memory-control planning.}
Memory-based models resemble human System~1 processing, relying on associative retrieval and producing fixed responses.
Inspired by System~2 cognition, we model reasoning as an explicit optimal control problem and propose \textbf{TTC-Net}, a unified architecture that integrates test-time control (\textbf{TTC}) layers to encode value functions during sequential modeling, enabling planning before prediction.
  }
  \label{fig:teaser}
  \vspace*{-2em}
\end{wrapfigure}

Motivated by this gap in reasoning, we propose a different architectural perspective: \emph{\textbf{viewing reasoning and planning as an optimal control problem over internal representations}}.
Rather than treating planning as a training-time or test-time optimization procedure, we internalize it directly into the model architecture. 
We operationalize this perspective architecturally by introducing a \emph{Test-Time Control (TTC)} layer, which maps memory representations to optimal actions of a tractable control problem.
Given a context-encoded latent state, a TTC layer performs test-time planning by solving a Markov Decision Process (MDP) with linear state transitions and quadratic cost functions, corresponding to a \emph{Linear-Quadratic Regulator (LQR)}. The first-step optimal control action is then decoded as the next-token representation, enabling the model to \emph{plan before prediction}.

Crucially, the TTC layer performs \emph{online world modeling} \citep{ha2018world} and endows each language modeling block with a latent \emph{value function}. Environment dynamics and cost functions are synthesized from context, allowing planning to adapt to the current reasoning state. To capture temporal structure and long-horizon objectives, the TTC layer supports time-heterogeneous parameterizations of both dynamics and costs. 
Existing fast-weight test-time memorization methods \cite{yang2024parallelizing, sun2024learning, behrouz2025nested} interpret inference as test-time estimation, solving self-supervised regression problems to better encode past context.
In contrast, TTC interprets inference as test-time decision making, solving a structured optimal control problem to select actions that optimize future outcomes.

To enable end-to-end learning, we derive a differentiable formulation of the TTC layer inspired by differentiable optimization \citep{amos2017optnet}. By casting the LQR problem into a KKT system, gradients can be propagated through the optimal solution by solving a second, structurally related LQR system \citep{amos2018differentiable}.  
This results in a nested learning process \citep{behrouz2025nested}: an inner loop that solves the control problem conditioned on the current context, and an outer loop that updates the world-modeling parameters to improve downstream objectives.

A central challenge in integrating optimal control into large-scale language models is computational efficiency. Classical Riccati-based solvers \citep{bellman1954theory, kalman1960contributions} require sequential backward passes and repeated matrix inversions, making them poorly aligned with modern accelerators.  
We address this challenge through a hardware–algorithm co-design.
Leveraging the symplectic structure of LQR dynamics \citep{bittanti1991riccati, laub2003schur}, we reformulate the solver by replacing sequential linear-system solves with recursive matrix products and parallel matrix inversions, and further reduce the number of matrix inversions to a constant using structured parameterization.
We further fuse this solver into a numerically stable CUDA kernel that efficiently supports both forward planning and backward differentiation. Building on these components, we propose a hybrid architecture, \emph{TTC-Net}, which interleaves TTC layers with memory-based modules such as attention. TTC-Net consistently outperforms purely memory-based models on challenging reasoning tasks, including Sudoku and mathematical problem solving, and can be inserted as an adapter into pretrained LLMs without modifying the base architecture. 

We summarize our contributions as follows:
\begin{itemize}
    \item We introduce a new architectural paradigm that treats language model reasoning at test time as an \emph{optimal control} problem, internalizing a \textit{value function} for sequence modeling mechanisms, in contrast to test-time self-supervised training or memory-only prediction.
    \item We propose the \emph{Test-Time Control (TTC)} layer, which embeds finite-horizon LQR planning into the forward pass and decodes optimal control actions as next-token representations.
    \item We derive a fully differentiable formulation of TTC via a KKT-based analysis and develop a symplectic LQR solver that amortizes sequential matrix inversions to a series of hardware-efficient tensor operations, enabling high degrees of computational parallelism and throughput.
    \item We build \emph{TTC-Net}, a hybrid architecture that integrates TTC layers with memory-based modules, and demonstrate consistent gains on challenging reasoning benchmarks, including mathematical and symbolic tasks, up to +27.8\% on MATH-500 and 2-$3\times$ Pass@8 improvements on AMC and AIME.
\end{itemize}

More broadly, this work proposes a unified view of training and reasoning in language models.
Instead of treating supervised learning, RL, and test-time reasoning as separate stages, we internalize goal-directed reasoning as a hardware-efficient optimal control layer within the model architecture -- integrating test-time memorization, world modeling, model-based RL, and planning in a single architectural framework, enabling models to reason over future trajectories at inference time while maintaining RL as a persistent objective throughout all training stages \citep{hatamizadeh2025rlp, dong2025reinforcement}.

%% file: tex/02_prelim.tex
\section{Background: Memory-Based Architectures}
\label{sec:prelim}
Given a sequence of $t$ tokens $\mathsf{s}_1, \cdots, \mathsf{s}_t$, auto-regressive LLMs learn to predict the next token according to the conditional distribution $p_{\mathsf{LLM}}(\mathsf{s}_{t+1} | \mathsf{s}_1, \cdots, \mathsf{s}_t)$. While a variety of architectures have been proposed to model this conditional probability, many of them are derived from associative memory mechanisms \citep{hopfield1982neural, hopfield1984neurons, hopfield1985neural, schmidhuber1992learning, ramsauer2020hopfield}.
In this view, patterns associated with the context $\mathsf{s}_1, \cdots, \mathsf{s}_t$ are stored in memory, and $\mathsf{s}_{t+1}$ is synthesized by retrieving information from the contextual representation through the minimization of an energy function.

To be more specific, let $\Mat{X} = [\Mat{x}_1, \cdots, \Mat{x}_n]^\top \in \mathbb{R}^{n \times d}$ denote a sequence of token embeddings, and let $\Mat{X}_{\le t} = [\Mat{x}_1, \cdots, \Mat{x}_t]^\top$ denote the prefix up to time $t$.
A memory unit is a sequence-to-sequence mapping, whose forward pass can be formulated as learning an online predictor $M_t: \real^{d} \rightarrow \real$ that encodes the patterns contained in $\Mat{X}_{\le t}$ and retrieves relevant information for a given query as the output.
The memory is updated online by optimizing a self-supervised regression objective that stores predictive features from the observed prefix:
\begin{align} \label{eqn:ttt}
M_t = \argmin_{M} \frac{1}{2} \sum_{\tau=1}^{t} w_{t, \tau} \lVert M(\Mat{k}_\tau) - v_\tau \rVert_2^2 + R(M),
\end{align}
where $\Mat{k}_{\tau} = \Mat{W}_K \Mat{x}_{\tau}$ and $\Mat{v}_{\tau} = \Mat{W}_V \Mat{x}_{\tau}$ are two views created from the token $\Mat{x}_{\tau}$ through projection matrices $\Mat{W}_K \in \real^{d \times d}$ and $\Mat{W}_V \in \real^{1 \times d}$, $w_{t, \tau} \in \real$ are coefficients re-weighing the past tokens, and $R(\cdot)$ is a regularizer over $M_t$.
After obtaining the optimal predictor $M_t$, a query vector is formed as $\Mat{q}_t = \Mat{W}_Q \Mat{x}_t$ with projection $\Mat{W}_Q \in \real^{d \times d}$, and the memory unit produces $M_t(\Mat{q}_t)$ as the output at the $t$-th position\footnote{Note that it is easy to extend $M_t$ to be a vector-to-vector mapping (e.g., vector-valued value entries in attention) by considering multiple $M_t$ functions solving Eq. \eqref{eqn:ttt} with shared key and query vectors.}.
Because models built around this memory module perform an implicit regression over the historical context during the forward pass, they are often considered as \textit{Test-Time Training (TTT)} architectures that can mitigate domain shift and prolong learning at inference time \citep{liu2024longhorn, sun2024learning, behrouz2024titans, tandon2025end}.
Different neural architectures can be interpreted as different instantiations of this paradigm:

\myparagraph{Attention.}
As one of the most popular neural components, attention \citep{vaswani2017attention} has been recognized as a memory mechanism as well \citep{ramsauer2020hopfield}.
Under our formulation, attention can be viewed as a non-parametric regressor minimizing Eq. \eqref{eqn:ttt} with a mean squared error via the online Nadaraya-Watson estimator \citep{nadaraya1964estimating, sun2024learning}.
In particular, we define $w_{t, \tau} = 1$ for every $t, \tau$, and $R(M) \equiv 0$ in Eq. \eqref{eqn:ttt}.
Then attention can be formulated via the following solution to Eq. \eqref{eqn:ttt}:
\begin{align}
\label{eqn:attn}
M_t(\Mat{q}) = \sum_{\tau=1}^{t}  \frac{\exp(\Mat{q}^\top \Mat{k}_{\tau} / \sqrt{d}) v_{\tau}}{\sum_{\tau'=1}^{t} \exp(\Mat{q}^\top \Mat{k}_{\tau'} / \sqrt{d})},
\end{align}
where scaled exponential function $\exp(\Mat{q}^\top \Mat{k} / \sqrt{d})$ is chosen as the kernel function.
Grounded in non-parametric regression, attention has to traverse through all past tokens (KV caches) before making predictions for the next token.

\myparagraph{Linear RNNs.}
Moving beyond attention, linear RNNs have been proposed as an alternative to avoid the notorious quadratic computational complexity of attention \citep{gu2023mamba, sun2023retentive, yang2023gated, yang2024gated, beck2024xlstm}.
In test-time training perspective, linear RNNs correspond to parametric solvers of Eq. \eqref{eqn:ttt}, where $M_t$ becomes a linear mapping parameterized by $\Mat{h}_t \in \real^{d}$, such that $M_t(\Mat{q}) = \Mat{h}_t^\top \Mat{q}$.
In linear RNNs, $\Mat{h}_t$ is referred to as the \textit{state}, which tracks all past tokens within a fixed-size memory block.
By setting $w_{t,\tau} = 1$ only for $t = \tau$, leveraging gradient descent to update $\Mat{h}_t$, and flexibly configuring $R(\cdot)$ for Eq. \eqref{eqn:ttt}, a variety of linear RNNs or SSMs naturally emerge under the following general formulation  \citep{gu2023mamba, sun2023retentive, yang2023gated}:
\begin{align}
\label{eqn:ssm}
\Mat{h}_{t} = \Mat{A}_t \Mat{h}_{t-1} + \Mat{B}_t \Mat{x}_t,
\end{align}
where $\Mat{A}_t \in \real^{d \times d}$ controls what information will be passed to the next state, while $\Mat{B}_t \in \real^{d \times d}$ encodes the raw inputs to the state space. 
Different linear RNNs are designed with distinct structures of $\Mat{A}_t$ and $\Mat{B}_t$.
For instance, when $R(M)$ is removed from the objective Eq. \eqref{eqn:ttt} and gradient descent is applied with learning rate $\eta_t$, we obtain a new recurrent rule for $\Mat{h}_t$ with $\Mat{A}_{t} = (\Mat{I} - \eta_t \Mat{k}_t\Mat{k}_t^\top), \Mat{B}_{t} = \eta_t \Mat{k}_t \Mat{W}_V^\top$.
This corresponds to DeltaNet, as proposed in \citet{schlag2021linear, peng2025rwkv, yang2024parallelizing}\footnote{Similarly, by considering multiple parallel $M_t$'s, $\Mat{h}_t$ can be extended to matrix-valued states as in \citet{dao2024transformers, yang2023gated}.}.

\paragraph{More Memory-Based Architectures.}
Beyond simple linear regression, it is also viable to perform parametric regression with a non-linear $M_t$ (e.g., letting $M_t$ be a neural network) \citep{sun2024learning, zhang2025test, tandon2025end}, a loss functions other than the $\ell_2$ error \citep{behrouz2025s}, a decaying or truncated $w_{t, \tau}$ \citep{gu2020hippo, behrouz2505atlas}, different regularization terms \citep{von2023uncovering}, more advanced optimization algorithms \citep{behrouz2024titans, behrouz2505atlas, von2025mesanet, peng2025gated}, such as Adam, Muon \citep{jordan2024muon}, conjugate gradient, and Chebyshev iteration.
Each such modification gives rise to a distinct memory-based architecture that is covered in Eq. \eqref{eqn:ttt}.
A comprehensive summary of these extensions can be found in \citet{behrouz2025s}.
Another thread of work further combines multiple memory mechanisms, yielding hybrid sequence models \citep{ren2024samba, dong2024hymba, zancato2024b, du2025mom}.

%% file: tex/03_method.tex
\section{A Planning-Based Neural Architecture}

\subsection{Learning to Reinforce Learning at Test Time}
\label{sec:ttc}

Our key idea is to model a layer for next-token prediction as the optimal decision induced by a tractable Markov Decision Process (MDP) environment, thereby programming a goal-seeking objective into the LLM architecture.
Suppose we are predicting the $(\tau+1)$-th token given the prior context embeddings $[\Mat{x}_1, \cdots, \Mat{x}_\tau]$,  we begin by denoting an initial state as $\Mat{h}_0 \in \mathbb{R}^d$, which encodes the context $[\Mat{x}_1, \cdots, \Mat{x}_\tau]$ up to time $\tau$.
Given a planning horizon $T > 0$, we denote the latent state and token prediction at $t$ steps ahead as $\Mat{h}_{t}, \Mat{u}_{t} \in \real^d$, respectively.
The goal is to output the first-step prediction that achieves optimality of the following Bellman equation: $\Mat{u}_1^{*} = \argmax_{\Mat{u}} Q_0(\Mat{h}_0, \Mat{u})$:
\begin{align}\label{eqn:bellman}
Q_t(\Mat{h}_t, \Mat{u}) &= r_t(\Mat{h}_t, \Mat{u}) + V_{t+1}(f_{t+1}(\Mat{h}_{t}, \Mat{u})), &
V_t(\Mat{h}_t) &= \max_{\Mat{u}} Q_t(\Mat{h}_t, \Mat{u}), \quad 0 \le t \le T-1, &
V_T(\Mat{h}_T) &= r_T(\Mat{h}_T),
\end{align}
where $f_t: (\Mat{h}_{t-1}, \Mat{u}_{t}) \mapsto \Mat{h}_{t}$ models the state transition, $r_t(\Mat{h}_t, \Mat{u}_{t+1}) $ is the reward function, $Q_t$ and $V_t$ are known as Q-function and value function, respectively.
Solving a general Bellman equation as a neural network layer is computationally infeasible.

\begin{wrapfigure}{r}{0.58\linewidth}
    \centering
    \includegraphics[width=0.95\linewidth]{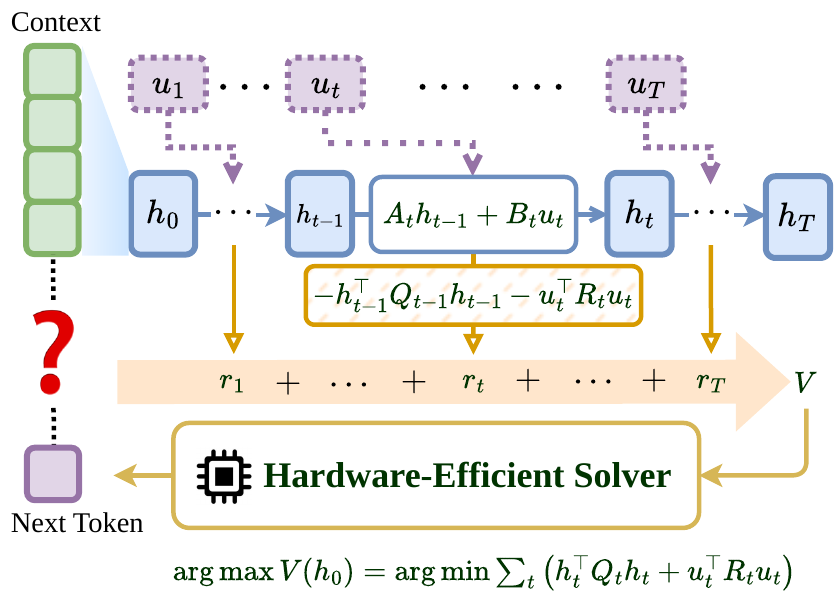}
    \vspace{-1em}
    \caption{\textbf{Test-Time Control (TTC) layer.} To predict the next token, TTC layers think through a receding-horizon LQR problem with linear state evolution and a cost function. A \emph{hardware-efficient solver} computes optimal actions using structured matrix operations, enabling test-time control with minimal inference overhead.}
    \label{fig:ttc}
    \vspace{-2em}
\end{wrapfigure}
Instead, we can consider a more tractable yet expressive family of Eq. \eqref{eqn:bellman}.
Specifically, we adopt a linear dynamical system to model the state transition and optimize a quadratic reward function:
\begin{align}
\label{eqn:state}
&f_t(\Mat{h}_{t-1}, \Mat{u}_t) = \Mat{A}_t \Mat{h}_{t-1} + \Mat{B}_t \Mat{u}_{t}, \\
\label{eqn:reward}
&r_t(\Mat{h}_{t}, \Mat{u}_{t+1}) = -(\Mat{h}_{t}^\top \Mat{Q}_t \Mat{h}_{t} + \Mat{u}_{t+1}^\top \Mat{R}_{t+1} \Mat{u}_{t+1}),
\end{align}
where $\{(\Mat{A}_t \in \real^{d \times d}, \Mat{B}_t \in \real^{d \times d})\}_{1 \le t \le T}$ are matrices that govern the evolution of the state induced by the upcoming token $\Mat{u}_{t}$, and $\{(\Mat{Q}_t \in \real^{d \times d}, \Mat{R}_t \in \real^{d \times d})\}_{1 \le t \le T}$ are symmetrical positive semi-definite matrices defining quadratic cost functions over the state $\Mat{h}_t$ and action token $\Mat{u}_t$.
In particular, $\Mat{Q}_0 = \Mat{0}$ and $r_T(\Mat{h}_T) = -\Mat{h}_T^
\top \Mat{Q}_T \Mat{h}_T$ for the boundary cases.
Despite the linearity of the state transition function, the linear dynamical systems (Eq. \eqref{eqn:state}) have been shown to be sufficiently powerful to represent a broad family of MDPs \citep{hafner2019dream, gu2020hippo, gu2021combining, gu2021efficiently}.
The objective in Eq. \eqref{eqn:ttc} naturally accommodates both process rewards $r_t(\Mat{h}_t, \Mat{u}_{t+1})$ through $\{(\Mat{Q}_t, \Mat{R}_t)\}_{1 \le t \le T - 1}$ and an outcome reward $r_T(\Mat{h}_T)$ via $\Mat{Q}_{T}$ at the terminal step.
Fig. \ref{fig:ttc} illustrates the design of our layer. 

In fact, the MDP system in Eq. \ref{eqn:bellman} is equivalent to solving a constrained optimization over the unrolled states and forecast tokens for decisions $\Mat{u}_{1}, \cdots, \Mat{u}_{T}$:
\begin{align}
\label{eqn:ttc}
& \min_{\Mat{u}_{1}, \cdots, \Mat{u}_{T}} \frac{1}{2}\sum_{t = 1}^{T} (\Mat{h}_{t}^\top \Mat{Q}_t \Mat{h}_{t} + \Mat{u}_{t}^\top \Mat{R}_t \Mat{u}_{t})
& \text{\textit{s.t.}} \quad \Mat{h}_{t} = \Mat{A}_t \Mat{h}_{t-1} + \Mat{B}_t \Mat{u}_{t}, \quad 1 \le t \le T.
\end{align}
Among all $\Mat{u}_{1}^*, \cdots, \Mat{u}_{T}^*$, we take the first-step optimal decision $\Mat{u}_{1}^*$ as the output for the next token representation.
Eq. \eqref{eqn:ttc} is the well-studied receding-horizon \textit{Linear-Quadratic Regulator (LQR)} \citep{bellman1954theory, kalman1960contributions}.
The solution to an LQR problem in Eq. \eqref{eqn:ttc} is known as:
\begin{proposition}[Riccati Iteration] \label{prop:riccati}
The optimal $\Mat{u}_{1}^*$ minimizing Eq. \eqref{eqn:ttc} depends linearly on $\Mat{h}_{0}$ as $\Mat{u}_{1}^* = \Mat{K}^{*}_{1}\Mat{h}_{0}$, where:
\begin{align}
\label{eqn:decode}
\Mat{K}^*_{t} &= -(\Mat{R}_{t} + \Mat{B}_{t}^\top \Mat{P}_{t} \Mat{B}_{t})^{-1} \Mat{B}_{t}^\top \Mat{P}_{t} \Mat{A}_{t}, \\
\label{eqn:riccati}
\Mat{P}_t &= \Mat{Q}_t
 + \Mat{A}_{t+1}^{\top} \Mat{P}_{t+1} \Mat{A}_{t+1} - (\Mat{B}_{t+1}^\top \Mat{P}_{t+1} \Mat{A}_{t+1})^\top (\Mat{R}_{t+1}
 + \Mat{B}_{t+1}^\top \Mat{P}_{t+1} \Mat{B}_{t+1})^{-1}(\Mat{B}_{t+1}^\top \Mat{P}_{t+1} \Mat{A}_{t+1}),
\end{align}
for $t = 1, \cdots, T$, and $\Mat{P}_{T} = \Mat{Q}_{T}$.
\end{proposition}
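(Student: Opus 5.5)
We argue by backward dynamic programming on the value function of Eq. \eqref{eqn:ttc}, i.e.\ by induction on the number of remaining steps. Define the cost-to-go
\[
J_t(\Mat{h}) = \min_{\Mat{u}_{t+1},\dots,\Mat{u}_T} \frac{1}{2}\sum_{s=t+1}^{T}\left(\Mat{h}_s^\top \Mat{Q}_s\Mat{h}_s + \Mat{u}_s^\top\Mat{R}_s\Mat{u}_s\right), \qquad \Mat{h}_t=\Mat{h},
\]
subject to the dynamics of Eq. \eqref{eqn:ttc}, for $0\le t\le T-1$. Set $J_T\equiv 0$. The claim we prove by downward induction is that $J_t(\Mat{h})=\frac{1}{2}\Mat{h}^\top(\Mat{P}_t-\Mat{Q}_t)\Mat{h}$. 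Equivalently, $\frac{1}{2}\Mat{h}^\top\Mat{Q}_t\Mat{h}+J_t(\Mat{h})=\frac{1}{2}\Mat{h}^\top\Mat{P}_t\Mat{h}$, with $\Mat{P}_T=\Mat{Q}_T$ and each $\Mat{P}_t$ symmetric positive semidefinite. This matches $-V_t$ in Eq. \eqref{eqn:bellman} up to the factor $\frac12$, and the convention $\Mat{Q}_0=\Mat{0}$ handles $t=0$.

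For the inductive step, fix $t\ge 1$ and assume the claim at index $t$. The principle of optimality (the Bellman recursion Eq. \eqref{eqn:bellman}) gives
\[
J_{t-1}(\Mat{h}) = \min_{\Mat{u}} \frac{1}{2}\left[\Mat{u}^\top\Mat{R}_t\Mat{u} + (\Mat{A}_t\Mat{h}+\Mat{B}_t\Mat{u})^\top\Mat{P}_t(\Mat{A}_t\Mat{h}+\Mat{B}_t\Mat{u})\right].
\]
Expanding, the objective is a quadratic in $\Mat{u}$ with Hessian $\Mat{R}_t+\Mat{B}_t^\top\Mat{P}_t\Mat{B}_t$ and linear term $\Mat{u}^\top\Mat{B}_t^\top\Mat{P}_t\Mat{A}_t\Mat{h}$. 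Setting the gradient to zero gives the unique minimizer $\Mat{u}=\Mat{K}^*_t\Mat{h}$, with $\Mat{K}^*_t$ exactly as in Eq. \eqref{eqn:decode}. Substituting back and completing the square gives
\[
\frac{1}{2}\Mat{h}^\top\left[\Mat{A}_t^\top\Mat{P}_t\Mat{A}_t-(\Mat{B}_t^\top\Mat{P}_t\Mat{A}_t)^\top(\Mat{R}_t+\Mat{B}_t^\top\Mat{P}_t\Mat{B}_t)^{-1}\Mat{B}_t^\top\Mat{P}_t\Mat{A}_t\right]\Mat{h}.
\]
Adding $\frac12\Mat{h}^\top\Mat{Q}_{t-1}\Mat{h}$ gives Eq. \eqref{eqn:riccati} at index $t-1$.

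To stay inside the induction hypothesis, we also check that the new $\Mat{P}_{t-1}$ is positive semidefinite. This follows because it is $\Mat{Q}_{t-1}$ plus a minimum of positive semidefinite quadratic forms. Symmetry is clear from the formula.

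At $t=1$, the minimizing first action from state $\Mat{h}_0$ is $\Mat{u}_1^*=\Mat{K}_1^*\Mat{h}_0$, which is linear in $\Mat{h}_0$. Finally, the sequential per-stage minimization coincides with the joint minimization in Eq. \eqref{eqn:ttc}. The standard reason is that the total cost splits as stage cost plus a tail depending only on $\Mat{h}_t$ and later controls, so minimizing the tail first is legitimate.

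The main obstacle is invertibility of $\Mat{R}_t+\Mat{B}_t^\top\Mat{P}_t\Mat{B}_t$. This matrix is only positive semidefinite under the stated assumptions. I would assume $\Mat{R}_t\succ0$, which is implicit in Eq. \eqref{eqn:decode}, so the sum is positive definite and the minimizer is unique. The alternative is to phrase the result with a pseudo-inverse, noting that the linear term lies in the range of the Hessian, which gives the same value function and a minimizer that is not necessarily unique. The remaining work is routine algebra in the completion-of-squares step.
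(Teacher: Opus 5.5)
Your proposal is correct and takes essentially the same route as the paper: backward dynamic programming with the inductive hypothesis that the cost-to-go is a quadratic form in $\Mat{h}$, solving the one-step quadratic in $\Mat{u}$ in closed form, and reading off $\Mat{P}_{t-1}$ by completing the square. The paper's version carries an extra affine cost $\Mat{r}_t^\top \Mat{u}_t$, so it also tracks a linear coefficient $\Mat{p}_t$ and a constant, while your explicit treatment of $\Mat{R}_t \succ 0$, positive semidefiniteness of $\Mat{P}_t$, and the equivalence of stagewise and joint minimization fills in points the paper leaves implicit.
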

In addition, we can show that value function $V_t(\Mat{h}_t) = -\frac{1}{2}\Mat{h}_t^{\top} \Mat{P}_{t} \Mat{h}_t$ (see Appendix \ref{sec:riccati}).
Thus, $\Mat{P}_{t}$ is often called a value matrix.
Eq. \eqref{eqn:riccati} computing the value matrices $\{\Mat{P}_t\}_{1 \le t \le T}$ is also known as the Riccati iteration \citep{bellman1954theory, kalman1960contributions}.
It involves a backward iteration from time $T$ to $1$ to compute the value of $\Mat{P}_{1}$ and $\Mat{K}_1$.
This implies that we are not able to solve $\Mat{u}_{1}^*$ by only considering past or local information at time $t=1$ because $\Mat{u}_{1}^*$ has long-term influence over all subsequent states.

Since the forward pass of our proposed module decodes a memory state $\Mat{h}_{0}$ into the optimal decision $\Mat{u}^{*}_{1}$ by internally solving an optimal control problem, we refer to our proposed module as a \textit{Test-Time Control (TTC)} layer.
TTC layer is parameterized via $\{(\Mat{A}_{t}, \Mat{B}_{t}, \Mat{Q}_{t}, \Mat{R}_{t})\}_{t=1}^{T}$, and we denote it end-to-end as $\mathtt{TTC}\left(\Mat{h}_{0}, \{(\Mat{A}_{t}, \Mat{B}_{t}, \Mat{Q}_{t}, \Mat{R}_{t})\}_{t=1}^{T}\right)$.
Unlike a memory-based layer that minimizes regression objectives over historical data points, a TTC layer optimizes for future trajectories and minimizes costs incurred in the long shot.
Solving for the optimal action $\Mat{u}_t^*$ during the model forward requires computing the value function on the fly at inference time.
This process endows each sequential modeling block with an internal value function $V_t$ and a nested model-based RL objective.

\subsection{Differentiating TTC Layers}
\label{sec:differential}

To train a TTC layer end-to-end, suppose we obtain the output $\Mat{o} = \mathtt{TTC}(\Mat{h}_0, \{(\Mat{A}_{t}, \Mat{B}_{t}, \Mat{Q}_{t}, \Mat{R}_{t})\}_{t=1}^{T})$ and compute loss $\ell(\Mat{o})$ in forward pass.
Then, during the backward pass, we receive gradients $\nabla_{\Mat{o}} \ell$ backpropagated from the overall loss function.
Our goal is to leverage this gradient to compute gradients with respect to the TTC parameters $\{(\Mat{A}_{t}, \Mat{B}_{t}, \Mat{Q}_{t}, \Mat{R}_{t})\}_{t=1}^{T}$.

To show how this can be achieved, we need to demonstrate that the TTC objective (Eq. \eqref{eqn:ttc}) can be cast into a KKT form:
\begin{align*}
\mathcal{L} = \frac{1}{2}\sum_{t = 1}^{T} (\Mat{h}_{t}^\top \Mat{Q}_t \Mat{h}_{t} + \Mat{u}_{t}^\top \Mat{R}_t \Mat{u}_{t}) + \sum_{t = 1}^{T} \Mat{\lambda}_{t}^\top(\Mat{A}_t \Mat{h}_{t-1} + \Mat{B}_t \Mat{u}_{t} - \Mat{h}_{t}),
\end{align*}
where $\Mat{\lambda}_t \in \real^d$ are defined as Lagrangian multipliers, also referred to as \textit{co-states} in LQR.
A standard approach to solving the KKT system for $\{(\Mat{h}^*_t, \Mat{u}^*_t, \Mat{\lambda}^*_t)\}_{t=1}^T$ is to first compute the optimal control $\Mat{u}^*_1$ via Riccati iteration (Proposition \ref{prop:riccati}), and then alternatively compute the new state $\Mat{h}_{t}$ via Eq. \eqref{eqn:state} and the next optimal control via $\Mat{u}_{t+1} = \Mat{K}_{t+1} \Mat{h}_t$ where $\Mat{K}_{t+1}$ is as defined in Eq. \eqref{eqn:decode}.
Afterwards, another backward scanning is performed to obtain co-states through this relation: $\Mat{\lambda}_{t} = \Mat{A}_{t+1}^\top \Mat{\lambda}_{t+1} + \Mat{Q}_t \Mat{h}_{t}$, which is derived from the KKT condition $\frac{\partial \mathcal{L}}{\partial \Mat{h}_t} = \Mat{0}$.
See Appendix \ref{sec:kkt} for the derivation.

With these notions, we can represent gradients w.r.t. $\{(\Mat{A}_{t}, \Mat{B}_{t}, \Mat{Q}_{t}, \Mat{R}_{t})\}_{t=1}^{T}$ and $\Mat{h}_0$ using states, actions, and co-states via the approach introduced in \citet{amos2017optnet, amos2018differentiable}.
\begin{theorem}
\label{thm:diff}
Consider a TTC layer with parameters $\{(\Mat{A}_{t}, \Mat{B}_{t}, \Mat{Q}_{t}, \Mat{R}_{t})\}_{t=1}^{T})$ and initial state $\Mat{h}_{0}$, suppose we have output $\Mat{o} = \mathtt{TTC}(\Mat{h}_{0}, \{(\Mat{A}_{t}, \Mat{B}_{t}, \Mat{Q}_{t}, \Mat{R}_{t})\}_{t=1}^{T})$, the loss is computed as $\ell(\Mat{o})$, and we have obtained $\nabla_{\Mat{o}} \ell$.
Then the gradients w.r.t. $\{(\Mat{A}_t, \Mat{B}_t, \Mat{Q}_t, \Mat{R}_t)\}$ are:
\begin{align*}
\nabla_{\Mat{A}_t} \ell
&= \Mat{\lambda}_t^* \wt{\Mat{h}}_{t-1}^{*\top}
 + \wt{\Mat{\lambda}}_t^* \Mat{h}_{t-1}^{*\top},
&
\nabla_{\Mat{B}_t} \ell
&= \Mat{\lambda}_t^* \wt{\Mat{u}}_t^{*\top} + \wt{\Mat{\lambda}}_t^* \Mat{u}_t^{*\top},
\\
\nabla_{\Mat{Q}_t} \ell
&= \frac{1}{2}(\wt{\Mat{h}}_t^* \Mat{h}_t^{*\top} + \Mat{h}_t^* \wt{\Mat{h}}_t^{*\top}),
&
\nabla_{\Mat{R}_t} \ell
&= \frac{1}{2} (\wt{\Mat{u}}_t^* \Mat{u}_t^{*\top} + \Mat{u}_t^* \wt{\Mat{u}}_t^{*\top}),
\end{align*}
and gradient w.r.t. $\Mat{h}_0$ can be computed by $\nabla_{\Mat{h}_0} \ell = \wt{\Mat{\lambda}}^*_0$,
where $\{(\Mat{h}^*_t, \Mat{\lambda}^*_t, \Mat{u}^*_t)\}_{t=1}^{T}$ are the solution to the KKT system associated with Eq. \eqref{eqn:ttc} and $\{(\wt{\Mat{h}}^*_t, \wt{\Mat{\lambda}}^*_t, \wt{\Mat{u}}^*_t)\}_{t=1}^{T}$ are the solution to the KKT system corresponding to the following LQR system with $\wt{\Mat{h}}_0 = \Mat{0}$:
\begin{align}
\label{eqn:diff_ttc}
& \min \frac{1}{2}\sum_{t = 1}^{T} (\wt{\Mat{h}}_{t}^\top \Mat{Q}_t \wt{\Mat{h}}_{t} + \wt{\Mat{u}}_{t}^\top \Mat{R}_t \wt{\Mat{u}}_{t}) + \nabla_{\Mat{o}} \ell^\top \wt{\Mat{u}}_1 
&
\text{s.t.} \quad \wt{\Mat{h}}_{t} = \Mat{A}_t \wt{\Mat{h}}_{t-1} + \Mat{B}_t \wt{\Mat{u}}_{t}, \quad 1 \le t \le T.
\end{align}
\end{theorem}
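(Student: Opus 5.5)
The plan is to follow the OptNet-style implicit differentiation of \citet{amos2017optnet, amos2018differentiable}. Stack all primal and dual variables into $\Mat{z} = (\{\Mat{h}_t\}, \{\Mat{u}_t\}, \{\Mat{\lambda}_t\})$ and view the KKT conditions of the Lagrangian $\mathcal{L}$ as a linear system $\Mat{K}\Mat{z} = \Mat{b}$. Here $\Mat{K}$ collects the blocks $\Mat{Q}_t$, $\Mat{R}_t$, $\Mat{A}_t$, $\Mat{B}_t$, $-\Mat{I}$, and $\Mat{b}$ carries the term $-\Mat{A}_1\Mat{h}_0$ coming from the first constraint. Concretely, the stationarity conditions are
\[
\Mat{Q}_t\Mat{h}_t - \Mat{\lambda}_t + \Mat{A}_{t+1}^\top\Mat{\lambda}_{t+1} = \Mat{0}, \qquad \Mat{R}_t\Mat{u}_t + \Mat{B}_t^\top\Mat{\lambda}_t = \Mat{0},
\]
with $\Mat{\lambda}_{T+1} = \Mat{0}$, together with primal feasibility. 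The matrix $\Mat{K}$ is symmetric, and it is invertible whenever the reduced Hessian is positive definite; I will assume this, for example $\Mat{R}_t \succ 0$, which Proposition~\ref{prop:riccati} implicitly needs anyway. The output is $\Mat{o} = \Mat{u}_1^*$, a coordinate selection $\Mat{o} = \Mat{E}^\top \Mat{z}^*$.

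\textbf{Step 1 (adjoint system).} Differentiating $\Mat{K}\Mat{z}^* = \Mat{b}$ gives $\mathrm{d}\Mat{z}^* = \Mat{K}^{-1}(\mathrm{d}\Mat{b} - \mathrm{d}\Mat{K}\,\Mat{z}^*)$. Hence
\[
\mathrm{d}\ell = \wt{\Mat{z}}^{\top}(\mathrm{d}\Mat{K}\,\Mat{z}^* - \mathrm{d}\Mat{b}), \qquad \text{where } \Mat{K}\wt{\Mat{z}} = -\Mat{E}\nabla_{\Mat{o}}\ell,
\]
using the symmetry of $\Mat{K}$. The sign convention is chosen so that $\wt{\Mat{z}}$ matches the statement. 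The key observation is that $\Mat{K}\wt{\Mat{z}} = -\Mat{E}\nabla_{\Mat{o}}\ell$ is exactly the KKT system of Eq.~\eqref{eqn:diff_ttc}. It has the same matrix $\Mat{K}$ (same dynamics and costs), zero right-hand side in the constraint rows because $\wt{\Mat{h}}_0 = \Mat{0}$, and the linear term $\nabla_{\Mat{o}}\ell$ appearing only in the $\wt{\Mat{u}}_1$ stationarity row. So $\wt{\Mat{z}} = (\wt{\Mat{h}}^*_t, \wt{\Mat{u}}^*_t, \wt{\Mat{\lambda}}^*_t)$ is the solution of that auxiliary LQR, which can itself be computed via Proposition~\ref{prop:riccati} plus the co-state recursion.

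\textbf{Step 2 (reading off gradients).} Each parameter enters $\Mat{K}$ in symmetric pairs of blocks, so $\wt{\Mat{z}}^\top \mathrm{d}\Mat{K}\,\Mat{z}^*$ is a bilinear form that produces the symmetrized outer products. For $\Mat{A}_t$, the block appears in the constraint row (pairing $\wt{\Mat{\lambda}}_t$ with $\Mat{h}_{t-1}^*$) and in the $\Mat{h}_{t-1}$ stationarity row via $\Mat{A}_t^\top$ (pairing $\wt{\Mat{h}}_{t-1}$ with $\Mat{\lambda}_t^*$). This yields $\Mat{\lambda}_t^*\wt{\Mat{h}}_{t-1}^{*\top} + \wt{\Mat{\lambda}}_t^*\Mat{h}_{t-1}^{*\top}$, and the same computation gives the $\Mat{B}_t$ formula. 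For $\Mat{Q}_t$ and $\Mat{R}_t$ we get $\wt{\Mat{h}}_t^{*\top}\mathrm{d}\Mat{Q}_t\Mat{h}_t^*$. Projecting onto symmetric matrices, since $\Mat{Q}_t$ and $\Mat{R}_t$ are constrained symmetric, gives the factor $\tfrac12$. The $\Mat{h}_0$ gradient comes from $\mathrm{d}\Mat{b}$: the only dependence is $-\Mat{A}_1\mathrm{d}\Mat{h}_0$ in the first constraint row. This gives $\Mat{A}_1^\top\wt{\Mat{\lambda}}_1^*$ up to sign, which one identifies with $\wt{\Mat{\lambda}}_0^*$ by extending the co-state recursion $\wt{\Mat{\lambda}}_0 = \Mat{A}_1^\top\wt{\Mat{\lambda}}_1 + \Mat{Q}_0\wt{\Mat{h}}_0$ with $\Mat{Q}_0 = \Mat{0}$.

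\textbf{Main obstacle.} I expect the main difficulty to be bookkeeping of signs and conventions. The Lagrangian uses $+\Mat{\lambda}^\top(\Mat{A}\Mat{h}+\Mat{B}\Mat{u}-\Mat{h})$, so the sign of the adjoint right-hand side must be fixed so that both $\nabla_{\Mat{A}_t}\ell$ terms come out with $+$ signs and $\nabla_{\Mat{h}_0}\ell = +\wt{\Mat{\lambda}}_0^*$ rather than its negative. To settle this, I will write out the $T=1$ case explicitly, fix the convention there, and then verify that the general block structure reproduces it.
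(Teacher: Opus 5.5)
Your proposal is correct and follows essentially the same route as the paper: implicit differentiation of the symmetric KKT system, an adjoint solve identified as the KKT system of the dual LQR, and reading the gradients off the bilinear form $\wt{\Mat{z}}^\top(\mathrm{d}\Mat{K}\,\Mat{z}^* - \mathrm{d}\Mat{b})$. The only cosmetic difference is in how $\Mat{h}_0$ is handled. The paper keeps $\Mat{h}_0$ as an optimization variable with constraint $\Mat{h}_{init} - \Mat{h}_0 = \Mat{0}$, so $\wt{\Mat{\lambda}}_0$ appears directly as a multiplier and the $\Mat{A}_1$ case is uniform with the others. In your formulation $\Mat{A}_1$ and $\Mat{h}_0$ enter only through $\Mat{b}$. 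Your convention then gives $\nabla_{\Mat{h}_0}\ell = +\Mat{A}_1^\top\wt{\Mat{\lambda}}_1^* = \wt{\Mat{\lambda}}_0^*$ with the sign already positive, and $\nabla_{\Mat{A}_1}\ell = \wt{\Mat{\lambda}}_1^*\Mat{h}_0^\top$, which agrees with the stated formula because $\wt{\Mat{h}}_0 = \Mat{0}$.
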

We refer to the LQR solved in the forward pass as the \textit{primal LQR}, and the additional LQR solved during the backward pass as the \textit{dual LQR}.
Compared to the primal LQR, the dual LQR assumes zero initial state while introducing an additional affine term $\nabla_{\Mat{o}}\ell^\top \wt{\Mat{u}}_1$ in the cost function.
Despite these differences, solving the primal and dual LQRs shares the same underlying algorithm (see Appendix \ref{sec:proofs} for a unified derivation to account for the affine term).
The gradients with respect to the primal LQR parameters are then obtained by combining the KKT solutions from both the primal and dual LQRs, using Kronecker product.

\subsection{Hardware Co-Design for TTC}
\label{sec:parallel}

As seen in Sec. \ref{sec:ttc} and \ref{sec:differential}, one inference iteration of TTC requires solving one LQR while one training iteration requires solving two LQRs.
In practice, incorporating TTC layers into LLMs requires both training and inference to operate over thousands of tokens and long planning horizons.
This setting makes conventional solvers based on Riccati iteration (Proposition \ref{prop:riccati}) \citep{amos2018differentiable, hansen2023td} difficult to apply.
We observe that Riccati iteration is a simultaneously compute- and I/O-bound algorithm.
(1) Riccati iteration computes $\Mat{P}_{t}$ sequentially, requiring a matrix inversion at each time step. This leads to a computational complexity of $O(T d^3)$.
Such inversions are poorly supported by dedicated hardware accelerators and cannot be parallelized across the horizon due to inherent sequential dependencies.
(2) Each Riccati iteration also involves a sequence of matrix multiplications. These operations incur substantial memory traffic between GPU high-bandwidth memory (HBM) and on-chip SRAM, resulting in significant I/O overhead.
To address these challenges, we propose a new solver that improves parallelizability and hardware utility for solving LQR problems.

\paragraph{Hardware-Efficient LQR Solver.}
Our new algorithm solving LQR is shown by the following result:
\begin{theorem}[Symplectic Iteration]
\label{thm:symplectic_form}
Given an LQR problem with initial state $\Mat{h}_0$:
\begin{align*}
& \min \sum_{t = 1}^{T} \left[\frac{1}{2}(\Mat{h}_{t}^\top \Mat{Q}_t \Mat{h}_{t} + \Mat{u}_{t}^\top \Mat{R}_t \Mat{u}_{t}) + \Mat{r}_t^\top \Mat{u}_t\right] & \text{s.t.} \quad \Mat{h}_{t} = \Mat{A}_t \Mat{h}_{t-1} + \Mat{B}_t \Mat{u}_{t}, \quad 1 \le t \le T,
\end{align*}
the optimal first-step decision can be computed by $\Mat{u}^{*}_{1}
= -\Mat{R}_{1}^{-1} (\Mat{B}_{1}^{\top}
(\Mat{A}_{1}^{\top})^{-1}
\Mat{Y}_1^{-1} \Mat{Y}_2 (\Mat{h}_0 + \Mat{y}_3)
+ \Mat{r}_{1})$ such that:
\begin{align}
\label{eqn:symplectic_it}
\begin{bmatrix}
\Mat{Y}_1 & \Mat{Y}_2
\end{bmatrix}
&=
\begin{bmatrix}
\Mat{I} & \Mat{Q}_{T}
\end{bmatrix}
\prod_{t=1}^{T} \Mat{\Sigma}_{T-t+1}, &
\Mat{y}_3 &=
\begin{bmatrix}
\Mat{I} & \Mat{Q}_{T}
\end{bmatrix}
\sum_{t=1}^{T}
\left(\prod_{r=1}^{T-t}
\Mat{\Sigma}_{T-r+1}\right)
\begin{bmatrix}
\Mat{0} \\
-\Mat{B}_t \Mat{R}_t^{-1} \Mat{r}_t
\end{bmatrix}, \\
\Mat{\Sigma}_t &=
\begin{bmatrix}
(\Mat{A}_t^\top)^{-1}
&
(\Mat{A}_t^\top)^{-1} \Mat{Q}_{t-1}
\\
\Mat{G}_t (\Mat{A}_t^\top)^{-1}
&
\Mat{A}_t
+ \Mat{G}_t (\Mat{A}_t^\top)^{-1}
\Mat{Q}_{t-1}
\end{bmatrix},
&
\Mat{G}_t &= \Mat{B}_t \Mat{R}_t^{-1} \Mat{B}_t^{\top}.
\end{align}
\end{theorem}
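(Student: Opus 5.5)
The plan is to work directly with the KKT system of the affine LQR in the statement, as in Appendix~\ref{sec:kkt}, and to show that it becomes a linear two-point boundary value problem. Its one-step transfer matrix is exactly $\Mat{\Sigma}_t$. Throughout I assume what the formula implicitly needs: every $\Mat{A}_t$ is invertible and every $\Mat{R}_t \succ 0$.

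\textbf{Step 1: KKT conditions.} With Lagrangian $\mathcal{L}$ as in Sec.~\ref{sec:differential}, augmented by $\sum_t \Mat{r}_t^\top \Mat{u}_t$, the stationarity conditions are:
\begin{align*}
&\Mat{R}_t \Mat{u}_t + \Mat{r}_t + \Mat{B}_t^\top \Mat{\lambda}_t = \Mat{0}, \\
&\Mat{Q}_t \Mat{h}_t - \Mat{\lambda}_t + \Mat{A}_{t+1}^\top \Mat{\lambda}_{t+1} = \Mat{0} \quad (t<T), \\
&\Mat{Q}_T \Mat{h}_T = \Mat{\lambda}_T .
\end{align*}
Hence $\Mat{u}_t = -\Mat{R}_t^{-1}(\Mat{B}_t^\top \Mat{\lambda}_t + \Mat{r}_t)$. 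Substituting into the dynamics gives $\Mat{h}_t = \Mat{A}_t \Mat{h}_{t-1} - \Mat{G}_t \Mat{\lambda}_t - \Mat{B}_t \Mat{R}_t^{-1}\Mat{r}_t$.

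\textbf{Step 2: forward symplectic recursion.} I flip the sign of the co-state, $\Mat{p}_t := -\Mat{\lambda}_t$. Solving the co-state equation forward gives $\Mat{p}_{t} = (\Mat{A}_{t}^\top)^{-1}(\Mat{p}_{t-1} + \Mat{Q}_{t-1}\Mat{h}_{t-1})$. I also introduce a fictitious $\Mat{p}_0 := \Mat{A}_1^\top \Mat{p}_1$, which is consistent because $\Mat{Q}_0 = \Mat{0}$. Plugging this into the state equation yields, with $\Mat{z}_t := [\Mat{p}_t;\Mat{h}_t]$,
\begin{align*}
\Mat{z}_t = \Mat{\Sigma}_t \Mat{z}_{t-1} + \Mat{c}_t, \qquad \Mat{c}_t := \begin{bmatrix}\Mat{0}\\ -\Mat{B}_t\Mat{R}_t^{-1}\Mat{r}_t\end{bmatrix}, \qquad 1\le t\le T .
\end{align*}
Checking that the blocks agree with the stated $\Mat{\Sigma}_t$ is a direct substitution. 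Unrolling gives
\begin{align*}
\Mat{z}_T = \Mat{\Sigma}_T\cdots\Mat{\Sigma}_1 \Mat{z}_0 + \sum_{t=1}^T \Big(\textstyle\prod_{r=1}^{T-t}\Mat{\Sigma}_{T-r+1}\Big)\Mat{c}_t .
\end{align*}
The terminal condition reads $[\Mat{I}\;\Mat{Q}_T]\Mat{z}_T = \Mat{0}$. Left-multiplying by $[\Mat{I}\;\Mat{Q}_T]$ therefore gives one $d$-dimensional linear equation $\Mat{Y}_1\Mat{p}_0 + \Mat{Y}_2\Mat{h}_0 + \Mat{y}_3 = \Mat{0}$ for the unknown $\Mat{p}_0$.

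\textbf{Step 3: recover $\Mat{u}_1^*$.} Solving the Step~2 equation gives $\Mat{p}_0 = -\Mat{Y}_1^{-1}(\Mat{Y}_2\Mat{h}_0 + \Mat{y}_3)$, and then $\Mat{\lambda}_1 = -(\Mat{A}_1^\top)^{-1}\Mat{p}_0$. Substituting into $\Mat{u}_1 = -\Mat{R}_1^{-1}(\Mat{B}_1^\top\Mat{\lambda}_1 + \Mat{r}_1)$ gives
\begin{align*}
\Mat{u}_1^* = -\Mat{R}_1^{-1}\big(\Mat{B}_1^\top(\Mat{A}_1^\top)^{-1}\Mat{Y}_1^{-1}(\Mat{Y}_2\Mat{h}_0+\Mat{y}_3) + \Mat{r}_1\big).
\end{align*}
This has the form of the statement, with the affine offset entering as $\Mat{Y}_2\Mat{h}_0 + \Mat{y}_3$. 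The stated grouping $\Mat{Y}_2(\Mat{h}_0+\Mat{y}_3)$ corresponds to writing this offset in $\Mat{h}_0$-coordinates, i.e.\ replacing $\Mat{y}_3$ by $\Mat{Y}_2^{-1}\Mat{y}_3$. I would state this bookkeeping explicitly, and check it against the $\Mat{r}_t = \Mat{0}$ case, where both forms agree with the Riccati solution of Proposition~\ref{prop:riccati}.

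\textbf{Main obstacle: invertibility of $\Mat{Y}_1$.} Everything else is routine substitution; the real work is justifying that $\Mat{Y}_1$ is invertible. I would argue by uniqueness. Since $\Mat{R}_t\succ 0$ and $\Mat{Q}_t\succeq 0$, the problem is strictly convex in $(\Mat{u}_t)$ with affine constraints, so its KKT system has a unique solution.

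Suppose $\Mat{Y}_1\Mat{v} = \Mat{0}$. Take $\Mat{h}_0=\Mat{0}$ and $\Mat{r}_t = \Mat{0}$, and propagate $\Mat{z}_0 = [\Mat{v};\Mat{0}]$ through the recursion of Step~2. The resulting trajectory satisfies all KKT equations, including the terminal one. By uniqueness it must coincide with the trivial solution, so $\Mat{p}_1 = \Mat{0}$. Hence $\Mat{v} = \Mat{A}_1^\top\Mat{p}_1 = \Mat{0}$, and $\Mat{Y}_1$ is injective.

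The same uniqueness argument also shows that the $\Mat{z}_t$ produced by the recursion is the KKT solution, which confirms that the formula computes the true optimum.
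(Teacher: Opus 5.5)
Your proof is correct and reaches the result more directly than the paper does. The paper works in $[\Mat{h}_t;\Mat{\lambda}_t]$ coordinates with the transfer matrix $\Mat{S}_t$ of Theorem~\ref{thm:fwd_symplectic}. It unrolls the shooting system with $\Mat{\Psi}=\Mat{S}_T\cdots\Mat{S}_1$ and solves $\Mat{Q}_T\Mat{h}_T=\Mat{\lambda}_T$ for $\Mat{\lambda}_0$ in terms of the blocks of $\Mat{\Psi}$. Only then does it invoke Lemma~\ref{lem:symplectic} to rewrite those blocks as $[\Mat{I}\;\Mat{Q}_T](\Mat{\Psi}^{-1})^\top$. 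So $\Mat{\Sigma}_t$ enters as $(\Mat{S}_t^{-1})^\top$, and $\Mat{y}_3$ requires pushing $\Mat{J}^\top$ through the products.

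Your change of variables $[\Mat{p}_t;\Mat{h}_t]=[-\Mat{\lambda}_t;\Mat{h}_t]=-\Mat{J}[\Mat{h}_t;\Mat{\lambda}_t]$ is conjugation by $\Mat{J}$. The identity $\Mat{J}\Mat{S}_t\Mat{J}^{-1}=(\Mat{S}_t^{-1})^\top=\Mat{\Sigma}_t$ is exactly the symplectic property. Because you verify the blocks by substitution, you skip both the lemma and the block-inverse step. The row $[\Mat{I}\;\Mat{Q}_T]$ and the offsets $[\Mat{0};-\Mat{B}_t\Mat{R}_t^{-1}\Mat{r}_t]$ then appear without further work. (The paper's intermediate formula for $\Mat{\lambda}_0$ even has a sign slip on the $\Mat{\psi}$ term, though its final expression is right.)

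In your coordinates the forward rollout uses the same $\Mat{\Sigma}_t$ as the reverse product. The paper's route instead foregrounds the adjoint pairing of $\Mat{S}_t$ and $\Mat{\Sigma}_t$, which it uses to present its two iterations. Your uniqueness argument for invertibility of $\Mat{Y}_1$ (given $\Mat{Q}_t\succeq\Mat{0}$, $\Mat{R}_t\succ\Mat{0}$, and $\Mat{A}_t$ invertible) is a genuine addition; the paper inverts $\Mat{Q}_T\Mat{\Psi}_{12}-\Mat{\Psi}_{22}$ without comment.

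One thing to change: do not try to reconcile your result with the grouping $\Mat{Y}_2(\Mat{h}_0+\Mat{y}_3)$ as typeset in the statement. With $\Mat{y}_3$ defined as there, that grouping is a typo and is false in general. For $d=T=1$ and $\Mat{h}_0=\Mat{0}$ one has $\Mat{Y}_2=\Mat{Q}_1\Mat{A}_1$, so whenever $\Mat{y}_3\neq\Mat{0}$ the literal formula scales the correct $\Mat{\lambda}_0$ by $\Mat{Q}_1\Mat{A}_1$.

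Your proposed fix of replacing $\Mat{y}_3$ by $\Mat{Y}_2^{-1}\Mat{y}_3$ is not available in general, because $\Mat{Y}_2$ can be singular; it vanishes when all $\Mat{Q}_t=\Mat{0}$. Your suggested check at $\Mat{r}_t=\Mat{0}$ also cannot distinguish the two groupings, since $\Mat{y}_3=\Mat{0}$ there.

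The formula you derived, $\Mat{Y}_1^{-1}(\Mat{Y}_2\Mat{h}_0+\Mat{y}_3)$, is the correct one. It is what the paper's appendix version (Theorem~\ref{thm:rev_symplectic}) and its backward algorithm actually use. The distinction matters precisely for the dual LQR, where $\Mat{h}_0=\Mat{0}$ and only the affine term drives the solution.
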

Theorem \ref{thm:symplectic_form} yields a solver for both the primal and dual LQRs.
Note that $\Mat{\Sigma}_t$ is a symplectic matrix, and computing the first-step action $\Mat{u}_1^*$ requires a reverse iterative matrix product from $t=T$ to $t=1$ (see Eq. \eqref{eqn:symplectic_it}).
Such an iterative matrix product is called \textit{(reverse) symplectic iteration}.
A key observation from Theorem \ref{thm:symplectic_form} is that it replaces the sequential Riccati recursion dominated by dense matrix inversions with a cumulative matrix product over $\Mat{\Sigma}_t$, utilizing the symplectic structure of $\Mat{\Sigma}_t$.
Importantly, the matrix inverses appearing within each $\Mat{\Sigma}_t$ are independent across time steps and can therefore be computed fully in parallel.
Outside of this cumulative product, only a single dense matrix inversion is required to recover the optimal first-step decision $\Mat{u}_1^*$.
By Theorem \ref{thm:symplectic_form}, all remaining sequential computation is confined to matrix multiplication, which enables effective utilization of dedicated hardware accelerators (e.g., Tensor Cores) for high-throughput dense linear algebra.
The algorithm implied from Theorem \ref{thm:symplectic_form} to compute $\Mat{u}_1^{*}$ is listed in Algorithm \ref{alg:symplectic_it}.

As seen in Theorem \ref{thm:diff}, backpropagation through TTC layers requires access to the full trajectories of both the primal and dual LQRs.
As we will show in Theorem \ref{thm:fwd_symplectic}, Appendix \ref{sec:proofs}, there exists a \textit{forward symplectic iteration } involving the symplectic adjoint of $\Mat{\Sigma}_t$ in Eq. \eqref{eqn:symplectic_it} that computes each $(\Mat{h}_t, \Mat{u}_t, \Mat{\lambda}_t)$ via an iterative matrix-vector product from $t=1$ to $t=T$.
This result provides a unified implementation for computing $\Mat{u}_{1}^*$ and performing rollouts.
See Algorithm \ref{alg:symplectic_it} for details.

\paragraph{Structured Parameterization.}
In fact, we can further specialize to a structured family of LQRs in which $\{\Mat{A}_t\}_{t=1}^T$ and $\{\Mat{R}_t\}_{t=1}^T$ are defined to be diagonal matrices.
This structure makes all inversion operations on $\Mat{A}_t$ and $\Mat{R}_t$ significantly more efficient, reducing the number of nontrivial dense matrix inversions from $O(T)$ to $O(1)$. 
We will adopt this efficient parameterization in our implementation of the TTC layer.
As demonstrated in prior work \citep{gupta2022diagonal, gu2022parameterization, gu2023mamba}, diagonalized SSMs can be as expressive as dense SSMs.
In contrast, diagonalization of $\Mat{A}_t$ cannot simplify the computation for Riccati iteration as $\Mat{P}_t$ remains a dense matrix in Eq. \eqref{eqn:riccati}.

\paragraph{Kernel Fusion.}
To further improve the throughput of solving numerous LQRs, our implementation fuses the symplectic iterations (Theorem \ref{thm:symplectic_form}) into CUDA kernels, for minimizing memory traffic.
The key observation is that our symplectic iteration based solver operates primarily through basic tensor operations, which are amenable to efficient kernel fusion.
We summarize a few techniques to reduce on-chip memory usage and guarantee numerical stability below, while leaving more technical details to Appendix \ref{sec:more_codesign}.
By exploiting factorizations of the symplectic matrices $\Mat{\Sigma}_t$, we avoid explicitly materializing $\Mat{\Sigma}_t$ and instead stream only the required factors involving $\{(\Mat{A}_t, \Mat{B}_t, \Mat{Q}_t, \Mat{R}_t)\}_{t=1}^{T}$ into on-chip SRAM, where cumulative products are computed block-wise.
Furthermore, Eq. \eqref{eqn:symplectic_it} enables a row-wise partitioning of $\begin{bmatrix}\Mat{Y}_1 & \Mat{Y}_2\end{bmatrix}$, allowing each GPU block to operate independently on a distinct row block.
The sequential matrix product in Eq. \eqref{eqn:symplectic_it} is fused into a single kernel, while outer dense solves are delegated to highly optimized cuBLAS routines.
To ensure numerical stability under long-horizon symplectic products, we apply row-wise normalization that preserves the value of $\Mat{Y}_1^{-1} \Mat{Y}_2$ but prevents overflow.
This preconditioning can be performed independently across row blocks of $\begin{bmatrix}\Mat{Y}_1 & \Mat{Y}_2\end{bmatrix}$, compatible with our partition strategy.
The TTC layer supports mixed-precision inputs, but retains critical accumulations and factorizations in \texttt{float32} for numerical stability.
See Algorithm \ref{alg:symplectic_it_fwd_cuda} and \ref{alg:symplectic_it_bwd_cuda} for implementation details. 

\paragraph{Caching for Backward.}
Taking a closer look at the dual LQR in Theorem \ref{thm:diff}, we observe that the coefficients associated with the affine terms are nonzero if and only if $t = 1$.
Therefore, compared with the primal LQR, the symplectic iteration described in Theorem \ref{thm:symplectic_form} differs only at the final step.
This mathematical similarity allows several intermediate quantities computed during the forward pass to be reused in the backward pass:
(1) The matrix $\Mat{Y}_1$, required in both the primal and dual LQRs, is identical in the two cases.
Hence, it can be cached after the forward symplectic iteration.
In practice, we store its LU decomposition, since forward and backward computations only require applying $\Mat{Y}_1^{-1}$ to $\Mat{Y}_2$ and $\Mat{y}_3$, respectively.
(2) Note that $\Mat{y}_3 = \begin{bmatrix}\Mat{I} & \Mat{Q}_{T}\end{bmatrix}\left(\prod_{t=1}^{T-1} \Mat{\Sigma}_{T-t+1}\right) \begin{bmatrix}\Mat{0} & -\Mat{B}_{t} \Mat{R}_t^{-1} \nabla_{\Mat{u}_{out}} \ell\end{bmatrix}^\top$.
The right block of $\begin{bmatrix}\Mat{I} & \Mat{Q}_{T}\end{bmatrix}\left(\prod_{t=1}^{T-1} \Mat{\Sigma}_{T-t+1}\right)$ is already computed during the forward pass.
This block can be cached and directly reused in the backward computation.
By exploiting these caching strategies, we eliminate the need for an additional symplectic iteration during backpropagation.
For gradient computation, we fuse two forward symplectic iterations in one GPU kernel for computing and combining trajectories $\{(\Mat{h}_t, \Mat{u}_t, \Mat{\lambda}_t)\}$ and $\{(\wt{\Mat{h}}_t, \wt{\Mat{u}}_t, \wt{\Mat{\lambda}}_t)\}$ on chip without instantiating them on HBM.
See Appendix \ref{sec:more_codesign} for more details.

\begin{figure*}[t]
  \centering
  \includegraphics[width=\textwidth]{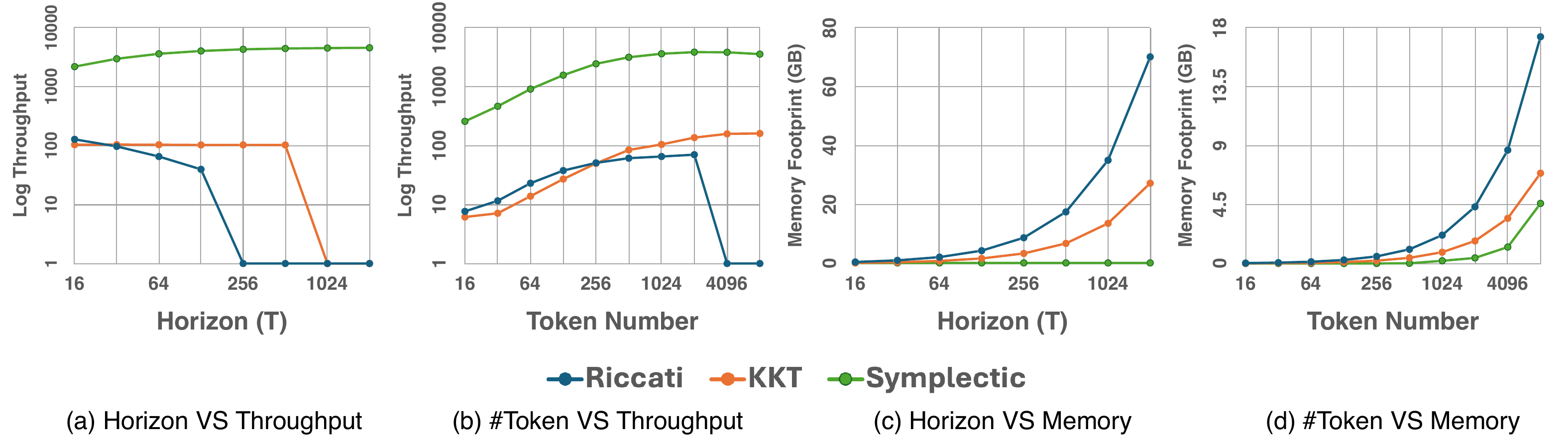}
  \caption{\textbf{Benchmarking running speed and memory of different LQR solvers}. Throughput is reported in TFLOPs/s on a logarithmic scale. Among all evaluated solvers, our method achieves over a {\bf 10x} higher throughput while maintaining constant memory cost w.r.t. horizon. Zero throughput indicates an out-of-memory error during execution.}
  \vspace{-1.5em}
  \label{fig:speed}
\end{figure*}

\paragraph{Empirical Validation.}
We benchmark the training throughput and memory footprint of our fused symplectic iteration solver and compare it against two classic baselines:
(1) a native Riccati iteration solver with gradients computed via automatic differentiation; and
(2) a KKT-based solver \citep{amos2018differentiable} replacing gradient computation with a manner similar to Theorem \ref{thm:diff}.
As shown in Fig. \ref{fig:speed}, the symplectic iteration solver achieves over a $10\times$ higher throughput than both the Riccati- and KKT-based solvers.
Meanwhile, the two classic solvers are bottlenecked by memory overhead, whereas our solver scales much more efficiently with the planning horizon and the number of tokens to be processed.

\section{\name: A Hybrid Model with TTC Layer}
\label{sec:ttcnet}

In this section, we introduce the key techniques that turn TTC layers into a language model, as well as training strategies and test-time scaling techniques.

\vspace*{-1em}
\paragraph{Contextualization.}
Learning a single set of LQR parameters $\{(\Mat{A}_t, \Mat{B}_t, \Mat{Q}_t, \Mat{R}_t)\}_{t=1}^T$ shared across all test samples enforces a fixed decision pattern, which limits adaptability to specific contexts and disables flexible adjustment of the planning horizon.
Meanwhile, constraining $\{(\Mat{A}_t, \Mat{B}_t, \Mat{Q}_t, \Mat{R}_t)\}_{t=1}^T$ to be uniform across the horizon further restricts expressivity, hindering the model's ability to capture temporal variations in the environment dynamics and cost functions (including discounting effects) \citep{nhu2025time}.
Combining these observations, we propose to condition the TTC parameters on both the initial state $\Mat{h}_0$, which encodes the context, and the time step $t$.
Given an initial state $h_0$, we first generate a group of time modulation coefficients $\Mat{\Gamma}_{\Box} = \exp(-\diag(s_{\Gamma_\Box}(\Mat{h}_0)))$ where $\Box \in \{A, B, Q\}$.
Then we synthesize LQR parameters by modulating power series of $\Mat{\Gamma}_{\Box}$:
\begin{align*}
\Mat{A}_t &= \Mat{I} + \Mat{\Gamma}_{A}^t \diag(s_{A}(\Mat{h}_0)), &
\Mat{B}_t &= \sum_{i=1}^{r} s_{B}(\Mat{h}_0)_i \Mat{B}^{(i)} \Mat{\Gamma}_{B}^t, \\
\Mat{Q}_t &= 
\Mat{\Gamma}_{Q}^t \left(\sum_{i=1}^{r}s_{Q}(\Mat{h}_0)_i\Mat{Q}^{(i)}\right) \Mat{\Gamma}_{Q}^t, & \Mat{R}_t &= \diag(s_R(\Mat{h}_0)),
\end{align*}
where we single out the terminal cost parameter $\Mat{Q}_T$ and override it with $\Mat{Q}_T = \sum_{i=1}^{r}s_{Q_f}(\Mat{h}_0)_i\Mat{Q}^{(i)}$.
$s_{\Box}$ denotes a linear layer with task-specific activations for every $\Box \in \{A, B, Q, R, Q_f, \Gamma_A, \Gamma_B, \Gamma_Q \}$.
In particular, $s_{\Gamma_{\Box}}$, $s_{R}$, and $s_{Q}$ employ a softplus activation, necessary for $\Mat{Q}_t$ and $\Mat{R}_t$'s positive semi-definiteness.
$\Mat{A}_t$ is parameterized with an added identity term to ensure invertibility in $\Mat{\Sigma}_t$.
$\{\Mat{Q}^{(i)}\}_{i=1}^r$ and $\{\Mat{B}^{(i)}\}_{i=1}^r$ form two groups of basis matrices, which are linearly combined using context-informed coefficients.
Every $\Mat{Q}^{(i)} = \Mat{Q}_c^{(i)}\Mat{Q}_c^{(i)\top} / \sqrt{d}$ adopts a Cholesky reparameterization with unconstrained $\{\Mat{Q}_c^{(i)}\}_{i=1}^{r}$.
All $\Mat{\Gamma}_{\Box}$ take values in $(0,1)$, thereby discounting the effect of the state unrolled over the horizon.
In practice, time-dependent parameters are not instantiated in HBM; instead they are computed on the fly within the kernel.

\begin{wrapfigure}{r}{0.36\linewidth}
\centering
\vspace{-2em}
\includegraphics[width=0.9\linewidth]{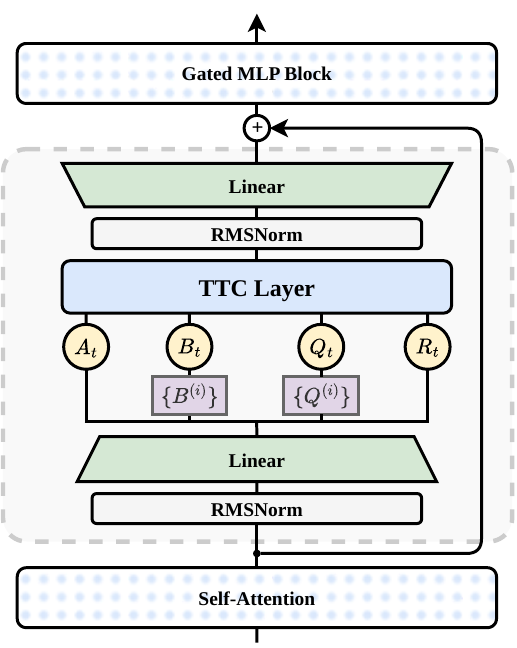}
\vspace{-1em}
\caption{\textbf{Overview of TTC-Net.} We construct a hybrid model by inserting a TTC layer between attention and MLP.}
\label{fig:ttc_net}
\vspace{-2em}
\end{wrapfigure}
\paragraph{Hybrid with Memory Layers.}
The TTC layer operates by taking an expressive memory state as input.
Consequently, TTC layers need to be interleaved with memory-based modules and form a hybrid architecture for language modeling.
In practice, we insert a TTC layer between attention and MLP every 8 transformer blocks.
We refer to the resulting architecture as \textit{TTC-Net}.
In TTC-Net, a TTC layer is applied independently to each token.
TTC-Net employs a linear mapping to project context-rich token features produced by attention into the initial state of the TTC layer.
The output of the TTC layer is then normalized and passed through another linear mapping, after which it is added back to the residual stream of the LLM backbone.
Please refer to Fig. \ref{fig:ttc_net} for a quick overview.
The entire block can be expressed as:
\begin{align*}
\Mat{h}_{0} &= \Mat{W}_{in} \mathtt{LN}(\Mat{x}_{in}), &
&\{(\Mat{A}_t, \Mat{B}_t, \Mat{Q}_t, \Mat{R}_t)\}_{t=1}^T = \mathtt{Cxt}(\Mat{h}_{0}, T), \\
\Mat{o} &= \mathtt{TTC}\left(\Mat{h}_{0}, \{(\Mat{A}_t, \Mat{B}_t, \Mat{Q}_t, \Mat{R}_t)\}_{t=1}^T\right), &
&\Mat{x}_{out} = \Mat{x}_{in} + \Mat{W}_{out}\mathtt{LN}(\Mat{o}),
\end{align*}
where $\Mat{x}_{in}$ is a token feature output by the preceding attention layer, $\Mat{x}_{out}$ represents the final output, $\mathtt{LN}$ denotes a layer normalization, $\mathtt{Cxt}(\cdot)$ is the contextualization mechanism introduced above, and $\Mat{W}_{in}, \Mat{W}_{out}$ are input and output projection matrices.
TTC-Net can be trained either from scratch or by continuing training from a pretrained model.
When fine-tuning from a pretrained model, we initialize the output projection $\Mat{W}_{out} = \Mat{0}$ so that the initial model remains identical to the original backbone.

\paragraph{Multi-Head Structure.}
Like many modern architectures, the TTC layer adopts a multi-head structure.
The input state is partitioned into small blocks (e.g., of size 16), each of which generates a distinct set of TTC parameters for a corresponding head.
For memory efficiency, we share the basis matrices $\{\Mat{Q}^{(i)}\}_{i=1}^r$ and $\{\Mat{B}^{(i)}\}_{i=1}^{r}$ as well as $s_Q$ and $s_B$ across all heads.
The head-wise outputs are then concatenated and combined with a linear layer.

\paragraph{Parameterization.}
We parameterize the time-modulation coefficients $\Mat{\Gamma}_{\Box}$, where $\Box \in \{A, B, Q\}$, in the logarithmic domain. This improves numerical stability and efficiency when evaluating powers of the form $\Mat{\Gamma}_{\Box}^t$.
Specifically, we use a linear layer followed by a softplus activation, denoted $s_{\Gamma_{\Box}}$, to generate the log-scale time-modulation coefficients. In the kernel, these are converted into time-dependent coefficients via $\exp(-t \cdot s_{\Gamma_{\Box}}(\Mat{h}_0))$.
Furthermore, we observe that $\Mat{R}_t$ appears only in its inverse form $\Mat{R}_t^{-1}$. Therefore, rather than computing the matrix inverse at each time step, we directly let $s_R$ output the diagonal of $\Mat{R}_t^{-1}$ and use it as input to the LQR solver.
During the backward, we directly compute gradients in terms of log-scale time-modulation coefficients and the inverse of $\Mat{R}_t$.

\paragraph{Training Strategy.}
The time modulation strategy described above enables the TTC layer to support an arbitrary planning horizon $T$ as input.
However, training the TTC layer with a fixed horizon can induce distribution shift in downstream layers when a different horizon is used at test time.
To mitigate this issue, we adopt a mixed-horizon training strategy.
At each training iteration, we sample a random planning horizon from a truncated Poisson log-normal (PLN) distribution \citep{geiping2025scaling}: $\tau \sim \gauss\left(\log(T_\mu) - \frac{1}{2} T_{\sigma}^2, T_{\sigma}^2\right)$, $T_{train} \sim \mathrm{Poisson}(\exp(\tau)) + 1$, where $T_\mu$ denotes the mean of $T$, and $T_{\sigma}$ controls the standard deviation.
To ensure $T_{train} $ is bounded, we use rejection sampling: we repeatedly sample $T_{train} $ until the drawn $T_{train}$ falls within the range.
In our experiments, we set the mean to $T_{\mu} = 8$, the log-scale standard deviation to $T_{\sigma} = 0.1$, and the maximum horizon to 32.

\paragraph{Test-Time Scaling.}
At test time, the planning horizon $T_{test}$ can be adjusted arbitrarily.
Increasing $T_{test}$ causes the symplectic iteration to run longer and consume additional FLOPs; however, a longer horizon also enables the model to explore trajectories more deeply and emphasize long-term objectives, often leading to more accurate solutions.
Consequently, the TTC layer can adaptively allocate additional computation to reason longer and plan over an extended horizon.
This exposes a new test-time scaling \citep{snell2024scaling} axis for reasoning that is native to, and intrinsically supported by, the architecture.
We use experiments in Fig. \ref{fig:tt_scaling} to demonstrate the test-time scaling potential of TTC-Net.  

%% file: tex/04_expr.tex
\section{Experiments}

\subsection{Sudoku Solving}
Sudoku is a classical logical puzzle that requires long-horizon reasoning, constraint propagation, and multi-step planning to reach a valid solution, making it a canonical benchmark for evaluating structured reasoning and planning capabilities of deep learning models \citep{palm2018recurrent, du2019gradient, wang2025hierarchical, long2023large}.
We leverage Sudoku as a controlled synthetic task on long-horizon reasoning to evaluate the effectiveness of TTC-Net.

\paragraph{Data.}
We adopt the 10k $9 \times 9$ boards from a challenging Sudoku dataset introduced by \citet{palm2018recurrent}, where each board contains between 17 and 34 given digits.
We reserve 1k boards for evaluation and use the remaining 9k boards for training.
Each Sudoku board is tokenized as a sequence, with each cell represented by a token from the vocabulary $\{[\texttt{mask}], 1, \ldots, 9\}$.
These cell tokens are mapped to token embeddings, and the resulting sequence is passed into a sequential processing backbone.

\begin{table*}
\centering
\caption{Accuracy (\%) on Sudoku reasoning. Best method (ours) is marked in \textbf{bold} while the runner-up is \underline{underlined}. }
\label{tab:sudoku}
\begin{tabular}{lcccc}
\toprule
\multirow{2}{*}{Methods} & \multicolumn{2}{c}{Single-Step} & \multicolumn{2}{c}{Multi-Step} \\
\cmidrule(lr){2-3} \cmidrule(lr){4-5}
 & Board Acc. & Cell Acc. & Board Acc. & Cell Acc. \\
\midrule
Transformer & \underline{58.50} & 86.54 & 90.10 & 94.08\\
\midrule
Mamba & 54.60 & 85.50 & 88.60 & 91.29 \\
Mamba2 & 55.50 & 85.10 & 87.20 & 90.52 \\
GDN & 57.30 & 87.19 & 89.80 & 93.70  \\
Samba & 57.20 & \underline{87.99} & \underline{90.40} & \underline{94.61}  \\
\midrule
\name (Ours) & \textbf{61.30} & \textbf{90.17} & \textbf{93.40} & \textbf{97.33} \\
\bottomrule
\end{tabular}
\end{table*}

\paragraph{Baselines.}
We compare TTC-Net against a range of popular neural architectures for sequential modeling:
(1) a Transformer with pure attention blocks \citep{vaswani2017attention}, (2) Mamba and Mamba2 \citep{gu2023mamba, dao2024transformers}, (3) GDN \citep{yang2024gated}, and (4) Samba \citep{ren2024samba}.
We adopt positional embeddings and full self-attention for transformer blocks.
We include GDN as it has been shown to excel at state tracking \citep{grazz2024unlocking, peng2025rwkv}.
Samba represents a hybrid architecture that combines the Mamba layer with sliding-window attention.
All SSM, linear attention, and sliding-window attention layers scan the sequence twice bidirectionally to propagate information across the Sudoku grid.
Despite their architectural differences, all of these baselines fall under the category of memory-based approaches.
All models consist of 32 layers in total.
A classifier is attached to the backbone to decode the representations into digit predictions.

\paragraph{Training and Evaluation.}
During training, we apply a cross-entropy loss over all masked cells.
In addition, we decode the output of each block through the classifier and apply the same loss to supervise intermediate representations, encouraging the model to make progressive corrections at every layer, following \citet{yang2023learning}.
We list all training configurations in Appendix \ref{sec:expr_details}.
We evaluate the models using two approaches:
(1) performing a single forward pass and measuring accuracy on the masked cells; and
(2) iteratively completing one cell at a time by selecting the most confident prediction at each step, repeating this process until all cells are filled, akin to CoT \citep{wei2022chain} (see Algorithm \ref{alg:multistep_sudoku}).
For both approaches, we report both cell- and board-level accuracy.
Throughout training and testing, we fix $T_{train}=T_{test}=4$ for TTC-Net.

\paragraph{Results.}
Tab. \ref{tab:sudoku} reports the performance of all evaluated architectures.
TTC-Net outperforms all baselines by a clear margin in accuracy.
In particular, TTC-Net surpasses the strongest runner-up baseline, i.e., Transformer, by 2.8\%, in board-level accuracy on single-step completion.
Moreover, TTC-Net demonstrates highly coherent multi-step reasoning, as evidenced by its superior accuracy on multi-step Sudoku solving.
In fact, this performance gain is expected, as TTC-Net is explicitly designed to internalize a long-horizon reasoning mechanism that is well-suited for solving Sudoku.
Sudoku can be viewed as a constraint satisfaction problem \citep{wang2019satnet, yang2023learning}.
The TTC objective (Eq. \eqref{eqn:ttc}) provides an effective approximation by casting Sudoku solving as a sequential decision-making process, where placing a digit on the board corresponds to a linear state transition, and the quadratic cost function serves as a smooth surrogate for constraint satisfaction barriers.

\subsection{Math Reasoning}
\label{sec:math_expr}

\begin{table*}[t]
\centering
\caption{{\bf Accuracy (\%) on math reasoning datasets} compared with baseline methods, including test-time memorization approaches. We mark the best performer in \textbf{bold} and the runner-up with \underline{underline}. Our TTC-Net achieves the highest level of accuracy consistently.}
\label{tab:math}
\resizebox{\linewidth}{!}{
\begin{tabular}{lccccccc}
\toprule
\multirow{2}{*}{Models} & \multirow{2}{*}{MATH-500}
& \multicolumn{2}{c}{AMC} 
& \multicolumn{2}{c}{AIME24} 
& \multicolumn{2}{c}{AIME25} \\
\cmidrule(lr){3-4} \cmidrule(lr){5-6} \cmidrule(lr){7-8}
 &  & Acc@8 & Pass@8 & Acc@8 & Pass@8 & Acc@8 & Pass@8 \\
\midrule
Base model & 25.00 & 6.63 & 31.32 & 0.00 & 0.00 & 0.00 & 0.00 \\
\midrule
Full Finetuning & 46.80 & \underline{20.78} & \underline{46.98} & 1.67 & 6.67 & 0.00 & 0.00 \\
+ MLP \citep{houlsby2019parameter} & 40.80 & 18.67 & 33.73 & 0.00 & 0.00 & 0.00 & 0.00 \\
\midrule
+ Attention~\citep{vaswani2017attention} & 47.00 & 20.48 & 44.58 & 0.42 & 3.33 & 1.25 & \underline{6.67} \\
\midrule
+ RetNet~\citep{sun2023retentive} & 42.60 & 19.58 & 39.76 & \underline{2.50} & \underline{13.33} & 0.00 & 0.00 \\
+ Mamba~\citep{gu2023mamba} & 44.80 & 22.29 & 44.58 & 0.83 & 3.33 & \underline{1.67} & 3.33 \\
+ GDN~\citep{yang2024gated} & \underline{47.80} & 17.77 & 37.35 & 0.42 & 3.33 & 0.83 & \underline{6.67} \\
+ MesaNet~\citep{von2025mesanet} & 47.40 & 12.65 & 27.71 & 1.25 & 10.00 & 0.00 & 0.00 \\
\midrule
\name & \textbf{52.80} & \textbf{23.34} & \textbf{54.22} & \textbf{3.33} & \textbf{20.00} & \textbf{5.00} & \textbf{20.00} \\
\bottomrule
\end{tabular}
}
\vspace{-1em}
\end{table*}

\paragraph{Settings.}
In this section, we evaluate the effectiveness of TTC-Net on mathematical reasoning tasks for LLM.
Rather than training models from scratch, we adopt a continual learning setting in which we perform supervised fine-tuning (SFT) on pretrained models augmented with additional architectural modules.
From this perspective, SFT can be viewed as a form of imitation learning, and training the TTC layers corresponds to an instance of inverse RL, where TTC layers learn to infer latent state transition and objectives that facilitate cloning the expert behavior.
We use \texttt{Llama-3-Instruct-7B} as the base model and initialize TTC-Net from the base model with zero initialization applied to the output projection (see Sec. \ref{sec:ttcnet}).
To enable a fair comparison, we construct hybrid architectures by inserting an additional memory layer with a comparable number of parameters between the attention and MLP every 8 transformer blocks.
All newly introduced layers are trained jointly with the rest of the model.
The new layers play a role similar to adapter modules that allocate additional learning capacity for acquiring new reasoning abilities.
The memory mechanisms considered in this study include attention \citep{vaswani2017attention}, RetNet \citep{sun2023retentive}, Mamba \citep{gu2023mamba}, GDN \citep{yang2024gated}, and MesaNet \citep{von2025mesanet}.

\paragraph{Benchmark Evaluation.}
We fine-tune all baselines for one epoch on the \texttt{OpenThoughts2-114K} dataset \citep{guha2506openthoughts}, along with an additional 800K self-collected and curated reasoning examples.
See Appendix \ref{sec:expr_details} for more information about the curated dataset and the training hyperparameters.
To disentangle improvements due to architectural modifications from those arising solely from fine-tuning, we additionally report the performance of supervised fine-tuning (SFT) applied only to the base model weights, without introducing new modules.
We evaluate reasoning performance on a suite of challenging benchmarks, including Math-500 \citep{hendrycks2021measuring}, AMC \citep{numina_math_datasets}, AIME 2024, and AIME 2025.
We adopt $T_{test} = 8, 16, 16$ at inference time for Math-500, AMC, and AIME, respectively.
For AMC and AIME, we sample 8 responses per prompt using temperature sampling and report both Avg@8 and Pass@8 accuracy following \citet{zuo2025ttrl}.
The results are summarized in Tab. \ref{tab:math}.
Across all benchmarks, TTC-Net consistently outperforms all other fine-tuned hybrid architectures.
Notably, the base model achieves zero accuracy on the challenging AIME 2024 and AIME 2025 datasets, whereas TTC-Net exhibits clear performance emergence, highlighting its ability to unlock complex reasoning capabilities.
In contrast, hybrid models augmented with additional memory layers fail to stably surpass SFT applied to the base model alone.
Notably, TTC-Net achieves large gains in Pass@8 accuracy, suggesting that the TTC layer extends the effective reasoning boundary of the base model, in contrast to prior observations that post-training alone cannot overcome the intrinsic capability ceiling of the backbone \citep{yue2504does}.
These results indicate that the control-based objective underlying the TTC layer provides a critical inductive mechanism for complex reasoning in LLMs.

\begin{wrapfigure}{l}{0.6\linewidth}
    \vspace{-2em}
    \centering
    \includegraphics[width=\linewidth]{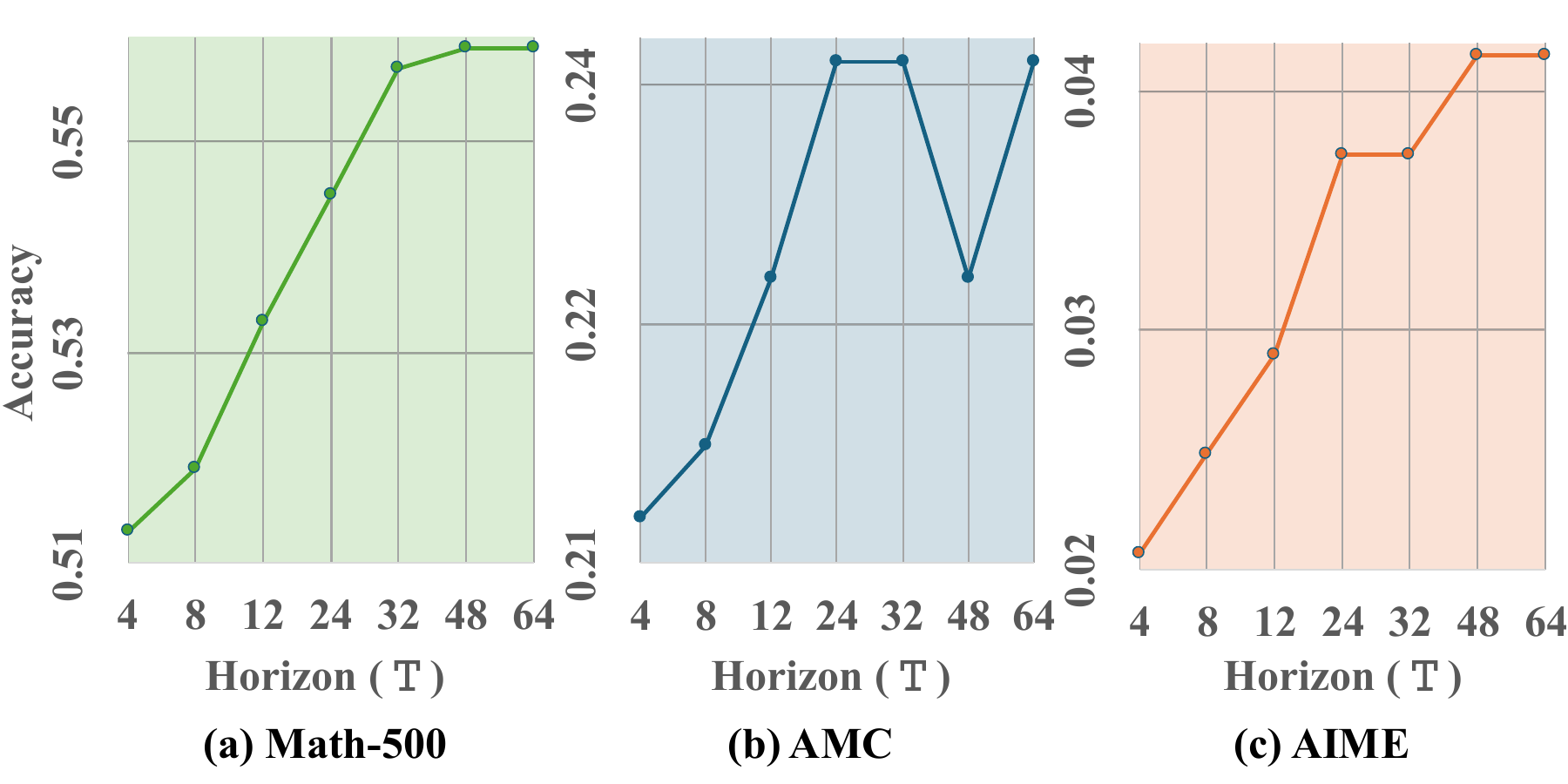}
    \vspace{-2em}
    \caption{Test-time scaling with TTC layers. TTC allows scaling test-time compute to improve performance by enlarging the planning horizon $T$.}
    \label{fig:tt_scaling}
    \vspace{-2em}
\end{wrapfigure}

\paragraph{Test-Time Scaling.}
As discussed in Sec. \ref{sec:ttcnet}, the TTC layer supports a flexible planning horizon and can achieve improved performance with longer horizons.
Accordingly, we evaluate the fine-tuned TTC-Net by extending its planning horizon at test time to examine how performance scales with $T$.
The results, shown in Fig. \ref{fig:tt_scaling}, validate our hypothesis: reasoning accuracy consistently improves as the planning horizon increases.
Notably, although the maximum horizon used during training is 32, the model generalizes successfully to $T=64$ at test time and continues to enhance the reasoning performance.

\subsection{Ablation Studies}

\begin{wraptable}{r}{0.35\textwidth}
\centering
\vspace{-1.5em}
\caption{\textbf{Ablation studies.} Accuracy (\%) on MATH-500 dataset.}
\label{tab:ablation}
\vspace*{-0.5em}
\begin{tabular}{lc}
\toprule
Variants & MATH-500 \\
\midrule
Base model & 25.00 \\
\midrule
\multicolumn{2}{l}{\textbf{Parameterization}} \\
Homo. ($T_{test}=8$) & 48.40 \\
Homo. ($T_{test}=16$) & 45.70 \\
\midrule
\multicolumn{2}{l}{\textbf{Horizon sampling}} \\
Fixed ($T_{test}=8$) & 50.60 \\
Fixed ($T_{test}=16$) & 31.50 \\
Unif. ($T_{test}=8$) & 50.80 \\
Unif. ($T_{test}=16$) & 51.00 \\
\midrule
\multicolumn{2}{l}{\textbf{Interleaving ratio}} \\
Attn:TTC = 4:1 & 53.00 \\
Attn:TTC = 16:2 & 50.30 \\
Attn:TTC = 16:1 & 47.20 \\
\midrule
\multicolumn{2}{l}{\textbf{Full model}} \\
\multicolumn{2}{l}{\textit{Time Heter. + PLN + (8:1)}} \\
TTC-Net ($T_{test}=8$) & 52.80 \\
TTC-Net ($T_{test}=16$) & 53.60 \\
\bottomrule
\end{tabular}
\vspace{-4.5em}
\end{wraptable}

We conduct an ablation study on the MATH-500 benchmark to evaluate the effect of different design choices in TTC-Net.
We examine three key design dimensions: (1) time modulation, (2) training horizon sampling strategy, and (3) TTC layer insertion interval.
As we described in Sec. \ref{sec:ttcnet}, the full model is a combination of three key features: (1) time-heterogeneous parameterization, (2) training horizon sampled from a Poisson log-normal distribution, and (3) an 8:1 ratio of attention to TTC layers.
Each ablation study varies one of these three design dimensions while keeping the remaining components the same as in the full configuration.
All model variants are trained with a fixed time horizon $T_{train} = 8$.
At evaluation time, we assess generalization to different inference-time horizons $T_{test} \in \{8, 16\}$.
We adopt the same recipe in Sec. \ref{sec:math_expr} to train each model and report the test accuracy on MATH-500.
All results are shown in Tab. \ref{tab:ablation}.

\paragraph{Time Homogeneity vs. Heterogeneity.}
Time-heterogeneous parameterization allows the matrices $\{(\Mat{A}_t, \Mat{B}_t, \Mat{Q}_t, \Mat{R}_t)\}_{t=1}^{T}$ to vary across time steps $t$, whereas time-homogeneous parameterization adopts a shared set of parameters $(\Mat{A}, \Mat{B}, \Mat{Q}, \Mat{R})$
for all $t$.
We implement the time-homogeneous variant by removing the time modulation coefficients.
As shown in Tab.~\ref{tab:ablation}, the time-homogeneous TTC (rows 1-2) consistently underperforms its time-heterogeneous counterpart (rows 10-11).
Moreover, when the planning horizon is changed at test time, performance further degrades under the homogeneous setting.
These results support our hypothesis that time-heterogeneous dynamics $\{(\Mat{A}_t, \Mat{B}_t, \Mat{Q}_t, \Mat{R}_t)\}_{t=1}^{T}$ are critical for modeling complex latent-space dynamics.
In contrast, time-uniform linear dynamics suffer from limited expressivity.
Additionally, without the discounting effect of the time modulation, generalization across planning horizons deteriorates.

\paragraph{Horizon Sampling Strategy.}
We further consider two alternative training horizon sampling strategies besides the Poisson log-normal (PLN) distribution:
(1) a fixed horizon with $T_{train} = 8$, and
(2) a uniform distribution with $T_{train} \sim \mathrm{Unif}(1, 32)$.
According to Tab.~\ref{tab:ablation}, training with a fixed horizon yields comparable performance at the matched test horizon (row 3), but fails to generalize or further improve when evaluated with a larger test-time horizon (row 4).
The uniform horizon sampling strategy (rows 5–6) achieves accuracy on par with the full model (rows 10–11).
However, this comes at a substantially higher training cost.
Larger planning horizons require increased computational time.
The sampled horizon from a PLN distribution concentrates around the mean value of 8, whereas the uniform distribution nearly doubles the average training horizon.

\paragraph{Interleaving Pattern between Attention and TTC.}
We further study the interleaving ratio between attention and TTC layers.
We use Attn:TTC = $n:m$ to denote that $m$ TTC layers are inserted after every $n$ attention layers.
Our default configuration is an $8:1$ ratio in a 32-layer Transformer.
The results are reported in rows 7–9 of Tab.~\ref{tab:ablation}
We observe that increasing the number of TTC layers can monotonically improve overall accuracy (rows 7 and 9).
However, this improvement comes at a higher computational cost and is less cost-effective than adopting a relatively smaller interleaving ratio while increasing the test-time horizon (rows 7 and 11).
Moreover, comparing configurations that stack more TTC layers consecutively (row 8) with those that interleave them more uniformly with attention layers (row 10), the results ($8:1$ vs $16:2$) indicate that distributing TTC layers evenly throughout the network yields better performance.
These observations justify our default choice of $8:1$.

%% file: tex/05_conclusion.tex
\vspace{-0.5em}
\section{Conclusion}
We introduce Test-Time Control (TTC)-Net, a new architectural paradigm that augments memory-based language models with test-time control.
At its core, TTC-Net embeds TTC layers that formulate reasoning as an optimal control problem over internal representations, enabling models to plan future trajectories before decoding the next token.

To make this paradigm scalable, we derive a hardware-efficient LQR solver that supports parallel execution with minimal overhead, allowing TTC layers to be deployed as lightweight adapters within pretrained LLMs.
Empirically, TTC-Net consistently improves reasoning performance.

More broadly, TTC-Net reframes test-time learning as structured decision making rather than memorization or parameter adaptation, unifying memory, world modeling, reinforcement learning objectives, and long-horizon planning within a single architecture.
\vspace{-1em}
\paragraph{Limitations and Future Works.}
Theoretically, although a single TTC layer has a well-defined and interpretable optimization objective, it remains unclear how multiple TTC layers jointly interact and represent dynamics within a deep transformer. A rigorous theoretical analysis from this perspective remains an open direction for future research.

Empirically, it is worthwhile to explore more expressive parameterizations of the latent-space dynamics and reward modeling, including advanced linear approaches \citep{yang2024parallelizing} and even non-linear formulations \citep{amos2018differentiable}, while maintaining compatibility with hardware-level optimization constraints.
While our results demonstrate the promise of TTC layers as core components of LLMs, a more comprehensive evaluation across larger-scale models and all stages of training remains an open direction for future work.

%% file: tex/xx_appendix.tex
\section{Theory}
\label{sec:proofs}

\begin{definition}[Linear-Quadratic Regulator \citep{kalman1960contributions}]
Given LQR system with parameters $\{(\Mat{A}_t, \Mat{B}_t, \Mat{Q}_t, \Mat{R}_t, \Mat{r}_t)\}_{t=1}^{T}$ and initial state $\Mat{h}_{0} = \Mat{h}_{init}$, the optimization problem can be formally written as:
\begin{align}
\label{eqn:lcqp}
\min_{\substack{\Mat{h}_{0}, \Mat{h}_{1}, \cdots, \Mat{h}_{T} \\ \Mat{u}_{1}, \cdots, \Mat{u}_{T}}} & \left[\frac{1}{2} (\Mat{h}_{t}^\top \Mat{Q}_t \Mat{h}_{t} + \Mat{u}_{t}^\top \Mat{R}_t \Mat{u}_{t}) + \Mat{r}_t^\top \Mat{u}_t\right] \\
\text{s.t.} \quad & \Mat{h}_{t+1} = \Mat{A}_{t+1} \Mat{h}_t + \Mat{B}_{t+1} \Mat{u}_{t+1}, \quad \Mat{h}_{0} = \Mat{h}_{init}
\end{align}
\end{definition}

Note that we distinguish between the notation for $\Mat{h}_{0}$ as an optimization variable and its assigned value $\Mat{h}_{init}$.
In the main text, we use $\Mat{h}_{0}$ in place of $\Mat{h}_{init}$ without separate notations when no confusion arises.

Below, we introduce two approaches to solve Eq. \eqref{eqn:lcqp}, along the way, proving Proposition \ref{prop:riccati} and Theorem \ref{thm:symplectic_form}.
Afterwards, we conclude this section by proving Theorem \ref{thm:diff}.

\subsection{Riccati Iteration}
\label{sec:riccati}
One of the most famous algorithms to solve Eq. \eqref{eqn:lcqp} is through Riccati iteration algorithm, grounded by the following proposition:

\begin{proposition}
\label{prop:riccati_affine}
The optimal $\Mat{u}_{t}$ minimizing Eq. \eqref{eqn:ttc} depends affinely on $\Mat{h}_{t-1}$ as $\Mat{u}_{t} = \Mat{K}_t\Mat{h}_{t-1} + \Mat{k}_t$, where $\Mat{K}_t$ and $\Mat{k}_t$ are given by:
\begin{align}
\Mat{K}_t &= -(\Mat{R}_{t} + \Mat{B}_{t}^\top \Mat{P}_{t} \Mat{B}_{t})^{-1} \Mat{B}_{t}^\top \Mat{P}_{t} \Mat{A}_{t}, &
\Mat{k}_t &= -(\Mat{R}_{t} + \Mat{B}_{t}^\top \Mat{P}_{t} \Mat{B}_{t})^{-1} (\Mat{B}_t^\top \Mat{p}_t + \Mat{r}_t),
\end{align}
where:
\begin{align}
\label{eqn:app:riccati}
\Mat{P}_t &= \Mat{Q}_t + \Mat{A}_{t+1}^{\top} \Mat{P}_{t+1} \Mat{A}_{t+1} - (\Mat{B}_{t+1}^\top \Mat{P}_{t+1} \Mat{A}_{t+1})^\top (\Mat{R}_{t+1} + \Mat{B}_{t}^\top \Mat{P}_{t+1} \Mat{B}_{t})^{-1} (\Mat{B}_{t+1}^\top \Mat{P}_{t+1} \Mat{A}_{t+1}), \\
\Mat{p}_t &= \Mat{A}_{t+1}^\top \Mat{p}_{t+1} - \Mat{A}_{t+1}^\top \Mat{P}_{t+1} \Mat{B}_{t+1}(\Mat{R}_{t+1} + \Mat{B}_{t+1}^\top \Mat{P}_{t+1} \Mat{B}_{t+1})^{-1}(\Mat{B}_{t+1}^\top \Mat{p}_{t+1} + \Mat{r}_{t+1}),
\end{align}
for $t = 1, \cdots, T$ with boundary condition $\Mat{P}_{T} = \Mat{Q}_{T}$ and $\Mat{p}_{T} = \Mat{0}$.
\end{proposition}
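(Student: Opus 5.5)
We prove the statement by backward induction (dynamic programming) on $t$. The inductive claim is that the optimal cost-to-go from state $\Mat{h}_t$ at stage $t$ is the quadratic
\[
V_t(\Mat{h}_t) = \tfrac{1}{2}\Mat{h}_t^\top \Mat{P}_t \Mat{h}_t + \Mat{p}_t^\top \Mat{h}_t + c_t ,
\]
where $\Mat{P}_t \succeq \Mat{0}$ is symmetric. Here $V_t$ collects the cost $\frac{1}{2}\Mat{h}_t^\top\Mat{Q}_t\Mat{h}_t$ and all future stage costs, minimized over $\Mat{u}_{t+1},\dots,\Mat{u}_T$.

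\emph{Base case.} At $t=T$ no decision remains, so $V_T(\Mat{h}_T)=\frac{1}{2}\Mat{h}_T^\top\Mat{Q}_T\Mat{h}_T$. This gives $\Mat{P}_T=\Mat{Q}_T$ and $\Mat{p}_T=\Mat{0}$, matching the boundary conditions.

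\emph{Inductive step.} Assume the form holds at $t$. Because the problem is stagewise separable under the dynamics $\Mat{h}_t=\Mat{A}_t\Mat{h}_{t-1}+\Mat{B}_t\Mat{u}_t$, the principle of optimality gives the Q-function at stage $t-1$:
\[
\tfrac{1}{2}\Mat{h}_{t-1}^\top\Mat{Q}_{t-1}\Mat{h}_{t-1} + \tfrac{1}{2}\Mat{u}^\top\Mat{R}_t\Mat{u} + \Mat{r}_t^\top\Mat{u} + V_t(\Mat{A}_t\Mat{h}_{t-1}+\Mat{B}_t\Mat{u}).
\]
This is a quadratic in $\Mat{u}$ with Hessian $\Mat{R}_t+\Mat{B}_t^\top\Mat{P}_t\Mat{B}_t$. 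We assume this Hessian is positive definite, for instance because $\Mat{R}_t\succ\Mat{0}$ (which the paper's parameterization ensures) and $\Mat{P}_t\succeq\Mat{0}$. Setting the gradient in $\Mat{u}$ to zero gives
\[
(\Mat{R}_t+\Mat{B}_t^\top\Mat{P}_t\Mat{B}_t)\Mat{u} + \Mat{B}_t^\top\Mat{P}_t\Mat{A}_t\Mat{h}_{t-1} + \Mat{B}_t^\top\Mat{p}_t + \Mat{r}_t = \Mat{0}.
\]
Solving for $\Mat{u}$ yields exactly $\Mat{u}_t=\Mat{K}_t\Mat{h}_{t-1}+\Mat{k}_t$ with the stated $\Mat{K}_t$ and $\Mat{k}_t$.

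\emph{Closing the induction.} Substitute the minimizer back into the Q-function and complete the square. Write $\Mat{M}_t=\Mat{R}_t+\Mat{B}_t^\top\Mat{P}_t\Mat{B}_t$ and $\Mat{g}=\Mat{B}_t^\top\Mat{P}_t\Mat{A}_t\Mat{h}_{t-1}+\Mat{B}_t^\top\Mat{p}_t+\Mat{r}_t$. The minimum equals the $\Mat{u}$-free part minus $\frac{1}{2}\Mat{g}^\top\Mat{M}_t^{-1}\Mat{g}$. Grouping terms:
\begin{itemize}
\item The quadratic coefficient in $\Mat{h}_{t-1}$ is
\[
\Mat{Q}_{t-1}+\Mat{A}_t^\top\Mat{P}_t\Mat{A}_t-(\Mat{B}_t^\top\Mat{P}_t\Mat{A}_t)^\top\Mat{M}_t^{-1}(\Mat{B}_t^\top\Mat{P}_t\Mat{A}_t),
\]
which is the Riccati recursion for $\Mat{P}_{t-1}$ after the index shift $t\to t+1$. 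The $\Mat{B}_t$ inside the inverse in Eq.~\eqref{eqn:app:riccati} should read $\Mat{B}_{t+1}$, presumably a typo.
\item The linear coefficient is
\[
\Mat{A}_t^\top\Mat{p}_t-\Mat{A}_t^\top\Mat{P}_t\Mat{B}_t\Mat{M}_t^{-1}(\Mat{B}_t^\top\Mat{p}_t+\Mat{r}_t),
\]
which is the stated recursion for $\Mat{p}_{t-1}$.
\item The constant is irrelevant to the policy.
\end{itemize}

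\emph{Preserving positive semidefiniteness.} To keep the induction going we need $\Mat{P}_{t-1}\succeq\Mat{0}$. This holds because the minimized quadratic part is an infimum over $\Mat{u}$ of a sum of PSD quadratic forms in $(\Mat{h}_{t-1},\Mat{u})$, hence is nonnegative. Equivalently, $\Mat{P}_{t-1}$ is $\Mat{Q}_{t-1}$ plus a Schur complement of the PSD block matrix
\[
\begin{bmatrix}\Mat{A}_t^\top\Mat{P}_t\Mat{A}_t & \Mat{A}_t^\top\Mat{P}_t\Mat{B}_t\\ \Mat{B}_t^\top\Mat{P}_t\Mat{A}_t & \Mat{M}_t\end{bmatrix}.
\]

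\emph{Where the care is needed.} The main obstacle is bookkeeping rather than any conceptual difficulty. First, the Hessian $\Mat{M}_t$ must stay invertible and positive definite, so that the stationary point is the unique global minimizer; this is what the PSD induction guarantees. Second, the time indices must be handled carefully: the paper's convention pairs $\Mat{Q}_t$ with $\Mat{h}_t$ but attaches $\Mat{u}_{t+1}$ to stage $t$, so the recursion links $\Mat{P}_t$ to $\Mat{P}_{t+1}$ through $(\Mat{A}_{t+1},\Mat{B}_{t+1},\Mat{R}_{t+1})$.

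\emph{Consequences.} Setting $\Mat{r}_t=\Mat{0}$ gives $\Mat{p}_t\equiv\Mat{0}$ and $\Mat{k}_t=\Mat{0}$, which recovers Proposition~\ref{prop:riccati} and the value-function identity quoted in the main text, with the sign flipped for reward versus cost.
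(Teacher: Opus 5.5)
Your proof is correct and follows essentially the same route as the paper. Both argue by backward induction on the quadratic ansatz $V_t(\Mat{h}) = \tfrac{1}{2}\Mat{h}^\top \Mat{P}_t \Mat{h} + \Mat{p}_t^\top \Mat{h} + c_t$: substitute it into the Bellman equation, minimize the resulting quadratic in $\Mat{u}$, complete the square, and read off $\Mat{P}$ and $\Mat{p}$. The only cosmetic difference is that you index the step as $t \to t-1$ rather than $t+1 \to t$.

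Your extra bookkeeping is also correct and tightens points the paper leaves implicit or states slightly inaccurately: the PSD induction guaranteeing $\Mat{R}_t + \Mat{B}_t^\top \Mat{P}_t \Mat{B}_t \succ \Mat{0}$, the factor $\tfrac{1}{2}$ in the base case $V_T$ (which the paper drops), and the flagged $\Mat{B}_t \to \Mat{B}_{t+1}$ typo inside the inverse.
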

\begin{proof}
First, define the value function $V_{t}(\Mat{h})$ as the minimal cost-to-go given $\Mat{h}_{t} = \Mat{h}$ for any $t \ge 0$:
\begin{align}
\label{eqn:V_func}
V_t(\Mat{h}) = \min_{\Mat{u}_{t+1}, \cdots, \Mat{u}_{T}} \frac{1}{2} \left[ \Mat{h}_{t}^\top \Mat{Q}_{t} \Mat{h}_{t} + \sum_{s = 1}^{T - t} (\Mat{h}_{t+s}^\top \Mat{Q}_{t+s} \Mat{h}_{t+s} + \Mat{u}_{t+s}^\top \Mat{R}_{t+s} \Mat{u}_{t+s} + 2\Mat{r}_{t+s}^\top \Mat{u}_{t+s}) \right].
\end{align}
Unrolling the optimization problem in Eq. \eqref{eqn:V_func} by one step yields the Bellman equation:
\begin{align*}
V_t(\Mat{h}) = \min_{\Mat{u}} \left[ \frac{1}{2} \Mat{h}^\top \Mat{Q}_t \Mat{h} + \frac{1}{2} \Mat{u}^\top \Mat{R}_{t+1} \Mat{u} + \Mat{r}_{t+1}^\top \Mat{u} + V_{t+1}(\Mat{A}_{t+1} \Mat{h} + \Mat{B}_{t+1} \Mat{u}) \right].
\end{align*}
Next, we make the inductive hypothesis that $V_{t+1}(\Mat{h}) = \frac{1}{2} \Mat{h}^\top \Mat{P}_{t+1} \Mat{h} + \Mat{p}_{t+1}^\top \Mat{h} + c_{t+1}$ for some symmetric matrix $\Mat{P}_{t+1} \in \real^{d \times d}$, vector $\Mat{p}_{t+1} \in \real^{d}$, and scalar $c_{t+1} \in \real$.
This hypothesis holds for $t=T$: $V_{T}(\Mat{h}) = \Mat{h}^\top \Mat{Q}_{T} \Mat{h}$.
Substituting the hypothesis into the Bellman equation gives
\begin{align*}
V_t(\Mat{h}) &= \min_{\Mat{u}} \left[ \frac{1}{2} \Mat{h}^\top \Mat{Q}_t \Mat{h} + \frac{1}{2} \Mat{u}^\top \Mat{R}_{t+1} \Mat{u} + \Mat{r}_{t+1}^\top \Mat{h} + \frac{1}{2} (\Mat{A}_{t+1} \Mat{h} + \Mat{B}_{t+1} \Mat{u})^\top \Mat{P}_{t+1} (\Mat{A}_{t+1} \Mat{h} + \Mat{B}_{t+1} \Mat{u}) \right. \\
&\quad\quad\quad \left.\phantom{\frac{1}{2}} + \Mat{p}_{t+1}^\top (\Mat{A}_{t+1} \Mat{h} + \Mat{B}_{t+1} \Mat{u}) + c_{t+1} \right] \\
&= \min_{\Mat{u}} \frac{1}{2} \left[ \Mat{h}^\top (\Mat{Q}_t + \Mat{A}_{t+1}^\top \Mat{P}_{t+1} \Mat{A}_{t+1}) \Mat{h} + \Mat{u}^\top (\Mat{R}_{t+1} + \Mat{B}_{t+1}^\top \Mat{P}_{t+1} \Mat{B}_{t+1}) \Mat{u} \right. \\
&\quad\quad\quad \left.\phantom{\Mat{B}^{\top}} + 2 \Mat{h}^\top \Mat{A}_{t+1}^\top \Mat{P}_{t+1} \Mat{B}_{t+1} \Mat{u} + 2(\Mat{B}_{t+1}^\top \Mat{p}_{t+1} + \Mat{r}_{t+1})^\top \Mat{u} + 2\Mat{p}_{t+1} \Mat{A}_{t+1} \Mat{h} + 2 c_{t+1} \right].
\end{align*}
The minimizer admits the closed form
\begin{align} \label{eqn:ut_optim}
\Mat{u}_{t+1} &= -(\Mat{R}_{t+1} + \Mat{B}_{t+1}^\top \Mat{P}_{t+1} \Mat{B}_{t+1})^{-1} (\Mat{B}_{t+1}^\top \Mat{P}_{t+1} \Mat{A}_{t+1} \Mat{h} + \Mat{B}_{t+1}^\top \Mat{p}_{t+1} + \Mat{r}_{t+1}),
\end{align}
and the minimum value is
\begin{align*}
V_t(\Mat{h}) = &\frac{1}{2} \Mat{h}^{\top}\left[\Mat{Q}_t + \Mat{A}_{t+1}^{\top} \Mat{P}_{t+1} \Mat{A}_{t+1} - (\Mat{B}_{t+1}^\top \Mat{P}_{t+1} \Mat{A}_{t+1})^\top (\Mat{R}_{t+1} + \Mat{B}_{t}^\top \Mat{P}_{t+1} \Mat{B}_{t})^{-1} (\Mat{B}_{t+1}^\top \Mat{P}_{t+1} \Mat{A}_{t+1}) \right]\Mat{h} \\
& + \left[\Mat{A}_{t+1}^\top \Mat{p}_{t+1} - \Mat{A}_{t+1}^\top \Mat{P}_{t+1} \Mat{B}_{t+1}(\Mat{R}_{t+1} + \Mat{B}_{t+1}^\top \Mat{P}_{t+1} \Mat{B}_{t+1})^{-1}(\Mat{B}_{t+1}^\top \Mat{p}_{t+1} + \Mat{r}_{t+1})\right]^\top \Mat{h} \\
& + c_{t+1} - \frac{1}{2}(\Mat{B}_{t+1}^\top \Mat{p}_{t+1} + \Mat{r}_{t+1})^\top (\Mat{R}_{t+1} + \Mat{B}_{t+1}^\top \Mat{P}_{t+1} \Mat{B}_{t+1})^{-1}(\Mat{B}_{t+1}^\top \Mat{p}_{t+1} + \Mat{r}_{t+1}).
\end{align*}
This completes the induction by identifying
$\Mat{P}_t, \Mat{p}_t, c_t$ for $V_t$ as desired.
\end{proof}

\begin{remark}
Proposition \ref{prop:riccati} is a special case of Theorem \ref{prop:riccati_affine} when considering $t = 1$ and no affine cost on actions.
Solving an LQR with a linear cost upon actions is needed for backward (Sec. \ref{sec:differential}).
\end{remark}

Theorem \ref{prop:riccati_affine} implies Algorithm \ref{alg:riccati_it} to compute all $\{(\Mat{h}_t, \Mat{u}_t)\}_{t=1}^{T}$.
\begin{algorithm}[h]
\caption{Finite-horizon LQR via Riccati recursion}
\label{alg:riccati_it}
\begin{algorithmic}[1]
\Require $\{(\Mat{A}_t, \Mat{B}_t, \Mat{Q}_t, \Mat{R}_t, \Mat{r}_t)\}_{t=1}^{T}$, initial state $\Mat{h}_{init}$
\Ensure actions $\{\Mat{u}_t\}_{t=1}^{T}$, states $\{\Mat{h}_t\}_{t=1}^{T}$
\State Initialize terminal parameters $\Mat{P}_{T} \gets \Mat{Q}_T$, $\Mat{p}_{T} \gets \Mat{0}$  
\For{$t = T-1, \ldots, 1$} \Comment{reverse Riccati recursion}
  \State $\Mat{P}_t \gets \Mat{Q}_t + \Mat{A}_{t+1}^{\top} \Mat{P}_{t+1} \Mat{A}_{t+1} - (\Mat{B}_{t+1}^\top \Mat{P}_{t+1} \Mat{A}_{t+1})^\top (\Mat{R}_{t+1} + \Mat{B}_{t}^\top \Mat{P}_{t+1} \Mat{B}_{t})^{-1} (\Mat{B}_{t+1}^\top \Mat{P}_{t+1} \Mat{A}_{t+1})$
  \State $\Mat{p}_t \gets \Mat{A}_{t+1}^\top \Mat{p}_{t+1} - \Mat{A}_{t+1}^\top \Mat{P}_{t+1} \Mat{B}_{t+1}(\Mat{R}_{t+1} + \Mat{B}_{t+1}^\top \Mat{P}_{t+1} \Mat{B}_{t+1})^{-1}(\Mat{B}_{t+1}^\top \Mat{p}_{t+1} + \Mat{r}_{t+1})$
\EndFor
\State Assign initial state $\Mat{h}_0 \gets \Mat{h}_{init}$.
\For{$t = 1, 2, \ldots, T$} \Comment{forward rollout}
    \State $\Mat{u}_t \gets -(\Mat{R}_{t} + \Mat{B}_{t}^\top \Mat{P}_{t} \Mat{B}_{t})^{-1} (\Mat{B}_{t}^\top \Mat{P}_{t} \Mat{A}_{t} \Mat{h}_{t-1} + \Mat{B}_t^\top \Mat{p}_t + \Mat{r}_t)$
    \State $\Mat{h}_t \gets \Mat{A}_t \Mat{h}_{t-1} + \Mat{B}_t \Mat{u}_t$
\EndFor
\State \textbf{return} $\{\Mat{u}_t\}_{t=1}^{T}$, $\{\Mat{h}_t\}_{t=1}^{T}$.
\end{algorithmic}
\end{algorithm}

\subsection{KKT Reformulation}
\label{sec:kkt}

We introduce Lagrangian multiplier $\{\Mat{\lambda}_t\}_{0 \le t \le T}$, and the objective Eq. \eqref{eqn:lcqp} can be transformed to:
\begin{align}
\label{eqn:lag}
L = \frac{1}{2} \sum_{t=1}^{T} \left(\Mat{h}_t^\top \Mat{Q}_t \Mat{h}_t + \Mat{u}_t^\top \Mat{R}_t \Mat{u}_t \right) + \sum_{t=1}^{T} \Mat{r}_t^\top \Mat{u}_t + \sum_{t=0}^{T - 1} \Mat{\lambda}_{t+1}^\top (\Mat{A}_{t+1} \Mat{h}_t + \Mat{B}_{t+1} \Mat{u}_{t+1} - \Mat{h}_{t+1}) + \Mat{\lambda}_{0}^\top (\Mat{h}_{init} - \Mat{h}_0),
\end{align}
wherer $\Mat{\lambda}_t$ is also called co-state.
The dual problem to Eq. \eqref{eqn:lcqp} can then be written as \citep{boyd2004convex}:
\begin{align} \label{eqn:dual}
\max_{\Mat{\lambda}_0, \cdots, \Mat{\lambda}_T} \inf_{\{(\Mat{h}_t, \Mat{u}_t)\}_{t=1}^{T}} L(\{(\Mat{h}_t, \Mat{\lambda}_t, \Mat{u}_t)\}_{t=1}^{T}).
\end{align}
By KKT conditions \citep{boyd2004convex}, the optimal solution $\{(\Mat{h}_t, \Mat{\lambda}_t, \Mat{u}_t)\}_{t=1}^{T}$ to Eq. \eqref{eqn:dual} must satisfy:
\begin{align}
\label{eqn:kkt_lamt} \frac{\partial L}{\partial \Mat{h}_t} &= \Mat{Q}_t \Mat{h}_{t} + \Mat{A}_{t+1}^\top \Mat{\lambda}_{t+1} - \Mat{\lambda}_t = \Mat{0}, & 1 \le t \le T - 1, \\
\label{eqn:kkt_lam0} \frac{\partial L}{\partial \Mat{h}_t} &= \Mat{A}_{t+1}^{\top} \Mat{\lambda}_{t+1} - \Mat{\lambda}_{t} = \Mat{0}, & t = 0, \\
\label{eqn:kkt_lamT} \frac{\partial L}{\partial \Mat{h}_t} &= \Mat{Q}_t \Mat{h}_t - \Mat{\lambda}_t = \Mat{0}, & t = T \\
\label{eqn:kkt_ut} \frac{\partial L}{\partial \Mat{u}_t} &= \Mat{R}_t \Mat{u}_{t} + \Mat{B}_{t}^\top \Mat{\lambda}_{t} + \Mat{r}_t = \Mat{0}, & 1 \le t \le T, \\
\label{eqn:kkt_ht} \frac{\partial L}{\partial \Mat{\lambda}_t} &= \Mat{A}_{t} \Mat{h}_{t-1} + \Mat{B}_{t} \Mat{u}_{t} - \Mat{h}_{t} = \Mat{0}, & 1 \le t \le T, \\
\label{eqn:kkt_h0} \frac{\partial L}{\partial \Mat{\lambda}_t} &= \Mat{h}_{init} - \Mat{h}_{t} = \Mat{0}, & t = 0.
\end{align}

Combining Riccati iteration, Eq. \eqref{eqn:kkt_lamt}, \eqref{eqn:kkt_lam0} and \eqref{eqn:kkt_lamT} imply a way to compute all $\{\Mat{\lambda}_t\}_{t=0}^T$ from $\{(\Mat{h}_t, \Mat{u}_t)\}_{t=1}^{T}$.
See Algorithm \ref{alg:riccati_it_kkt} for more details.
\begin{algorithm}[h]
\caption{Extension to Riccati recursion}
\label{alg:riccati_it_kkt}
\begin{algorithmic}[1]
\Require $\{(\Mat{A}_t, \Mat{B}_t, \Mat{Q}_t, \Mat{R}_t, \Mat{r}_t)\}_{t=1}^{T}$, actions $\{\Mat{u}_t\}_{t=1}^{T}$, states $\{\Mat{h}_t\}_{t=0}^{T}$,
\Ensure co-states $\{\Mat{\lambda}_t\}_{t=0}^{T}$ \Comment{Eq. \eqref{eqn:kkt_lamT}}
\State Initialize terminal co-state $\Mat{\lambda}_{T} \gets \Mat{Q}_T \Mat{h}_T$ 
\For{$t = T-1, \ldots, 1$}
  \State $\Mat{\lambda}_t \gets \Mat{Q}_t \Mat{h}_{t} + \Mat{A}_{t+1}^\top \Mat{\lambda}_{t+1}$ \Comment{Eq. \eqref{eqn:kkt_lamt}}
\EndFor
\State $\Mat{\lambda}_0 \gets \Mat{A}_{1}^\top \Mat{\lambda}_{1}$ \Comment{Eq. \eqref{eqn:kkt_lam0}}
\State \textbf{return} $\{\Mat{\lambda}_t\}_{t=0}^{T}$.
\end{algorithmic}
\end{algorithm}

\subsection{Symplectic Iteration}

We prove the symplectic iteration theorem in two parts.
First, we establish the joint relation between states and co-states, which is useful for forward computation of all states and co-states starting from $\Mat{h}_0$ and $\Mat{\lambda}_0$.

\begin{theorem}[Forward symplectic iteration]
\label{thm:fwd_symplectic}
Given LQR system in Eq. \eqref{eqn:lcqp} with parameters $\{(\Mat{A}_t, \Mat{B}_t, \Mat{Q}_t, \Mat{R}_t, \Mat{r}_t)\}_{t=1}^{T}$ and initial state $\Mat{h}_{init}$.
The following recursive relation between state $\Mat{h}_t$ and co-state $\Mat{\lambda}_t$ is always true for $1 \le t \le T$:
\begin{align}
\begin{bmatrix}
\Mat{h}_t \\ \Mat{\lambda}_t
\end{bmatrix} &= \Mat{S}_t \begin{bmatrix}
\Mat{h}_{t-1} \\ \Mat{\lambda}_{t-1}
\end{bmatrix} + \Mat{b}_t,
& \Mat{S}_t &= \begin{bmatrix}
\Mat{A}_t + \Mat{G}_t (\Mat{A}_t^\top)^{-1} \Mat{Q}_{t-1} & -\Mat{G}_t (\Mat{A}_t^\top)^{-1}  \\ -(\Mat{A}_t^\top)^{-1} \Mat{Q}_{t-1} & (\Mat{A}_t^\top)^{-1}
\end{bmatrix},
& \Mat{b}_t &= \begin{bmatrix}
-\Mat{B}_t \Mat{R}_t^{-1} \Mat{r}_t \\ \Mat{0}
\end{bmatrix},
\end{align}
where we define $\Mat{Q}_0 = \Mat{0}$ and $\Mat{G}_t = \Mat{B}_t \Mat{R}_t^{-1} \Mat{B}_t^{\top}$.
\end{theorem}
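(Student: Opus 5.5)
The plan is to derive the recursion directly from the KKT conditions in Eqs. \eqref{eqn:kkt_lamt}--\eqref{eqn:kkt_h0}, shifting the co-state equation back by one index and eliminating the control $\Mat{u}_t$. The derivation takes two facts for granted. First, Eq. \eqref{eqn:lcqp} is a convex QP with linear equality constraints, and it is strictly convex in the controls when $\Mat{R}_t \succ 0$. Hence an optimum exists and satisfies the KKT system, so any optimal trajectory $\{(\Mat{h}_t,\Mat{\lambda}_t,\Mat{u}_t)\}$ obeys those equations. Second, the statement implicitly requires $\Mat{A}_t$ and $\Mat{R}_t$ to be invertible, since it writes $(\Mat{A}_t^\top)^{-1}$ and $\Mat{R}_t^{-1}$; I would state this explicitly, noting that the parameterization in Sec. \ref{sec:ttcnet} is designed to ensure it.

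\emph{Step 1: the co-state row.} Fix $1 \le t \le T$ and read Eq. \eqref{eqn:kkt_lamt} at index $t-1$, which gives $\Mat{\lambda}_{t-1} = \Mat{Q}_{t-1}\Mat{h}_{t-1} + \Mat{A}_t^\top \Mat{\lambda}_t$. This is legitimate for $2 \le t \le T$, because then $t-1 \in \{1,\dots,T-1\}$. For $t=1$ the equation to use is Eq. \eqref{eqn:kkt_lam0}, namely $\Mat{\lambda}_0 = \Mat{A}_1^\top\Mat{\lambda}_1$. This is exactly the same formula under the convention $\Mat{Q}_0=\Mat{0}$, which is why the theorem adopts that convention. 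Inverting $\Mat{A}_t^\top$ gives
\[
\Mat{\lambda}_t = -(\Mat{A}_t^\top)^{-1}\Mat{Q}_{t-1}\Mat{h}_{t-1} + (\Mat{A}_t^\top)^{-1}\Mat{\lambda}_{t-1},
\]
which is the second block row of $\Mat{S}_t$, and the lower block of $\Mat{b}_t$ is zero, as claimed. The terminal condition Eq. \eqref{eqn:kkt_lamT} is never used here; it is a boundary condition that only matters later, when recovering $\Mat{\lambda}_0$ in Theorem \ref{thm:symplectic_form}.

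\emph{Step 2: the state row.} Solve Eq. \eqref{eqn:kkt_ut} for the control, $\Mat{u}_t = -\Mat{R}_t^{-1}(\Mat{B}_t^\top\Mat{\lambda}_t + \Mat{r}_t)$. Substituting this into the dynamics Eq. \eqref{eqn:kkt_ht} gives
\[
\Mat{h}_t = \Mat{A}_t\Mat{h}_{t-1} - \Mat{G}_t\Mat{\lambda}_t - \Mat{B}_t\Mat{R}_t^{-1}\Mat{r}_t .
\]
Next, replace $\Mat{\lambda}_t$ by the expression from Step 1. This produces the coefficient $\Mat{A}_t + \Mat{G}_t(\Mat{A}_t^\top)^{-1}\Mat{Q}_{t-1}$ on $\Mat{h}_{t-1}$, the coefficient $-\Mat{G}_t(\Mat{A}_t^\top)^{-1}$ on $\Mat{\lambda}_{t-1}$, and the offset $-\Mat{B}_t\Mat{R}_t^{-1}\Mat{r}_t$. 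These match the first block row of $\Mat{S}_t$ and the upper block of $\Mat{b}_t$. Stacking the two rows yields the claimed recursion.

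\emph{Main obstacle.} There is no substantial obstacle; the argument is pure algebra. The one delicate point is the index bookkeeping at the boundaries. The case $t=1$ must be routed through Eq. \eqref{eqn:kkt_lam0} via the convention $\Mat{Q}_0 = \Mat{0}$. The case $t=T$ must use only equations at index $T-1$ together with Eq. \eqref{eqn:kkt_ut}--\eqref{eqn:kkt_ht} at index $T$, and not the terminal condition. As an optional final check, I would verify that $\Mat{S}_t$ is the symplectic inverse-adjoint of $\Mat{\Sigma}_t$. This is not needed for the statement itself, but it is what connects this result to Theorem \ref{thm:symplectic_form}.
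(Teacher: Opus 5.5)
Your proposal is correct and follows essentially the same route as the paper. Like the paper, you solve the $\Mat{u}$-stationarity condition for $\Mat{u}_t$, read the co-state equation at index $t-1$ (with the convention $\Mat{Q}_0=\Mat{0}$ absorbing the $t=0$ boundary condition $\Mat{\lambda}_0=\Mat{A}_1^\top\Mat{\lambda}_1$), invert $\Mat{A}_t^\top$, and substitute into the dynamics. Your uniform handling of $t=1$ through that convention is, if anything, cleaner than the paper's separate $t=1$ computation, which writes $(\Mat{A}_1)^{-1}$ where $(\Mat{A}_1^\top)^{-1}$ is meant and flips the sign of the $\Mat{\lambda}_0$ term in its last line.
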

\begin{proof}
Starting with KKT condition Eq. \eqref{eqn:kkt_ut}, we have $\Mat{u}_t = -\Mat{R}_t^{-1} (\Mat{B}_t^{\top} \Mat{\lambda}_t + \Mat{r}_t)$.
After re-organizing Eq. \eqref{eqn:kkt_ht} and \eqref{eqn:kkt_lamt}:
\begin{align}
\Mat{h}_{t} &= \Mat{A}_{t} \Mat{h}_{t-1} - \Mat{B}_t \Mat{R}_t^{-1} \Mat{B}_t^{\top} \Mat{\lambda}_t - \Mat{B}_t \Mat{R}_t^{-1} \Mat{r}_t, & 1 \le t \le T, \\
\label{eqn:kkt_lambda}\Mat{\lambda}_t &= \Mat{Q}_t \Mat{h}_t + \Mat{A}_{t+1}^\top \Mat{\lambda}_{t+1} & 1 \le t \le T - 1,
\end{align}
with boundary condition:
\begin{align}
\label{eqn:kkt_init}
\Mat{\lambda}_0 &= \Mat{A}_{1}^\top \Mat{\lambda}_{1}, &
\Mat{\lambda}_{T} &= \Mat{Q}_{T} \Mat{h}_{T}, &
\Mat{h}_0 &= \Mat{h}_{init}.
\end{align}
By Eq. \eqref{eqn:kkt_lambda} above, we have:
\begin{align*}
\Mat{\lambda}_t = (\Mat{A}_t^\top)^{-1} (\Mat{\lambda}_{t-1} - \Mat{Q}_{t-1} \Mat{h}_{t-1}),
\end{align*}
and henceforth
\begin{align*}
\Mat{h}_t &= \Mat{A}_{t} \Mat{h}_{t-1} - \Mat{B}_t \Mat{R}_t^{-1} \Mat{B}_t^{\top} (\Mat{A}_t^\top)^{-1} (\Mat{\lambda}_{t-1} - \Mat{Q}_{t-1} \Mat{h}_{t-1}) - \Mat{B}_t \Mat{R}_t^{-1} \Mat{r}_t \\
&= (\Mat{A}_{t} + \Mat{G}_t (\Mat{A}_t^\top)^{-1} \Mat{Q}_{t-1}) \Mat{h}_{t-1} - \Mat{G}_t (\Mat{A}_t^\top)^{-1} \Mat{\lambda}_{t-1} - \Mat{g}_t,
\end{align*}
for $2 \le t \le t + T$, where we denote $\Mat{G}_t = \Mat{B}_t \Mat{R}_t^{-1} \Mat{B}_t^{\top}, \Mat{g}_t = \Mat{B}_t \Mat{R}_t^{-1} \Mat{r}_t$ to avoid notational clusters.

We define $\Mat{Q}_{0} = \Mat{0}$, and by Eq. \eqref{eqn:kkt_init}, we can derive a similar recurrent form for the initial step:
\begin{align*}
\Mat{\lambda}_{1} = (\Mat{A}_{1}^\top)^{-1} \Mat{\lambda}_{0} = (\Mat{A}_{1}^\top)^{-1} (\Mat{\lambda}_{0} - \Mat{Q}_{0} \Mat{h}_{0}),
\end{align*} and
\begin{align*}
\Mat{h}_{1} &= \Mat{A}_{1} \Mat{h}_{0} + \Mat{B}_{1} \Mat{u}_{1} = \Mat{A}_{1} \Mat{h}_{0} - \Mat{G}_{1} \Mat{\lambda}_{1} - \Mat{g}_{1} = \Mat{A}_{1} \Mat{h}_{0} - \Mat{G}_{1} (\Mat{A}_{1})^{-1} \Mat{\lambda}_{0} - \Mat{g}_{1} \\
&= (\Mat{A}_{1}  + \Mat{G}_{1}(\Mat{A}_{1})^{-1}\Mat{Q}_{0}) \Mat{h}_{0} + \Mat{G}_{1} (\Mat{A}_{1})^{-1} \Mat{\lambda}_{0} - \Mat{g}_{1}.
\end{align*}
After rewriting above equations in matrix form, we obtain the joint update rule for $[\Mat{h}_t, \Mat{\lambda}_t]$ as desired.
\end{proof}

Second, we show the reverse symplectic iteration theorem that is useful to compute the initial co-state $\Mat{\lambda}_0$ together with the optimal first-step action.
The result below directly proves Theorem \ref{thm:symplectic_form} in the main text.
\begin{theorem}[Reverse symplectic iteration]
\label{thm:rev_symplectic}
Given LQR system in Eq. \eqref{eqn:lcqp} with parameters $\{(\Mat{A}_t, \Mat{B}_t, \Mat{Q}_t, \Mat{R}_t, \Mat{r}_t)\}_{t=1}^{T}$ and initial state $\Mat{h}_{init}$.
The optimal first-step control:
\begin{align*}
\Mat{u}^{}_{1} &= -\Mat{R}_{1}^{-1} (\Mat{B}_{1}^{\top} (\Mat{A}_{1}^{\top})^{-1} \Mat{Y}_1^{-1} (\Mat{Y}_2 \Mat{h}_{init} + \Mat{y}_3)  + \Mat{r}_{1}), &
\Mat{\lambda}_0 = \Mat{Y}_1^{-1} (\Mat{Y}_2 \Mat{h}_{init} + \Mat{y}_3)
\end{align*}
where
\footnote{Note that, different from conventional notation, we denote $\lprod_{t=1}^{T} \Mat{\Sigma}_t = \Mat{\Sigma}_T \Mat{\Sigma}_{T-1} \cdots \Mat{\Sigma}_2 \Mat{\Sigma}_1$ to simplify subscripts, where matrices are cumulatively multiplied to the left (instead of right).}
\begin{align*}
&\begin{bmatrix}
\Mat{Y}_1 & \Mat{Y}_2
\end{bmatrix} =  \begin{bmatrix}
\Mat{I} & \Mat{Q}_{T}
\end{bmatrix} \lprod_{t=1}^{T} \Mat{\Sigma}_{t},
& \Mat{y}_3 &= \begin{bmatrix}
\Mat{I} & \Mat{Q}_{T}
\end{bmatrix} \left( \sum_{t=1}^{T} \left(\lprod_{r=t + 1}^{T} \Mat{\Sigma}_t\right) \Mat{\beta}_t \right),
\end{align*}
\begin{align*}
& \Mat{\Sigma}_t = \begin{bmatrix}
(\Mat{A}_t^\top)^{-1}  & (\Mat{A}_t^\top)^{-1} \Mat{Q}_{t-1}   \\ \Mat{G}_t (\Mat{A}_t^\top)^{-1}  & \Mat{A}_t + \Mat{G}_t (\Mat{A}_t^\top)^{-1} \Mat{Q}_{t-1} 
\end{bmatrix},
& \Mat{\beta}_t &= \begin{bmatrix} \Mat{0} \\ -\Mat{B}_t\Mat{R}_t^{-1} \Mat{r}_t \end{bmatrix},
& \Mat{G}_t &= \Mat{B}_t \Mat{R}_t^{-1} \Mat{B}_t^{\top},
\end{align*}
\end{theorem}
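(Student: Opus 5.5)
The plan is to derive Theorem~\ref{thm:rev_symplectic} from the forward recursion of Theorem~\ref{thm:fwd_symplectic}. I will rewrite that recursion in a sign-adjusted coordinate system in which its transition matrix is exactly $\Mat{\Sigma}_t$. Then I unroll it from $t=0$ to $t=T$ and impose the terminal KKT condition $\Mat{\lambda}_T = \Mat{Q}_T\Mat{h}_T$ from Eq.~\eqref{eqn:kkt_init}. This gives a linear equation for the unknown initial co-state $\Mat{\lambda}_0$, and $\Mat{u}_1$ is recovered from it through the stationarity condition Eq.~\eqref{eqn:kkt_ut}.

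\textbf{Step 1 (change of variables).} Define $\Mat{z}_t = \begin{bmatrix} -\Mat{\lambda}_t \\ \Mat{h}_t\end{bmatrix}$. From the proof of Theorem~\ref{thm:fwd_symplectic} I have $\Mat{\lambda}_t = (\Mat{A}_t^\top)^{-1}(\Mat{\lambda}_{t-1}-\Mat{Q}_{t-1}\Mat{h}_{t-1})$ and $\Mat{h}_t = (\Mat{A}_t+\Mat{G}_t(\Mat{A}_t^\top)^{-1}\Mat{Q}_{t-1})\Mat{h}_{t-1} - \Mat{G}_t(\Mat{A}_t^\top)^{-1}\Mat{\lambda}_{t-1} - \Mat{B}_t\Mat{R}_t^{-1}\Mat{r}_t$, with $\Mat{Q}_0=\Mat{0}$ so that $t=1$ is covered. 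I will check directly that both relations combine into
\[
\Mat{z}_t = \Mat{\Sigma}_t \Mat{z}_{t-1} + \Mat{\beta}_t ,
\]
where the first block row of $\Mat{\Sigma}_t$ reproduces $-\Mat{\lambda}_t$ and the second reproduces $\Mat{h}_t$, the inhomogeneous term being $\Mat{\beta}_t$. Up to a sign flip on one block, $\Mat{\Sigma}_t$ is just $\Mat{S}_t$ with its block rows and columns permuted.

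\textbf{Step 2 (unroll and close the boundary).} Induction on $t$ gives
\[
\Mat{z}_T = \Big(\lprod_{t=1}^{T}\Mat{\Sigma}_t\Big)\Mat{z}_0 + \sum_{t=1}^{T}\Big(\lprod_{r=t+1}^{T}\Mat{\Sigma}_r\Big)\Mat{\beta}_t .
\]
Left-multiplying by $\begin{bmatrix}\Mat{I} & \Mat{Q}_T\end{bmatrix}$ gives $-\Mat{\lambda}_T + \Mat{Q}_T\Mat{h}_T$ on the left, which is $\Mat{0}$ by the terminal condition. With $\Mat{h}_0 = \Mat{h}_{init}$ the right-hand side equals $-\Mat{Y}_1\Mat{\lambda}_0 + \Mat{Y}_2\Mat{h}_{init} + \Mat{y}_3$, hence $\Mat{\lambda}_0 = \Mat{Y}_1^{-1}(\Mat{Y}_2\Mat{h}_{init}+\Mat{y}_3)$. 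Then $\Mat{\lambda}_1 = (\Mat{A}_1^\top)^{-1}\Mat{\lambda}_0$ by Eq.~\eqref{eqn:kkt_lam0}, and Eq.~\eqref{eqn:kkt_ut} gives $\Mat{u}_1 = -\Mat{R}_1^{-1}(\Mat{B}_1^\top\Mat{\lambda}_1+\Mat{r}_1)$, which is the claimed formula.

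\textbf{Step 3 (well-posedness of $\Mat{Y}_1^{-1}$).} This is the main obstacle. I assume, as the construction in Sec.~\ref{sec:ttcnet} guarantees, that the $\Mat{A}_t$ are invertible and $\Mat{R}_t \succ 0$, so the problem is strictly convex in $\Mat{u}$ and its KKT solution is unique. Suppose $\Mat{Y}_1\Mat{v}=\Mat{0}$ for some $\Mat{v}\neq\Mat{0}$. Take zero data, $\Mat{h}_{init}=\Mat{0}$ and $\Mat{r}_t=\Mat{0}$, and run the forward iteration from $\Mat{\lambda}_0=\Mat{v}$. The resulting trajectory satisfies every KKT equation, including the terminal one (because $\Mat{v}\in\ker\Mat{Y}_1$), so it is a KKT point of the zero-data problem.

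By uniqueness, that KKT point must have $\Mat{h}_t=\Mat{0}$ and $\Mat{u}_t=\Mat{0}$ for all $t$. The backward co-state recursion (Algorithm~\ref{alg:riccati_it_kkt}) then forces every $\Mat{\lambda}_t=\Mat{0}$, including $\Mat{\lambda}_0=\Mat{v}$, a contradiction. So $\Mat{Y}_1$ is invertible.

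The remaining care is sign bookkeeping: matching $\Mat{\Sigma}_t$ and $\Mat{\beta}_t$ exactly, and the index conventions of $\lprod$ in $\Mat{y}_3$.
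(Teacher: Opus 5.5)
Your argument is correct, and it reaches the result by a somewhat different and more economical route than the paper.

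\textbf{The paper's route.} It unrolls the forward recursion in the original coordinates $[\Mat{h}_t;\Mat{\lambda}_t]$ with $\Mat{S}_t$. It then imposes $\Mat{\lambda}_T=\Mat{Q}_T\Mat{h}_T$ to obtain $(\Mat{\Psi}_{22}-\Mat{Q}_T\Mat{\Psi}_{12})\Mat{\lambda}_0=(\Mat{Q}_T\Mat{\Psi}_{11}-\Mat{\Psi}_{21})\Mat{h}_0+\Mat{Q}_T\Mat{\psi}_1-\Mat{\psi}_2$, where $\Mat{\Psi}=\lprod_{t=1}^{T}\Mat{S}_t$. Only afterwards does it invoke Lemma~\ref{lem:symplectic} to recognize the block-rearranged matrix $\begin{bmatrix}\Mat{\Psi}_{22} & -\Mat{\Psi}_{21}\\ -\Mat{\Psi}_{12} & \Mat{\Psi}_{11}\end{bmatrix}$ as $(\Mat{\Psi}^{-1})^\top=\lprod_{t=1}^{T}(\Mat{S}_t^{-1})^\top$. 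This is how it arrives at $\Mat{\Sigma}_t:=(\Mat{S}_t^{-1})^\top$ and $\Mat{\beta}_t=\Mat{J}^\top\Mat{b}_t$.

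\textbf{Your route.} Your substitution $\Mat{z}_t=[-\Mat{\lambda}_t;\Mat{h}_t]=\Mat{J}^\top[\Mat{h}_t;\Mat{\lambda}_t]$ gives $\Mat{\Sigma}_t=\Mat{J}^\top\Mat{S}_t\Mat{J}$ and $\Mat{\beta}_t=\Mat{J}^\top\Mat{b}_t$ by a mere change of basis. That identity holds for any block matrix, so you never need symplecticity. The lemma is exactly the identity $\Mat{J}^\top\Mat{S}_t\Mat{J}=(\Mat{S}_t^{-1})^\top$, which is why both routes land on the same $\Mat{\Sigma}_t$.

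\textbf{What each buys.} The paper's detour exhibits the reverse iteration as the inverse-transpose (symplectic adjoint) of the forward one. It relies on that picture when discussing the forward/reverse pair and the numerical growth of the products. Your version is shorter and self-contained.

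\textbf{Invertibility of $\Mat{Y}_1$.} Your Step~3 closes a gap the paper leaves open: the paper inverts $\Mat{Q}_T\Mat{\Psi}_{12}-\Mat{\Psi}_{22}=-\Mat{Y}_1$ without justification, while your zero-data KKT argument proves $\Mat{Y}_1$ nonsingular. In that step you should state $\Mat{Q}_t\succeq\Mat{0}$ explicitly, which is part of the paper's setup. Strict convexity in $\Mat{u}$ needs it together with $\Mat{R}_t\succ\Mat{0}$; without it, the zero-data problem can have nonzero KKT points and $\Mat{Y}_1$ can be singular.

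You are also right to read the $\Mat{\Sigma}_t$ inside the product in $\Mat{y}_3$ as $\Mat{\Sigma}_r$.
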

\begin{proof}
By Theorem \ref{thm:fwd_symplectic}, the relation between $\Mat{h}_{0}$ and $\Mat{\lambda}_{0}$ can be found by the following ``shooting'' system:
\begin{align}
\label{eqn:shoot}
\begin{bmatrix}
\Mat{h}_{T} \\ \Mat{\lambda}_{T}
\end{bmatrix} &= \left(\lprod_{t=1}^{T} \Mat{S}_t \right) \begin{bmatrix}
\Mat{h}_{0} \\ \Mat{\lambda}_{0}
\end{bmatrix} + \sum_{t=1}^{T} \left(\lprod_{r=t + 1}^{T} \Mat{S}_r\right) \Mat{b}_t,
& \Mat{Q}_{T} \Mat{h}_{T} &= \Mat{\lambda}_{T}.
\end{align}
Now we define: $\Mat{\Psi} = \begin{bmatrix}
\Mat{\Psi}_{11} & \Mat{\Psi}_{12} \\ \Mat{\Psi}_{21} & \Mat{\Psi}_{22}
\end{bmatrix} = \lprod_{t=1}^{T} \Mat{S}_t$, and $\Mat{\psi} = [\Mat{\psi}_1^\top, \Mat{\psi}_2^\top]^\top = \sum_{t=1}^{T} \left(\lprod_{r=t + 1}^{T} \Mat{S}_r\right) \Mat{b}_t$.
The above system can be rewritten as:
\begin{align*}
0 &= \Mat{Q}_{T}(\Mat{\Psi}_{11}  \Mat{h}_{0} + \Mat{\Psi}_{12} \Mat{\lambda}_{0} + \Mat{\psi}_{1}) - (\Mat{\Psi}_{21}  \Mat{h}_{0} + \Mat{\Psi}_{22} \Mat{\lambda}_{0} + \Mat{\psi}_{2}) \\
&= (\Mat{Q}_{T}\Mat{\Psi}_{11} - \Mat{\Psi}_{21}) \Mat{h}_{0} + \Mat{Q}_{T} \Mat{\psi}_1 - \Mat{\psi}_2 + (\Mat{Q}_{T}\Mat{\Psi}_{12} - \Mat{\Psi}_{22}) \Mat{\lambda}_{0}.
\end{align*}
This shows: $\Mat{\lambda}_{0} = -(\Mat{Q}_{T}\Mat{\Psi}_{12} - \Mat{\Psi}_{22})^{-1} (\Mat{Q}_{T}\Mat{\Psi}_{11} - \Mat{\Psi}_{21}) \Mat{h}_{0} + (\Mat{Q}_{T}\Mat{\Psi}_{12} - \Mat{\Psi}_{22})^{-1} (\Mat{Q}_{T} \Mat{\psi}_1 - \Mat{\psi}_2)$.
In fact, there is no need to compute every block of $\Mat{\Psi}$.
Instead, we can show that: $\Mat{\lambda}_0 = \Mat{Y}_1^{-1} (\Mat{Y}_2 \Mat{h}_{0} + \Mat{y}_3)$ where:
\begin{align*}
\begin{bmatrix}
\Mat{Y}_1 & \Mat{Y}_2
\end{bmatrix} &= \begin{bmatrix*}
\Mat{I} & \Mat{Q}_{T}
\end{bmatrix*} \begin{bmatrix}
\Mat{\Psi}_{22} & -\Mat{\Psi}_{21} \\
-\Mat{\Psi}_{12} & \Mat{\Psi}_{11}
\end{bmatrix} = \begin{bmatrix}
\Mat{I} & \Mat{Q}_{T}
\end{bmatrix} (\Mat{\Psi}^{-1})^\top  \\
&= \begin{bmatrix}
\Mat{I} & \Mat{Q}_{T}
\end{bmatrix} \left(\left(\lprod_{t=1}^{T} \Mat{S}_{t}\right)^{-1}\right)^\top
= \begin{bmatrix}
\Mat{I} & \Mat{Q}_{T}
\end{bmatrix} \left(\lprod_{t=1}^{T} \Mat{S}_{T-t+1}^{-1}\right)^\top \\
&= \begin{bmatrix}
\Mat{I} & \Mat{Q}_{T}
\end{bmatrix} \lprod_{t=1}^{T} (\Mat{S}_{t}^{-1})^\top,
\end{align*}
The second equality are due to Lemma \ref{lem:symplectic}: $\Mat{S}_{t}$ is symplectic for every $1 \le t \le t + T$, so is their product.
Similarly, we can simplify the calculation of $\Mat{y}_3$ as below:
\begin{align*}
\Mat{y}_3 &= \begin{bmatrix*}
\Mat{I} & \Mat{Q}_{T}
\end{bmatrix*} \begin{bmatrix}
-\Mat{\psi}_{2} \\
\Mat{\psi}_{1}
\end{bmatrix} = \begin{bmatrix}
\Mat{I} & \Mat{Q}_{T}
\end{bmatrix} \Mat{J}^\top \Mat{\psi}  \\
&= \begin{bmatrix}
\Mat{I} & \Mat{Q}_{T}
\end{bmatrix} \Mat{J}^\top \left( \sum_{t=1}^{T} \left(\lprod_{r=t + 1}^{T} \Mat{S}_r\right) \Mat{b}_t \right) \\
&= \begin{bmatrix}
\Mat{I} & \Mat{Q}_{T}
\end{bmatrix} \left(\sum_{t=1}^{T} \left[\left(\lprod_{r=t + 1}^{T} \Mat{S}_r\right)^{-1}\right]^\top \Mat{J}^\top  \Mat{b}_t\right) \\
&= \begin{bmatrix}
\Mat{I} & \Mat{Q}_{T}
\end{bmatrix} \left( \sum_{t=1}^{T} \left(\lprod_{r=t + 1}^{T} (\Mat{S}_r^{-1})^\top\right) \Mat{J}^\top  \Mat{b}_t \right),
\end{align*}
where $\Mat{J} = \begin{bmatrix} \Mat{0} & \Mat{I} \\ -\Mat{I} & \Mat{0}\end{bmatrix}$ is the standard symplectic matrix.
The fourth equality is due to symplecticity.

We let $\Mat{\beta}_t = \Mat{J}^\top \Mat{b}_t = [\Mat{0}^\top, -\Mat{g}_t^\top]^\top$.
Using symplecticity again, we can have the closed form of $(\Mat{S}_{t}^{-1})^\top$:
\begin{align*}
\Mat{\Sigma}_t := (\Mat{S}_{t}^{-1})^{\top} = \begin{bmatrix}
(\Mat{A}_t^\top)^{-1}  & (\Mat{A}_t^\top)^{-1} \Mat{Q}_{t-1}   \\ \Mat{G}_t (\Mat{A}_t^\top)^{-1}  & \Mat{A}_t + \Mat{G}_t (\Mat{A}_t^\top)^{-1} \Mat{Q}_{t-1} 
\end{bmatrix}.
\end{align*}
After solving $\Mat{\lambda}_{0}$ with $\Mat{h}_0 = \Mat{h}_{init}$, we can obtain $\Mat{u}_{1}$ by the KKT condition Eq. \eqref{eqn:kkt_ut}:
\begin{align*}
\Mat{u}_{1} = -\Mat{R}_{1}^{-1} (\Mat{B}_{1}^{\top} \Mat{\lambda}_{1} + \Mat{r}_{1}) = -\Mat{R}_{1}^{-1} (\Mat{B}_{1}^{\top} (\Mat{A}_{1}^{\top})^{-1} \Mat{\lambda}_{0} + \Mat{r}_{1}).
\end{align*}
\end{proof}

\begin{lemma} \label{lem:symplectic}
$\Mat{S}_t$ is a symplectic matrix: $- \Mat{J} \Mat{S}_t^\top \Mat{J} = 
\Mat{S}_t^{-1}$, where $\Mat{J}=
\begin{bmatrix}
\Mat{0} & \Mat{I}\\
-\Mat{I} & \Mat{0}
\end{bmatrix}$.
\end{lemma}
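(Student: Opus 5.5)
The plan is to factor $\Mat{S}_t$ into elementary matrices that are each obviously symplectic, and then use the fact that symplectic matrices form a group under multiplication. First I note that the condition $-\Mat{J}\Mat{S}_t^\top\Mat{J} = \Mat{S}_t^{-1}$ is equivalent to the standard identity $\Mat{S}_t^\top \Mat{J} \Mat{S}_t = \Mat{J}$. Indeed, $\Mat{J}^{-1} = \Mat{J}^\top = -\Mat{J}$, so $\Mat{S}_t^\top\Mat{J}\Mat{S}_t = \Mat{J}$ gives $(-\Mat{J}\Mat{S}_t^\top\Mat{J})\Mat{S}_t = -\Mat{J}\Mat{J} = \Mat{I}$. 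It therefore suffices to verify $\Mat{M}^\top\Mat{J}\Mat{M} = \Mat{J}$ for each factor $\Mat{M}$, because this property is closed under products: $(\Mat{M}_1\Mat{M}_2)^\top \Mat{J} (\Mat{M}_1\Mat{M}_2) = \Mat{M}_2^\top \Mat{J}\Mat{M}_2 = \Mat{J}$.

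Next I would write, with $\Mat{G}_t = \Mat{B}_t\Mat{R}_t^{-1}\Mat{B}_t^\top$ (symmetric, since $\Mat{R}_t$ is symmetric) and $\Mat{Q}_{t-1}$ symmetric,
\begin{align*}
\Mat{S}_t = \begin{bmatrix} \Mat{I} & -\Mat{G}_t \\ \Mat{0} & \Mat{I} \end{bmatrix}
\begin{bmatrix} \Mat{A}_t & \Mat{0} \\ \Mat{0} & (\Mat{A}_t^\top)^{-1} \end{bmatrix}
\begin{bmatrix} \Mat{I} & \Mat{0} \\ -\Mat{Q}_{t-1} & \Mat{I} \end{bmatrix}.
\end{align*}
To check this, I would multiply the last two factors first, which gives $\begin{bmatrix} \Mat{A}_t & \Mat{0} \\ -(\Mat{A}_t^\top)^{-1}\Mat{Q}_{t-1} & (\Mat{A}_t^\top)^{-1}\end{bmatrix}$. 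Left-multiplying this by the shear then reproduces exactly the blocks of $\Mat{S}_t$ stated in Theorem \ref{thm:fwd_symplectic}. Invertibility of $\Mat{A}_t$ is already assumed there.

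Then I would verify each factor directly.
\begin{itemize}
\item For a block-diagonal matrix $\diag(\Mat{A}, \Mat{A}^{-\top})$, a one-line block computation gives $\Mat{M}^\top\Mat{J}\Mat{M} = \begin{bmatrix}\Mat{0} & \Mat{A}^\top\Mat{A}^{-\top} \\ -\Mat{A}^{-1}\Mat{A} & \Mat{0}\end{bmatrix} = \Mat{J}$.
\item For an upper shear $\begin{bmatrix}\Mat{I} & \Mat{E}\\ \Mat{0} & \Mat{I}\end{bmatrix}$, the product $\Mat{M}^\top\Mat{J}\Mat{M}$ equals $\Mat{J}$ plus a bottom-right block $\Mat{E}^\top - \Mat{E}$, which vanishes exactly when $\Mat{E}$ is symmetric. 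Here $\Mat{E} = -\Mat{G}_t$, which is symmetric.
\item The lower shear is handled the same way, with the off-diagonal block $-\Mat{Q}_{t-1}$ symmetric.
\end{itemize}
Combining the three factors gives the lemma, including the boundary case $\Mat{Q}_0 = \Mat{0}$.

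There is no real obstacle here. The only points needing care are getting the factorization order right and remembering the symmetry hypotheses on $\Mat{Q}_{t-1}$ and $\Mat{R}_t$, without which the shears fail to be symplectic. As a by-product, this factorization also gives $(\Mat{S}_t^{-1})^\top$ in closed form. That is the matrix $\Mat{\Sigma}_t$ used in Theorem \ref{thm:rev_symplectic}, and it can be checked by computing $\Mat{J}\Mat{S}_t\Mat{J}^\top$ blockwise.
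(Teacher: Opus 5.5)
Your proof is correct, but it argues differently from the paper.

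The paper verifies symplecticity directly on $\Mat{S}_t$. It writes $\Mat{S}_t$ in $2\times 2$ block form and reduces $\Mat{S}_t^\top\Mat{J}\Mat{S}_t=\Mat{J}$ to the three block identities $\Mat{A}_s^\top\Mat{C}_s=\Mat{C}_s^\top\Mat{A}_s$, $\Mat{B}_s^\top\Mat{D}_s=\Mat{D}_s^\top\Mat{B}_s$ and $\Mat{A}_s^\top\Mat{D}_s-\Mat{C}_s^\top\Mat{B}_s=\Mat{I}$. It then checks each identity by hand; the first one needs the small trick of writing $\Mat{Q}=\Mat{Q}\Mat{A}^{-1}\Mat{A}$ to pull out $\Mat{C}_s^\top\Mat{A}_s$.

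You instead factor $\Mat{S}_t$ into three pieces: an upper shear with block $-\Mat{G}_t$, a block-diagonal factor $\diag(\Mat{A}_t,(\Mat{A}_t^\top)^{-1})$, and a lower shear with block $-\Mat{Q}_{t-1}$. You then use closure of the symplectic group under multiplication. This makes each check a one-line computation. It also isolates exactly where each hypothesis enters: symmetry of $\Mat{R}_t$ (hence of $\Mat{G}_t$) for one shear, symmetry of $\Mat{Q}_{t-1}$ for the other, and invertibility of $\Mat{A}_t$ for the diagonal factor.

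Your route also fills in two points the paper leaves implicit:
\begin{enumerate}
\item You show explicitly that $\Mat{S}_t^\top\Mat{J}\Mat{S}_t=\Mat{J}$, which is what the paper actually verifies, is equivalent to the stated form $-\Mat{J}\Mat{S}_t^\top\Mat{J}=\Mat{S}_t^{-1}$.
\item Inverting and transposing your three factors gives exactly the factorization of $\Mat{\Sigma}_t=(\Mat{S}_t^{-1})^\top$ that the paper uses for kernel fusion in Appendix~\ref{sec:more_codesign}. Your factorization of $\Mat{S}_t$ also appears there, but only as an implementation device, not in this proof.
\end{enumerate}

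In exchange, the paper's direct verification needs no factorization to be found in advance; it works mechanically from the given blocks.
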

\begin{proof}
Consider a matrix of the following form (with subscripts removed):
\begin{align*}
\Mat{S}=
\begin{bmatrix}
\Mat{A}+\Mat{G}(\Mat{A}^\top)^{-1}\Mat{Q} & -\Mat{G}(\Mat{A}^\top)^{-1}\\
-(\Mat{A}^\top)^{-1}\Mat{Q} & (\Mat{A}^\top)^{-1}
\end{bmatrix}
=
\begin{bmatrix}
\Mat{A}_s & \Mat{B}_s\\
\Mat{C}_s & \Mat{D}_s
\end{bmatrix},
\end{align*}
where \(\Mat{A}\) is invertible and \(\Mat{Q}^\top=\Mat{Q}\), \(\Mat{G}^\top=\Mat{G}\).

\(\Mat{S}\) being symplectic is equivalent to say $\Mat{S}^\top \Mat{J}\Mat{S} = \Mat{J}$.
This is further equivalent to the blockwise conditions:
\begin{align*}
\Mat{A}_s^\top \Mat{C}_s = \Mat{C}_s^\top \Mat{A}_s,\qquad
\Mat{B}_s^\top \Mat{D}_s = \Mat{D}_s^\top \Mat{B}_s,\qquad
\Mat{A}_s^\top \Mat{D}_s - \Mat{C}_s^\top \Mat{B}_s = \Mat{I}.
\end{align*}
Then we verify that $\Mat{A}_s^\top \Mat{C}_s = \Mat{C}_s^\top \Mat{A}_s$:
\begin{align*}
\Mat{A}_s^\top \Mat{C}_s &= (\Mat{A}^\top + \Mat{Q}\Mat{A}^{-1}\Mat{G})(-(\Mat{A}^\top)^{-1}\Mat{Q}) = -\Mat{A}^\top(\Mat{A}^\top)^{-1}\Mat{Q} - \Mat{Q}\Mat{A}^{-1}\Mat{G}(\Mat{A}^\top)^{-1}\Mat{Q}
= -\Mat{Q} - \Mat{Q}\Mat{A}^{-1}\Mat{G}(\Mat{A}^\top)^{-1}\Mat{Q} \\
&= -\Mat{Q}\Mat{A}^{-1} \Mat{A} - \Mat{Q}\Mat{A}^{-1}\Mat{G}(\Mat{A}^\top)^{-1}\Mat{Q}
= (-\Mat{Q}\Mat{A}^{-1}) (\Mat{A}+\Mat{G}(\Mat{A}^\top)^{-1}\Mat{Q})
= \Mat{C}_s^\top \Mat{A}_s,
\end{align*}
using the fact that $\Mat{Q}$ and $\Mat{G}$ are symmetric.
Similarly, $\Mat{B}_s^\top \Mat{D}_s = \Mat{D}_s^\top \Mat{B}_s$ due to associativity:
\begin{align*}
\Mat{B}_s^\top \Mat{D}_s = (-\Mat{A}^{-1}\Mat{G})(\Mat{A}^\top)^{-1} = -\Mat{A}^{-1}\Mat{G}(\Mat{A}^\top)^{-1} = \Mat{D}_s^\top \Mat{B}_s.
\end{align*}
The last step is to verify $\Mat{A}_s^\top \Mat{D}_s - \Mat{C}_s^\top \Mat{B}_s = \Mat{I}$:
\begin{align*}
\Mat{A}_s^\top \Mat{D}_s
&= (\Mat{A}^\top + \Mat{Q}\Mat{A}^{-1}\Mat{G})(\Mat{A}^\top)^{-1} = \Mat{I} + \Mat{Q}\Mat{A}^{-1}\Mat{G}(\Mat{A}^\top)^{-1},\\
\Mat{C}_s^\top \Mat{B}_s
&= (-\Mat{Q}\Mat{A}^{-1})(-\Mat{G}(\Mat{A}^\top)^{-1}) = \Mat{Q}\Mat{A}^{-1}\Mat{G}(\Mat{A}^\top)^{-1}.
\end{align*}
Therefore, $\Mat{A}_s^\top \Mat{D}_s - \Mat{C}_s^\top \Mat{B}_s = \Mat{I}$.
\end{proof}

\begin{remark}[Complexity analysis]
Solving LQR with either Riccati or symplectic iteration has the same theoretical computational complexity $O(Td^3)$ and memory complexity $O(d^2)$.
\end{remark}

Theorems \ref{thm:fwd_symplectic} and \ref{thm:rev_symplectic} lead to a new algorithm for solving LQRs, as illustrated in Algorithm \ref{alg:symplectic_it}.

\begin{algorithm}[h]
\caption{Finite-horizon LQR via symplectic iteration}
\label{alg:symplectic_it}
\begin{algorithmic}[1]
\Require $\{(\Mat{A}_t, \Mat{B}_t, \Mat{Q}_t, \Mat{R}_t, \Mat{r}_t)\}_{t=1}^{T}$, initial state $\Mat{h}_{init}$
\Ensure actions $\{\Mat{u}_t\}_{t=1}^{T}$, states $\{\Mat{h}_t\}_{t=1}^{T}$, co-states $\{\Mat{\lambda}_t\}_{t=0}^{T}$
\State Initialize $\begin{bmatrix}\Mat{Y}_1 & \Mat{Y}_2\end{bmatrix} \gets \begin{bmatrix} \Mat{I} & \Mat{Q}_{T}\end{bmatrix}$, $\Mat{y}_3 \gets \Mat{0}$.
\For{$t = T, \ldots, 1$} \Comment{reverse symplectic iteration}
  \State $\Mat{\Sigma}_t \gets \begin{bmatrix}(\Mat{A}_t^\top)^{-1}  & (\Mat{A}_t^\top)^{-1} \Mat{Q}_{t-1}   \\ \Mat{G}_t (\Mat{A}_t^\top)^{-1}  & \Mat{A}_t + \Mat{G}_t (\Mat{A}_t^\top)^{-1} \Mat{Q}_{t-1} 
\end{bmatrix}$
  \State $\begin{bmatrix}\Mat{Y}_1 & \Mat{Y}_2\end{bmatrix} \gets \begin{bmatrix}\Mat{Y}_1 & \Mat{Y}_2\end{bmatrix} \Mat{\Sigma}_{t}$
  \State $\Mat{y}_3 \gets \Mat{\Sigma}_t \Mat{y}_3 + \begin{bmatrix} \Mat{0} \\ -\Mat{B}_t\Mat{R}_t^{-1} \Mat{r}_t \end{bmatrix}$
\EndFor
\State $\Mat{y}_3 \gets \begin{bmatrix}\Mat{I} & \Mat{Q}_{T}\end{bmatrix} \Mat{y}_3$
\State Assign initial state $\Mat{h}_0 \gets \Mat{h}_{init}$
\State Compute initial co-state $\Mat{\lambda}_0 \gets \Mat{Y}_1^{-1} (\Mat{Y}_2 \Mat{h}_{init} + \Mat{y}_3)$
\For{$t = 1, 2, \ldots, T$} \Comment{forward symplectic iteration}
    \State $\begin{bmatrix}
        \Mat{h}_t \\ \Mat{\lambda}_t
        \end{bmatrix} \gets \begin{bmatrix}
        \Mat{A}_t + \Mat{G}_t (\Mat{A}_t^\top)^{-1} \Mat{Q}_{t-1} & -\Mat{G}_t (\Mat{A}_t^\top)^{-1}  \\ -(\Mat{A}_t^\top)^{-1} \Mat{Q}_{t-1} & (\Mat{A}_t^\top)^{-1}
        \end{bmatrix} \begin{bmatrix}
        \Mat{h}_{t-1} \\ \Mat{\lambda}_{t-1}
        \end{bmatrix} + \begin{bmatrix}
        -\Mat{B}_t \Mat{R}_t^{-1} \Mat{r}_t  \\ \Mat{0}
        \end{bmatrix}$
    \State $\Mat{u}_t \gets -\Mat{R}_t^{-1} (\Mat{B}_t^{\top} \Mat{\lambda}_t + \Mat{r}_t)$
\EndFor
\State \textbf{return} $\{\Mat{u}_t\}_{t=1}^{T}$, $\{\Mat{h}_t\}_{t=1}^{T}$, $\{\Mat{\lambda}_t\}_{t=0}^{T}$.
\end{algorithmic}
\end{algorithm}

\begin{remark}
Solving LQR with Algorithm \ref{alg:symplectic_it} only requires one reverse and one forward iteration to solve all $\{(\Mat{h}_t, \Mat{\lambda}_t, \Mat{u}_t)\}_{t=1}^{T}$ and $\Mat{\lambda}_0$ while Algorithm \ref{alg:riccati_it} and \ref{alg:riccati_it_kkt} need one forward iteration and two reverse iterations.
\end{remark}

\subsection{Differentiation through KKT Conditions}

Below, we formally prove Theorem \ref{thm:diff}.
The proof technique is based on \citet{amos2017optnet}.

\begin{theorem}
Consider a TTC layer with parameters $\{(\Mat{A}_{t}, \Mat{B}_{t}, \Mat{Q}_{t}, \Mat{R}_{t})\}_{t=1}^{T})$ and initial state $\Mat{h}_{init}$, suppose we have output $\Mat{u}_{out} = \mathtt{TTC}(\Mat{h}_{init}, \{(\Mat{A}_{t}, \Mat{B}_{t}, \Mat{Q}_{t}, \Mat{R}_{t})\}_{t=1}^{T})$, the loss is computed as $\ell(\Mat{u}_{out})$, and we have obtained $\nabla_{\Mat{u}_{out}} \ell$.
Then the gradients w.r.t. $\{(\Mat{A}_t, \Mat{B}_t, \Mat{Q}_t, \Mat{R}_t)\}$ are:
\begin{align*}
\nabla_{\Mat{A}_t} \ell
&= \Mat{\lambda}_t \wt{\Mat{h}}_{t-1}^{\top}
 + \wt{\Mat{\lambda}}_t \Mat{h}_{t-1}^{\top},
&
\nabla_{\Mat{B}_t} \ell
&= \Mat{\lambda}_t \wt{\Mat{u}}_t^{\top} + \wt{\Mat{\lambda}}_t \Mat{u}_t^{\top},
\\
\nabla_{\Mat{Q}_t} \ell
&= \frac{1}{2}(\Mat{h}_t \wt{\Mat{h}}_t^{\top} + \wt{\Mat{h}}_t \Mat{h}_t^{\top}),
& \nabla_{\Mat{R}_t} \ell
&= \frac{1}{2} (\Mat{u}_t \wt{\Mat{u}}_t^{\top} + \wt{\Mat{u}}_t \Mat{u}_t^{\top}),
\end{align*}
and gradient w.r.t. $\Mat{h}_{init}$ can be computed by $\nabla_{\Mat{h}_{init}} \ell = \wt{\Mat{\lambda}}_0$,
where $\{(\Mat{h}_0, \Mat{\lambda}_0)\} \cup \{(\Mat{h}_t, \Mat{\lambda}_t, \Mat{u}_t)\}_{t=1}^{T}$ are the solution to the KKT system associated with Eq. \eqref{eqn:ttc} and $\{(\wt{\Mat{h}}_0, \wt{\Mat{\lambda}}_0)\} \cup \{(\wt{\Mat{h}}_t, \wt{\Mat{\lambda}}_t, \wt{\Mat{u}}_t)\}_{t=1}^{T}$ are the solution to the KKT system corresponding to the following LQR system with $\wt{\Mat{h}}_0 = \Mat{0}$:
\begin{align}\label{eqn:bwd_lqr}
& \min \frac{1}{2}\sum_{t = 1}^{T} (\wt{\Mat{h}}_{t}^\top \Mat{Q}_t \wt{\Mat{h}}_{t} + \wt{\Mat{u}}_{t}^\top \Mat{R}_t \wt{\Mat{u}}_{t}) + \nabla_{\Mat{u}_{out}} \ell^\top \wt{\Mat{u}}_1 \\
& \text{s.t.} \quad \wt{\Mat{h}}_{t} = \Mat{A}_t \wt{\Mat{h}}_{t-1} + \Mat{B}_t \wt{\Mat{u}}_{t}, \quad 1 \le t \le T.
\end{align}
\end{theorem}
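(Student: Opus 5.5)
The plan is the implicit-differentiation argument of \citet{amos2017optnet}, applied to the stacked KKT system of Eq. \eqref{eqn:lag} with $\Mat{r}_t = \Mat{0}$.

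\textbf{Setting up the linear system.} Collect the primal--dual variables into $\Mat{z} = (\Mat{h}_0, \Mat{\lambda}_0, \Mat{h}_1, \Mat{u}_1, \Mat{\lambda}_1, \dots, \Mat{h}_T, \Mat{u}_T, \Mat{\lambda}_T)$. The conditions \eqref{eqn:kkt_lamt}--\eqref{eqn:kkt_h0} then read $\Mat{K}\Mat{z} = \Mat{c}$. Here $\Mat{K}$ is the Hessian of the Lagrangian $L$ with respect to all variables. Since $L$ is quadratic, $\Mat{K}$ is symmetric. It has $\Mat{Q}_t$ and $\Mat{R}_t$ on the primal diagonal blocks and $\Mat{A}_t$, $\Mat{B}_t$, $-\Mat{I}$ on the primal--dual off-diagonal blocks. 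The right-hand side $\Mat{c}$ depends only on $\Mat{h}_{init}$, through the row $\partial L/\partial \Mat{\lambda}_0$. I will assume $\Mat{K}$ is invertible, which holds for instance when $\Mat{R}_t \succ 0$ and the $\Mat{A}_t$ are invertible, as already assumed in Theorem \ref{thm:fwd_symplectic}. The output is $\Mat{u}_{out} = \Mat{u}_1 = \Mat{E}^\top \Mat{z}$, where $\Mat{E}$ selects the $\Mat{u}_1$ block.

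\textbf{Differentiating and introducing the adjoint.} Differentiating gives $\Mat{K}\, d\Mat{z} = d\Mat{c} - (d\Mat{K})\Mat{z}$. Hence
\begin{align*}
d\ell = \nabla_{\Mat{u}_{out}}\ell^\top \Mat{E}^\top \Mat{K}^{-1}\big(d\Mat{c} - (d\Mat{K})\Mat{z}\big).
\end{align*}
Define $\wt{\Mat{z}}$ by $\Mat{K}\wt{\Mat{z}} = -\Mat{E}\nabla_{\Mat{u}_{out}}\ell$. By symmetry of $\Mat{K}$, this gives $d\ell = \wt{\Mat{z}}^\top (d\Mat{K})\Mat{z} - \wt{\Mat{z}}^\top d\Mat{c}$.

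The key identification is that $\Mat{K}\wt{\Mat{z}} = -\Mat{E}\nabla_{\Mat{u}_{out}}\ell$ is exactly the KKT system of the dual LQR \eqref{eqn:bwd_lqr}, with $\wt{\Mat{h}}_{init} = \Mat{0}$ and linear cost $\Mat{r}_1 = \nabla_{\Mat{u}_{out}}\ell$, $\Mat{r}_t = \Mat{0}$ for $t \ge 2$. The coefficient matrix is unchanged because the dynamics and quadratic costs are the same. The right-hand side changes in two ways: the $\Mat{u}_1$ row \eqref{eqn:kkt_ut} acquires $-\Mat{r}_1$ after moving it across, and the $\Mat{h}_{init}$ entry becomes zero. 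Consequently the blocks of $\wt{\Mat{z}}$ are precisely $(\wt{\Mat{h}}_t, \wt{\Mat{u}}_t, \wt{\Mat{\lambda}}_t)$.

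\textbf{Reading off the gradients.} Because $\Mat{K}$ is the Hessian of the quadratic Lagrangian, the bilinear form $\wt{\Mat{z}}^\top (d\Mat{K}) \Mat{z}$ can be written out directly:
\begin{align*}
\sum_t \big[\wt{\Mat{h}}_t^\top d\Mat{Q}_t \Mat{h}_t + \wt{\Mat{u}}_t^\top d\Mat{R}_t \Mat{u}_t + \wt{\Mat{\lambda}}_t^\top (d\Mat{A}_t \Mat{h}_{t-1} + d\Mat{B}_t \Mat{u}_t) + \Mat{\lambda}_t^\top (d\Mat{A}_t \wt{\Mat{h}}_{t-1} + d\Mat{B}_t \wt{\Mat{u}}_t)\big].
\end{align*}
Matching coefficients via the trace inner product gives $\nabla_{\Mat{A}_t}\ell = \Mat{\lambda}_t\wt{\Mat{h}}_{t-1}^\top + \wt{\Mat{\lambda}}_t\Mat{h}_{t-1}^\top$, and similarly for $\Mat{B}_t$. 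For $\Mat{Q}_t$ and $\Mat{R}_t$, the perturbation is restricted to symmetric matrices, which yields the symmetrized outer products with factor $\frac{1}{2}$. Finally, $d\Mat{c}$ has $d\Mat{h}_{init}$ only in the $\Mat{\lambda}_0$ row, so the term $-\wt{\Mat{z}}^\top d\Mat{c}$ produces a multiple of $\wt{\Mat{\lambda}}_0^\top d\Mat{h}_{init}$.

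\textbf{Main obstacle.} I expect the hard part to be bookkeeping of signs and factors rather than any conceptual step. Three places need care:
\begin{itemize}
\item The $\Mat{h}_{init}$ entry of $\Mat{c}$ carries a sign determined by how the constraint $\Mat{\lambda}_0^\top(\Mat{h}_{init} - \Mat{h}_0)$ is written. Combined with the minus sign in the definition of $\wt{\Mat{z}}$, this should produce $+\wt{\Mat{\lambda}}_0$, but it must be checked.
\item The affine term must enter the dual KKT system with the same sign as the forcing $-\Mat{E}\nabla_{\Mat{u}_{out}}\ell$.
\item The $\frac{1}{2}$ in the quadratic costs must be handled consistently with the symmetric-matrix gradient convention.
\end{itemize}
To settle all three, I would first verify the signs on the scalar case $T = 1$, $d = 1$, before writing out the general blocks.
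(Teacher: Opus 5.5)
Your proposal is correct and is essentially the paper's argument. Both differentiate the symmetric KKT system implicitly, define an adjoint by $\Mat{K}\wt{\Mat{z}} = -\Mat{E}\nabla_{\Mat{u}_{out}}\ell$ and recognize it as the KKT system of the dual LQR (zero initial state, linear cost $\nabla_{\Mat{u}_{out}}\ell^\top \wt{\Mat{u}}_1$), then read the gradients off $\wt{\Mat{z}}^\top (d\Mat{K})\Mat{z} - \wt{\Mat{z}}^\top d\Mat{c}$. The signs you flagged do work out: with the constraint written as $\Mat{\lambda}_0^\top(\Mat{h}_{init} - \Mat{h}_0)$, the right-hand side carries $-\Mat{h}_{init}$ in the $\Mat{\lambda}_0$ row, so $-\wt{\Mat{z}}^\top d\Mat{c} = +\wt{\Mat{\lambda}}_0^\top d\Mat{h}_{init}$, and restricting to symmetric perturbations gives the $\frac{1}{2}$-symmetrized gradients for $\Mat{Q}_t$ and $\Mat{R}_t$.
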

\begin{proof}
To see this more easily, we consider a matrix form of the KKT system in Eq. \eqref{eqn:lcqp}:
\begin{align*}
L = \frac{1}{2} \Mat{x}^\top \Mat{C} \Mat{x} + \Mat{\xi}^\top (\Mat{F} \Mat{x} - \Mat{b})
\end{align*}
where we concatenate all states and actions in $\Mat{x} = [\Mat{h}_0^\top, \cdots, \Mat{h}_T^\top, \Mat{u}_1, \cdots, \Mat{u}_T^\top]^\top \in \real^{(2T+1)d}$, all co-states in $\Mat{\xi} = [\Mat{\lambda}_0^\top, \cdots, \Mat{\lambda}_T^\top]^\top \in \real^{(T+1)d}$, and define cost matrix $\Mat{C} \in \real^{(2T+1)d \times (2T+1)d}$ and linear constraints $\Mat{F} \in \real^{(T+1)d \times (2T+1)d}$ as below\footnote{Note that although flipping signs in $\Mat{F}$ and $\Mat{b}$ yields an equivalent system, we choose the sign convention to be consistent with Eq. \eqref{eqn:lag} and the KKT conditions in Eqs. \eqref{eqn:kkt_lamt}-\eqref{eqn:kkt_h0} so that the signs of the optimal $\Mat{\lambda}_t$'s for both systems are aligned.}:
\begin{align*}
\Mat{C} &= \begin{bmatrix}
\Mat{0} &        &        &              \\
  & \frac{\Mat{Q}_{1} + \Mat{Q}_{1}^\top}{2} &        &             \\
  &        & \ddots &               \\
  &        &        &      \frac{\Mat{R}_{1} + \Mat{R}_{1}^\top}{2} \\
  &        &        &               & \ddots \\
\end{bmatrix}, &
\Mat{F} &= \begin{bmatrix}
-\Mat{I}      & \Mat{0}         & \Mat{0}         & \cdots & \Mat{0}         & \vline & \Mat{0}         & \Mat{0}         & \cdots & \Mat{0} \\
\Mat{A}_{1} & -\Mat{I}   & \Mat{0}         & \cdots & \Mat{0}         & \vline & \Mat{B}_{1} & \Mat{0}       & \cdots & \Mat{0} \\
\Mat{0}            & \Mat{A}_{2} & -\Mat{I} & \cdots & \Mat{0}         & \vline & \Mat{0}         & \Mat{B}_{2} & \cdots & \Mat{0} \\
\vdots       & \vdots    & \ddots    & \ddots & \vdots    & \vline & \vdots    & \vdots    & \ddots & \vdots \\
\Mat{0}            & \Mat{0}         & \cdots    & \Mat{A}_{T} & -\Mat{I} & \vline & \Mat{0}         & \Mat{0}         & \cdots & \Mat{B}_{T}
\end{bmatrix}, &
\Mat{b} &= \begin{bmatrix}
-\Mat{h}_{init} \\
\Mat{0} \\
\Mat{0} \\
\vdots \\
\Mat{0}
\end{bmatrix}.
\end{align*}
The Eqs. \eqref{eqn:kkt_lamt}-\eqref{eqn:kkt_h0} can be expressed as:
\begin{align*}
\nabla_{\Mat{x}} L = \Mat{C}\Mat{x} + \Mat{F}^\top \Mat{\xi} &= \Mat{0}, &
\nabla_{\Mat{\xi}} L &= \Mat{F} \Mat{x} - \Mat{b} = \Mat{0}.
\end{align*}
Consider a scalar parameter $\theta$ from any of $\{(\Mat{A}_{t}, \Mat{B}_{t}, \Mat{Q}_{t}, \Mat{R}_{t})\}_{t=1}^{T}$ or $\Mat{h}_{init}$, we take derivatives w.r.t. $\theta$ from both sides of the above equations:
\begin{align*}
\frac{\partial}{\partial \theta}\Mat{C}\Mat{x} + \Mat{C} \frac{\partial}{\partial \theta}\Mat{x} + \frac{\partial}{\partial \theta} \Mat{F}^\top \Mat{\xi} + \Mat{F}^\top \frac{\partial}{\partial \theta} \Mat{\xi} &= \Mat{0}, &
\frac{\partial}{\partial \theta} \Mat{F} \Mat{x} + \Mat{F} \frac{\partial}{\partial \theta} \Mat{x} - \frac{\partial}{\partial \theta} \Mat{b} = \Mat{0},
\end{align*}
which can be rewritten in the following matrix form:
\begin{align}
\label{eqn:diff_fwd_kkt}
\begin{bmatrix}
\Mat{C} & \Mat{F}^\top \\
\Mat{F} & \Mat{0}
\end{bmatrix}
\begin{bmatrix}
\frac{\partial}{\partial \theta}\Mat{x} \\[0.5em]
\frac{\partial}{\partial \theta} \Mat{\xi}
\end{bmatrix} = \begin{bmatrix}
-\frac{\partial}{\partial \theta}\Mat{C}\Mat{x} - \frac{\partial}{\partial \theta} \Mat{F}^\top \Mat{\xi} \\[0.5em]
-\frac{\partial}{\partial \theta} \Mat{F} \Mat{x} + \frac{\partial}{\partial \theta}  \Mat{b}
\end{bmatrix}.
\end{align}
This implies that we can solve the following equation to obtain $\left[\frac{\partial}{\partial \theta}\Mat{x}^\top, 
\frac{\partial}{\partial \theta} \Mat{\xi}^\top
\right]^\top$:
\begin{align*}
\begin{bmatrix}
\frac{\partial}{\partial \theta}\Mat{x} \\[0.5em]
\frac{\partial}{\partial \theta} \Mat{\xi}
\end{bmatrix} = -\begin{bmatrix}
\Mat{C} & \Mat{F}^\top \\
\Mat{F} & \Mat{0}
\end{bmatrix}^{-1} \begin{bmatrix}
\frac{\partial}{\partial \theta}\Mat{C}\Mat{x} + \frac{\partial}{\partial \theta} \Mat{F}^\top \Mat{\xi} \\[0.5em]
\frac{\partial}{\partial \theta} \Mat{F} \Mat{x} - \frac{\partial}{\partial \theta}  \Mat{b}
\end{bmatrix}.
\end{align*}
Note that our goal is to obtain $\frac{\partial}{\partial \theta} \ell$. By chain rule, we obtain:
\begin{align*}
\frac{\partial}{\partial \theta} \ell = \begin{bmatrix}
\nabla_{\Mat{x}} \ell \\[0.5em]
\nabla_{\Mat{\xi}} \ell
\end{bmatrix}^\top
\begin{bmatrix}
\frac{\partial}{\partial \theta}\Mat{x} \\[0.5em]
\frac{\partial}{\partial \theta} \Mat{\xi}
\end{bmatrix} =
\underbrace{-\begin{bmatrix}
\nabla_{\Mat{x}} \ell \\[0.5em]
\nabla_{\Mat{\xi}} \ell
\end{bmatrix}^\top \begin{bmatrix}
\Mat{C} & \Mat{F}^\top \\
\Mat{F} & \Mat{0}
\end{bmatrix}^{-1}}_{\begin{bmatrix}\wt{\Mat{x}}^\top & \wt{\Mat{\xi}}^\top\end{bmatrix}}
\underbrace{\begin{bmatrix}
\frac{\partial}{\partial \theta}\Mat{C}\Mat{x} + \frac{\partial}{\partial \theta} \Mat{F}^\top \Mat{\xi} \\[0.5em]
\frac{\partial}{\partial \theta} \Mat{F} \Mat{x} - \frac{\partial}{\partial \theta}  \Mat{b}
\end{bmatrix}}_{\Mat{d}_{\theta}},
\end{align*}
where we group the left two terms as $\left[\wt{\Mat{x}}^\top, \wt{\Mat{\xi}}^\top\right] = \begin{bmatrix}
-\nabla_{\Mat{x}} \ell \\
-\nabla_{\Mat{\xi}} \ell
\end{bmatrix}^\top \begin{bmatrix}
\Mat{C} & \Mat{F}^\top \\
\Mat{F} & \Mat{0}
\end{bmatrix}^{-1}$ which are independent of the choice of $\theta$ and denote the remaining $\theta$-dependent term as $\Mat{d}_{\theta} = \begin{bmatrix}
\frac{\partial}{\partial \theta}\Mat{C}\Mat{x} + \frac{\partial}{\partial \theta} \Mat{F}^\top \Mat{\xi} \\
\frac{\partial}{\partial \theta} \Mat{F} \Mat{x} - \frac{\partial}{\partial \theta}  \Mat{b}
\end{bmatrix}$.

First, $\left[\wt{\Mat{x}}^\top, \wt{\Mat{\xi}}^\top\right]$ is the solution to the following system:
\begin{align*}
\begin{bmatrix}
\Mat{C} & \Mat{F}^\top \\
\Mat{F} & \Mat{0}
\end{bmatrix} \begin{bmatrix}
\wt{\Mat{x}} \\ \wt{\Mat{\xi}}
\end{bmatrix} = \begin{bmatrix}
-\nabla_{\Mat{x}} \ell \\
-\nabla_{\Mat{\xi}} \ell
\end{bmatrix},
\end{align*}
which corresponds to the KKT condition of the following Lagrangian function:
\begin{align}
\label{eqn:bwd_lag}
\wt{L} = \frac{1}{2} \wt{\Mat{x}}^\top \Mat{C} \wt{\Mat{x}} + \nabla_{\Mat{x}} \ell^\top \wt{\Mat{x}}
+ \wt{\Mat{\xi}}^\top (\Mat{F} \wt{\Mat{x}} + \nabla_{\Mat{\xi}} \ell).
\end{align}
Since $\ell$ is only computed from $\Mat{u}_1$, $\nabla_{\Mat{\xi}} \ell = \Mat{0}$ and  $\nabla_{\Mat{x}} \ell^\top = [\Mat{0}_{(T+1)d}^\top, \nabla_{\Mat{u}_{out}} \ell^\top, \Mat{0}_{(T-1)d}^\top]$.
By rewriting $\wt{\Mat{x}} = [\wt{\Mat{h}}_0^\top, \cdots, \wt{\Mat{h}}_T^\top, \wt{\Mat{u}}_1^\top, \cdots, \wt{\Mat{u}}_T^\top]^\top$ and $\wt{\Mat{\xi}} = [\wt{\Mat{\lambda}}_0^\top, \cdots, \wt{\Mat{\lambda}}_T^\top]^\top$, the Lagrangian function in Eq. \eqref{eqn:bwd_lag} corresponds to the LQR problem in Eq. \eqref{eqn:bwd_lqr}.

Now we analyze $\Mat{d}_{\theta}$. Suppose $\theta := \Mat{A}_{t, ij}$, then $\frac{\partial}{\partial \theta}\Mat{C} = \Mat{0}$, $\frac{\partial}{\partial \theta}\Mat{b} = \Mat{0}$, and $\left[\frac{\partial}{\partial \theta}\Mat{F}\right]_{td+i, (t-1)d+j} = 1$ with all other entries being zero.
This gives
\begin{align*}
\frac{\partial \ell}{\partial \Mat{A}_{t, ij}} = \begin{bmatrix}\wt{\Mat{x}}^\top & \wt{\Mat{\xi}}^\top\end{bmatrix} \begin{bmatrix} \frac{\partial}{\partial \theta}\Mat{F}^\top \Mat{\xi} \\ \frac{\partial}{\partial \theta}\Mat{F} \Mat{x} \end{bmatrix} 
= \Mat{\xi}_{td+i} \wt{\Mat{x}}_{(t-1)d+j} + \wt{\Mat{\xi}}_{td+i} \Mat{x}_{(t-1)d+j}.
\end{align*}
Hence $\frac{\partial}{\partial \Mat{A}_{t}} \ell = \Mat{\lambda}_{t} \wt{\Mat{h}}_{t-1}^\top + \wt{\Mat{\lambda}}_{t} \Mat{h}_{t-1}^\top$.

Suppose $\theta := \Mat{B}_{t, ij}$, then $\frac{\partial}{\partial \theta}\Mat{C} = \Mat{0}$, $\frac{\partial}{\partial \theta}\Mat{b} = \Mat{0}$, and $\left[\frac{\partial}{\partial \theta}\Mat{F}\right]_{td+i, (T+t)d+j} = 1$. Similarly, we have
\begin{align*}
\frac{\partial \ell}{\partial \Mat{B}_{t, ij}} = \Mat{\xi}_{td+i} \wt{\Mat{x}}_{(T+t)d+j} + \wt{\Mat{\xi}}_{td+i} \Mat{x}_{(T+t)d+j}
\end{align*}
In matrix form, it is equivalent to $\frac{\partial}{\partial \Mat{B}_{t}} \ell = \Mat{\lambda}_{t} \wt{\Mat{u}}_{t}^\top + \wt{\Mat{\lambda}}_{t} \Mat{u}_{t}^\top$.

Likewise, consider $\theta := \Mat{Q}_{t, ij}$ (or $\theta := \Mat{R}_{t, ij}$), then $\frac{\partial}{\partial \theta}\Mat{F} = \Mat{0}$, $\frac{\partial}{\partial \theta}\Mat{b} = \Mat{0}$, and $\left[\frac{\partial}{\partial \theta}\Mat{C}\right]_{td+i, td+j} = \left[\frac{\partial}{\partial \theta}\Mat{C}\right]_{td+j, td+i} = 1/2$ (or $\left[\frac{\partial}{\partial \theta}\Mat{C}\right]_{(T+t)d+i, (T+t)d+j} = \left[\frac{\partial}{\partial \theta}\Mat{C}\right]_{(T+t)d+j, (T+t)d+i} = 1/2$).
Henceforth, we can derive:
\begin{align*}
\frac{\partial \ell}{\partial \Mat{Q}_{t, ij}} &= \frac{1}{2}(\Mat{x}_{td+i}\wt{\Mat{x}}_{td+j} + \wt{\Mat{x}}_{td+i} \Mat{x}_{td+j}),
&
\frac{\partial \ell}{\partial \Mat{R}_{t, ij}}  &= \frac{1}{2} (\Mat{x}_{(T+t)d+i} \wt{\Mat{x}}_{(T+t)d+j} + \wt{\Mat{x}}_{(T+t)d+i} \Mat{x}_{(T+t)d+j}),
\end{align*}
i.e., $\frac{\partial}{\partial \Mat{Q}_{t}} \ell = \frac{1}{2} ( \Mat{h} \wt{\Mat{h}}_t^\top + \wt{\Mat{h}}_t \Mat{h}^\top)$ and $\frac{\partial}{\partial \Mat{R}_{t}} \ell = \frac{1}{2}(\Mat{u} \wt{\Mat{u}}_t^\top  + \wt{\Mat{u}}_t \Mat{u}^\top)$, respectively.

Finally, we examine $\theta := \Mat{h}_{init, i}$.
In this case, $\frac{\partial}{\partial \theta}\Mat{C} = \Mat{0}$, $\frac{\partial}{\partial \theta}\Mat{F} = \Mat{0}$, and $\frac{\partial}{\partial \theta}\Mat{b}_{i} = 1$ while $\frac{\partial}{\partial \theta}\Mat{b}_{j} = 0$ for $i \ne j$.
Therefore, $\frac{\partial}{\partial \Mat{h}_{init, i}} \ell = \Mat{\xi}_i$, then $\frac{\partial}{\partial \Mat{h}_{init}} \ell = \wt{\Mat{\lambda}}_0$.
\end{proof}

\begin{algorithm}[h]
\caption{Parallelized implementation of TTC layer's forward pass}
\label{alg:symplectic_it_fwd_cuda}
\begin{algorithmic}[1]
\Require $\{(\Mat{A}_t, \Mat{B}_t, \Mat{Q}_t, \Mat{R}_t)\}_{t=1}^{T}$, initial state $\Mat{h}_{init}$, block size $b$.
\Ensure First-step actions $\Mat{u}_{out}$ and caches for backward pass.
\State Initialize $\Mat{Y}_1 \gets \Mat{I}$, $\Mat{Y}_2 \gets \Mat{Q}_{T}$, $\Mat{Y}_3 \gets \Mat{0}_{d \times d}$ on HBM.
\For{$i = 1, \ldots, \lceil d / b \rceil$ in parallel}
\State Load on chip: $\Mat{Y}_1^{s} \gets$  $\Mat{Y}_1[(i-1)b:ib, :]$.
\State Load on chip: $\Mat{Y}_2^{s} \gets$ load $\Mat{Y}_2[(i-1)b:ib, :]$.
\State Initialize on chip: $\Mat{Y}_3^{s} \gets \Mat{0}_{d \times d}$.
\For{$t = T, \ldots, 1$} \Comment{backward symlectic iteration}
  \State Load and compute $\Mat{A}_t$, $\Mat{B}_t$, $\Mat{Q}_t$, $\Mat{R}_t$ on chip.
  \State $\Mat{Y}_3^{s} \gets \mathtt{matmul}(\mathtt{matmul}(\Mat{Y}_2^{s}, \Mat{B}_{t}), \Mat{R}_{t}^{-1})$
  \State $\Mat{Y}_1^{s} \gets \Mat{Y}_1^{s} + \mathtt{matmul}(\Mat{Y}_3^{s}, \Mat{B}_{t}^{\top})$
  \State $\Mat{Y}_1^{s} \gets \mathtt{matmul}(\Mat{Y}_1^{s}, (\Mat{A}_{t}^{-1})^\top)$
  \State $\Mat{Y}_2^{s} \gets \mathtt{matmul}(\Mat{Y}_2^{s}, \Mat{A}_{t})$
  \State $\Mat{Y}_2^{s} \gets \Mat{Y}_2^{s} + \mathtt{matmul}(\Mat{Y}_1^{s}, \Mat{Q}_{t-1})$
  \State $\Mat{D}^{s} \gets \mathtt{max}(\mathtt{norm}(\Mat{Y}_{1}^{s}, \mathtt{dim}=1), \mathtt{norm}(\Mat{Y}_{2}^{s}, \mathtt{dim}=1))$ \Comment{row-wise normalization}
  \State $\Mat{Y}_1^{s} \gets \mathtt{div}(\Mat{Y}_1^{s}, \Mat{D}^{s}[:, \mathtt{None}])$
  \State $\Mat{Y}_2^{s} \gets \mathtt{div}(\Mat{Y}_2^{s}, \Mat{D}^{s}[:, \mathtt{None}])$
  \State $\Mat{Y}_3^{s} \gets \mathtt{div}(\Mat{Y}_3^{s}, \Mat{D}^{s}[:, \mathtt{None}])$
\EndFor
\State Write to HBM: $\Mat{Y}_1[(i-1)b:ib, :] \gets \Mat{Y}_1^{s}$.
\State Write to HBM: $\Mat{Y}_2[(i-1)b:ib, :] \gets \Mat{Y}_2^{s}$.
\State Write to HBM: $\Mat{Y}_3[(i-1)b:ib, :] \gets \Mat{Y}_3^{s}$.
\EndFor
\State $(\Mat{L}, \Mat{U}) \gets \mathtt{lu\_factor}(\Mat{Y}_1)$.
\State $\Mat{P} \gets \mathtt{lu\_solve}(\Mat{L}, \Mat{U}, \Mat{Y}_2)$.
\State $\Mat{\lambda}_0 \gets \mathtt{matmul}(\Mat{P}, \Mat{h}_{init})$.
\State $\Mat{u}_1 \gets \mathtt{matmul}(\Mat{R}_1^{-1}, \mathtt{matmul}(\Mat{B}_1, \Mat{\lambda}_0))$.
\State Save $(\Mat{L}, \Mat{U})$, $\Mat{Y}_3$, $\Mat{\lambda}_0$ for backward.
\State Write to HBM: $\Mat{u}_{out} \gets \Mat{u}_1$.
\end{algorithmic}
\end{algorithm}

\begin{algorithm}[h]
\caption{Parallelized implementation of TTC layer's backward pass}
\label{alg:symplectic_it_bwd_cuda}
\begin{algorithmic}[1]
\Require $\{(\Mat{A}_t, \Mat{B}_t, \Mat{Q}_t, \Mat{R}_t)\}_{t=1}^{T}$, initial state $\Mat{h}_{init}$, gradient $\nabla_{\Mat{u}_{out}} \ell$ and caches from forward pass $(\Mat{L}, \Mat{U})$, $\Mat{Y}_3$, $\Mat{\lambda}_0$.
\Ensure Gradients $\{(\Mat{G}_{A_t}, \Mat{G}_{B_t}, \Mat{G}_{Q_t}, \Mat{G}_{R_t})\}_{t=1}^{T}$ and $\Mat{G}_{h_{init}}$ on HBM.
\State Initialize $\Mat{h} \gets \Mat{h}_{init}$, $\Mat{\lambda} \gets \Mat{\lambda}_{0}$, $\wt{\Mat{h}} \gets \Mat{h}_{init}$ on chip.
\State Load $(\Mat{L}, \Mat{U})$, $\Mat{Y}_3$, $\Mat{\lambda}_0$ from foward cache.
\State $\Mat{y}_3 \gets \mathtt{matmul}(\Mat{Y}_3, -\nabla_{\Mat{u}_{out}} \ell)$.
\State Initialize $\wt{\Mat{\lambda}} \gets \mathtt{lu\_solve}( \Mat{L}, \Mat{U}, \Mat{y}_3)$ on chip.
\State Write to HBM: $\Mat{G}_{h_{init}} \gets \wt{\Mat{\lambda}}$.
\For{$t = 1, \ldots, T$} \Comment{forward symlectic iteration}
\State Load and compute $\Mat{A}_t$, $\Mat{B}_t$, $\Mat{Q}_t$, $\Mat{R}_t$ on chip.
\State $\Mat{\lambda} \gets \Mat{\lambda} - \mathtt{matmul}(\Mat{Q}_{t-1}, \Mat{h})$
\State $\wt{\Mat{\lambda}} \gets \wt{\Mat{\lambda}} - \mathtt{matmul}(\Mat{Q}_{t-1}, \wt{\Mat{h}})$
\State $\Mat{\lambda} \gets \mathtt{matmul}((\Mat{A}_{t}^{-1})^{\top}, \Mat{\lambda})$
\State $\wt{\Mat{\lambda}} \gets \mathtt{matmul}((\Mat{A}_{t}^{-1})^{\top}, \wt{\Mat{\lambda}})$
\State Write to HBM: $\Mat{G}_{A_{t}} \gets \Mat{\lambda} \wt{\Mat{h}}^{\top} + \wt{\Mat{\lambda}} \Mat{h}^{\top}$.
\State $\Mat{u} \gets -\mathtt{matmul}(\Mat{R}_t^{-1}, \mathtt{matmul}(\Mat{B}_{t}^{\top}, \Mat{\lambda}))$
\State $\wt{\Mat{u}} \gets -\mathtt{matmul}(\Mat{R}_t^{-1}, \mathtt{matmul}(\Mat{B}_{t}^{\top}, \wt{\Mat{\lambda}}))$
\If{$t = 1$}
\State $\wt{\Mat{u}} \gets \wt{\Mat{u}} - \mathtt{matmul}(\Mat{R}_{t}^{-1}, \nabla_{\Mat{u}_{out}} \ell)$
\EndIf
\State Write to HBM: $\Mat{G}_{B_{t}} \gets \Mat{\lambda} \wt{\Mat{u}}^{\top} + \wt{\Mat{\lambda}} \Mat{u}^{\top}$.
\State $\Mat{h} \gets \mathtt{matmul}(\Mat{A}_{t}, \Mat{h}) + \mathtt{matmul}(\Mat{B}_{t}, \Mat{u})$
\State $\wt{\Mat{h}} \gets \mathtt{matmul}(\Mat{A}_{t}, \wt{\Mat{h}}) + \mathtt{matmul}(\Mat{B}_{t}, \wt{\Mat{u}})$
\State Write to HBM: $\Mat{G}_{Q_{t}} \gets (\Mat{h} \wt{\Mat{h}}^{\top} + \wt{\Mat{h}} \Mat{h}^{\top}) / 2$.
\State Write to HBM: $\Mat{G}_{R_{t}} \gets (\Mat{u} \wt{\Mat{u}}^{\top} + \wt{\Mat{u}} \Mat{u}^{\top}) / 2$.
\EndFor
\end{algorithmic}
\end{algorithm}

\section{Hardware Co-Designs and Optimization for TTC}
\label{sec:more_codesign}

In this section, we elaborate on the techniques to further improve the hardware efficiency of our LQR solver by taking advantage of Theorem \ref{thm:symplectic_form}.
The pseudo-code for forward and backward passes are described in Algorithms \ref{alg:symplectic_it_fwd_cuda} and \ref{alg:symplectic_it_bwd_cuda}.

\paragraph{Kernel Fusion.}
The algorithm implied by Theorem \ref{thm:symplectic_form} is amenable to kernel fusion, since the dominant computation in the cumulative matrix product involves only basic tensor operations.
We parallelize Eq. \eqref{eqn:symplectic_it} by letting each kernel maintain a row block of $[\Mat{Y}_1, \Mat{Y}_2]$ and $\Mat{y}_3$ on SRAM while loading $\Mat{\Sigma}_t$ to perform matrix product in a loop.
In fact, neither $\Mat{\Sigma}_t$ nor $\Mat{G}_t$ needs to be explicitly instantiated in HBM.
Instead, $\Mat{\Sigma}_t$ admits the following factorization:
\begin{align*}
\Mat{\Sigma}_t =
\begin{bmatrix}
\Mat{I} & \Mat{0} \\
\Mat{B}_t \Mat{R}_t^{-1} \Mat{B}_t^{\top} & \Mat{I}
\end{bmatrix}
\begin{bmatrix}
(\Mat{A}_t^{-1})^{\top} & \Mat{0} \\
\Mat{0} & \Mat{A}_t
\end{bmatrix}
\begin{bmatrix}
\Mat{I} & \Mat{Q}_{t-1} \\
\Mat{0} & \Mat{I}
\end{bmatrix}.
\end{align*}
This allows the CUDA kernel to stream $\Mat{B}_t$, $\Mat{R}_t$, $\Mat{A}_t$, and $\Mat{Q}_{t-1}$ sequentially into on-chip SRAM to join the computation of each block in $\Mat{\Sigma}_t$.
By fusing all tensor operations into a single kernel, our new solver further reduces HBM access and thereby alleviates the I/O latency.
Finally, once $\Mat{Y}_1$, $\Mat{Y}_2$, and $\Mat{y}_3$ are obtained, we leverage dense linear algebra routines in the cuBLAS library to compute $\Mat{Y}_1^{-1}\Mat{Y}_2$ and $\Mat{Y}_1^{-1}\Mat{y}_3$.
Through a similar approach, we are further able to implement a CUDA kernel for the forward symplectic iteration described in Algorithm \ref{alg:symplectic_it}.
Note that $\Mat{S}_t$ follows a similar factorized form:
\begin{align*}
\Mat{S}_t = \begin{bmatrix}
\Mat{I} & -\Mat{B}_t \Mat{R}_t^{-1} \Mat{B}_t^{\top} \\ \Mat{0} & \Mat{I}
\end{bmatrix} \begin{bmatrix}
\Mat{A}_t & \Mat{0} \\ \Mat{0} & (\Mat{A}_t^{-1})^{\top}
\end{bmatrix} \begin{bmatrix}
\Mat{I} & \Mat{0} \\ -\Mat{Q}_{t-1} & \Mat{I}
\end{bmatrix},
\end{align*}
allowing accumulating $\Mat{S}_t$ via streaming $\Mat{Q}_{t-1}$, $\Mat{A}_t$, $\Mat{R}_t$, and $\Mat{B}_{t}$ in order.
For gradient computation, we fuse two forward symplectic iterations for both primal and dual LQRs together, to efficiently obtain and combine trajectories $\{(\Mat{h}_t, \Mat{u}_t, \Mat{\lambda}_t)\}$ and $\{(\wt{\Mat{h}}_t, \wt{\Mat{u}}_t, \wt{\Mat{\lambda}}_t)\}$ on chip without instantiating them on HBM.

\vspace{-1em}
\paragraph{Numerical Stability.}
Cumulative products of symplectic matrices may suffer from numerical instability, as their eigenvalues occur in reciprocal pairs \citep{mcduff2017introduction}. As a result, directly computing the symplectic iteration can cause the entries of the resulting matrices $[\Mat{Y}_1, \Mat{Y}_2]$ to grow exponentially, leading to numerical overflow and instability when computing $\Mat{Y}_1^{-1}$.
To address this issue, we introduce a normalization scheme that keeps the values in $[\Mat{Y}_1, \Mat{Y}_2]$ within a controlled range.
We observe that row-wise preconditioning of $\Mat{Y}_1$ and $\Mat{Y}_2$ does not affect the final result $\Mat{Y}_1^{-1}\Mat{Y}_2$.
Based on this observation, we construct a diagonal matrix $\Mat{D}$ with entries $\Mat{D}_{ii} = \max\{\lVert \Mat{Y}_{1, i} \rVert_1, \lVert \Mat{Y}_{2, i} \rVert_1\}$ and apply $\Mat{D}^{-1}$ to $[\Mat{Y}_1, \Mat{Y}_2]$ immediately after each step.
Although $[\Mat{Y}_1, \Mat{Y}_2]$ are partitioned row-wise and distributed across CUDA blocks, this normalization can be performed independently within each kernel without synchronization.

\paragraph{Mixed Precision.}
The TTC layer supports either full-precision (\texttt{float32}) or half-precision (\texttt{float16} or \texttt{bfloat16}) inputs. Within the fused kernels of the reverse symplectic iteration, we deliberately keep all on-chip tensors $\Mat{Y}_1$ and $\Mat{Y}_2$ in \texttt{float32} to preserve numerical accuracy during long-horizon matrix accumulations. The output tensor $\Mat{Y}_1$ also remains in \texttt{float32}, since LU decomposition is particularly sensitive to precision, whereas other intermediate quantities can be safely cast to lower precision when written back to HBM. In the forward symplectic iteration for gradient computation, all on-chip tensors are likewise maintained in \texttt{float32} and only cast to the target precision when stored in HBM.

\section{Discussion}

\paragraph{Difference from \citet{amos2018differentiable}.}
Seminal work by \citet{amos2018differentiable} proposed leveraging differentiable LQR as a trainable neural network layer for end-to-end control and planning for the first time.
In contrast to \citet{amos2018differentiable}, we introduce a contextualized parameterization that adapts the underlying control problem dynamically to the input context.
Moreover, while \citet{amos2018differentiable} focuses on classical control tasks, our work applies an LQR-based module to enhance reasoning capabilities in LLMs.
Finally, we propose a new scalable LQR solver that enables efficient computation of the LQR-based TTC layer, thereby making its integration into large-scale LLMs practical.

\paragraph{Comparison with Test-Time Discovery.}
TTD \citep{yuksekgonul2026learning} is a recent work that came out during the preparation of this paper.
To the best of our knowledge, it is the only other approach that explores online RL–based adaptation at test time.
However, the technical approaches differ substantially: TTD performs model-free, policy-based RL by updating the model parameters (i.e., slow weights), whereas our method adopts a model-based, value-based RL formulation that performs planning over the hidden states (i.e., fast weights) without modifying the model weights.
We view these two approaches as complementary, and their integration may further enhance test-time planning and reasoning capabilities.
We also make a table to illustrate the categories of test-time learning methods.
\begin{table}[h]
\centering
\begin{tabular}{c | c c}
\toprule
& Fast weight & Slow weight \\
\midrule
SSL & \makecell{DeltaNet\citep{yang2024parallelizing}, GDN \citep{yang2024gated} \\ Titans\citep{behrouz2024titans}, Nested Learning \citep{behrouz2025nested} \\ MesaNet \citep{von2025mesanet}, GatedKalmanNet\citep{peng2025gated}} &
\makecell{TTT-MLP \citep{sun2024learning} \\ E2E-TTT \citep{tandon2025end}} \\
\midrule
RL & \makecell{TTC (Ours)} &
\makecell{TTD \citep{yuksekgonul2026learning}} \\
\bottomrule
\end{tabular}
\caption{Taxonomy of various test-time learning methods.}
\label{tab:ttt_comparison}
\end{table}

\paragraph{Connection to World Models.}
TTC generates the environment parameters $\{(\Mat{A}_{t}, \Mat{B}_{t}, \Mat{Q}_{t}, \Mat{R}_{t})\}_{t=1}^{T}$ conditioned on the input context, and then solves a control problem defined over these parameters to produce the output.
The resulting linear state transitions and quadratic cost together constitute a simplified world model \citep{ha2018world, hafner2019dream, hansen2022temporal, hansen2023td}, capturing the essential dynamics and objectives relevant to the task.
Rather than aiming for high-fidelity environment simulation, TTC emphasizes fast decision-making and learnability by solving this tractable world model efficiently and differentiating this system to train it in an end-to-end manner.
From this perspective, TTC can be interpreted as an in-context world model: it infers a context-dependent and concise (solvable) representation of the ``world'', upon which it performs planning and decision-making.
All parameters of this ``world'' are then directly trained through the supervision on the predicted actions.

\paragraph{Connection to Continuous CoT.}
Continuous Chain-of-Thought (CoT) \citep{hao2024training} has been proposed to enable LLMs to reason beyond discrete tokens by representing intermediate reasoning traces as latent vectors.
The general framework of continuous CoT repeatedly invokes a transformer backbone without decoding intermediate representations into tokens, while maintaining reasoning entirely within the latent space.
Following this paradigm, an extensive body of work has focused on architectural innovations.
For example, \citet{geiping2025scaling} injects perturbations and incorporates initial signals as inputs to all recurrent blocks.
\citet{wang2025hierarchical} introduces a two-level loop structure with additional supervision applied at each recurrent block.
Other works further improve the scalability of looped transformers \citep{zeng2025pretraining, zhu2025scaling}.
On the theoretical side, \citet{giannou2023looped, yang2023looped, de2024simulation, saunshi2025reasoning} show that looped transformers can significantly enhance expressivity and may even achieve Turing completeness. In addition, \citet{zhu2025emergence} demonstrates that latent CoT can encode the search frontier more efficiently.
TTC-Net shares a similar high-level structure in that the “thinking” process is modeled within a latent state space. However, unlike continuous CoT methods, which are typically simulating an open-ended search, TTC-Net employs a closed-loop reasoning process, where rewards incurred at later steps are propagated backward to earlier stages to guide decision-making.

\section{More Experiments}

\subsection{Experiment Details}
\label{sec:expr_details}

\paragraph{Sudoku Solving.}

All experiments use a 32-layer model with 4 attention heads and an embedding dimension of 128, trained using a batch size of 16 and a learning rate of $5\times10^{-3}$ with 10\% tokens for warm-up and decay to $5\times10^{-4}$.
The proposed TTC layer operates with a hidden and control dimension $d=16$, employs 4 parallel TTC heads, and uses $r = 16$ for basis matrices $\{\Mat{Q}^{(i)}\}_{i=1}^r$ and $\{\Mat{B}^{(i)}\}_{i=1}^{r}$.
Models are trained and evaluated on standard reasoning datasets, with 9,000 training samples and 1,000 test samples, following consistent data splits across experiments.
We further provide a pseudo-code in Algorithm \ref{alg:multistep_sudoku} to illustrate the multi-step completion process of a Sudoku game.

\begin{algorithm}[h]
\caption{Multi-step Sudoku completion}
\label{alg:multistep_sudoku}
\begin{algorithmic}[1]
\Require An incomplete board $\Mat{B}_{init} \in \{\mathtt{[mask]}, 1, \cdots, 9\}^{9 \times 9}$.
\Ensure Completed board $\Mat{B} \in \{1, \cdots, 9\}^{9 \times 9}$.
\State Initialize board $\Mat{B} \gets \Mat{B}_{init}$.
\While{$\exists (i, j)$ such that $\Mat{B}_{ij} = \mathtt{[mask]}$}
\State Initialize prediction $\Mat{P} \in \{1, \cdots, 9\}^{9 \times 9}$, confidence $\Mat{C} \in[0,1]^{9 \times 9} \gets \Mat{0}$.
\State Obtain per-cell digit probabilities: $\Mat{X} \gets \mathsf{Predictor}(\Mat{B}) \in (0, 1)^{9 \times 9 \times 9}$.
\Comment{Invoke neural network backbone}
\For{each masked cell $(i,j)$ such that $\Mat{B}_{ij} = \mathtt{[mask]}$}
    \State $\Mat{P}_{ij} \gets \argmax_{k} \Mat{X}_{ijk}$
    \Comment{Most likely digit for cell $(i,j)$}
    \State $\Mat{C}_{ij} \gets \max_{k} \Mat{X}_{ijk}$
    \Comment{Confidence of the prediction}
\EndFor
\State Select most confident masked cell: $(i^*, j^*) \gets \argmax_{(i,j)}\Mat{C}_{ij}$.
\State Fill selected cell with its prediction: $\Mat{B}_{i^*, j^*} \gets \Mat{P}_{i^*, j^*}$.
\EndWhile
\State \textbf{Return} $\Mat{B}$.
\end{algorithmic}
\end{algorithm}

\paragraph{Math Reasoning.}
For math reasoning tasks, all models are fine-tuned using AdamW with a global batch size of 96, learning rate of $2\times10^{-5}$, weight decay of 0.1, and a cosine learning rate scheduler using 10\% tokens for warm-up and then decaying to $2\times10^{-6}$.
Each TTC layer has 16 heads, with each head operating on a state space of dimension 16 ($d=16$). We set the number of basis to $r = 16$ for both $\{\Mat{Q}^{(i)}\}_{i=1}^r$ and $\{\Mat{B}^{(i)}\}_{i=1}^{r}$.
TTC training employs stochastic planning horizons sampled from a Poisson–lognormal distribution, with a mean horizon of 8 and support ranging from 1 to 32 steps.
During evaluation, we sample 8 solutions for each problem in AIME and AMC, using a temperature of 0.6.
For Math-500, we employ greedy decoding to generate a single solution per problem.

\paragraph{Reasoning Data Curation.}
The curated data collection we used for fine-tuning models comprises questions from recent (2024–2025) verifiable reasoning datasets spanning mathematics, multi-domain academic subjects, science, medicine, finance, and coding, totaling on the order of 4 million objectively checkable QA instances.
One of the primary motivations for selecting more recent datasets is to mitigate the spurious performance caused by potential data leakage and contamination, a common concern in large-scale industrial pre-trained models \citep{shojaee2025illusion}.
A substantial portion is math-centric (e.g., \href{https://huggingface.co/datasets/zwhe99/DeepMath-103K}{DeepMath-103K}, \href{https://huggingface.co/datasets/SynthLabsAI/Big-Math-RL-Verified}{Big-Math}, \href{https://huggingface.co/datasets/l3lab/Massive-Math-455K-Verified}{Massive-Math-455K}, \href{https://huggingface.co/datasets/Open-Reasoner-Zero/orz_math_72k_collection_extended}{ORZ-72k}, \href{https://huggingface.co/datasets/agentica-org/DeepScaleR-Preview-Dataset}{DeepScaleR}), emphasizing problems with unique numeric or symbolic answers suitable for rule-based verification and reinforcement learning.
Broader datasets (e.g., \href{https://huggingface.co/datasets/virtuoussy/Multi-subject-RLVR}{Multi-Subject-RLVR}, \href{https://huggingface.co/datasets/Intelligent-Internet/II-Thought-RL-v0}{II-Thought}, \href{https://huggingface.co/datasets/facebook/natural_reasoning}{Natural Reasoning}, \href{https://huggingface.co/datasets/TIGER-Lab/WebInstruct-verified}{General-Reasoner}
, \href{https://huggingface.co/datasets/nvidia/Nemotron-CrossThink}{Nemotron-CrossThink}) extend coverage to physics, social sciences, law, and engineering, often filtering for short, automatically verifiable answers.
Domain-specific subsets target high-stakes fields such as medicine (\href{https://huggingface.co/datasets/UCSC-VLAA/MedReason}{MedReason}, \href{https://huggingface.co/datasets/FreedomIntelligence/medical-o1-verifiable-problem}{Medical-O1}, \href{https://huggingface.co/datasets/FreedomIntelligence/medical-o1-reasoning-SFT}{Medical-O1-SFT}) and finance (e.g., \href{https://huggingface.co/datasets/TheFinAI/FinCoT}{Fino1}).
Building on these QA pairs, we further generate chain-of-thoughts (CoT) using \texttt{gpt-oss} for those whose intermediate reasoning traces are not given.
To ensure correctness, we retain only those CoTs that pass 32 independent rounds of verifications by \texttt{gpt-oss}. 
After deduplication, we randomly sample and keep 800K examples as the final curated SFT dataset.

\paragraph{Efficiency Benchmark.}
In Fig. \ref{fig:speed}, we evaluate computational efficiency on an NVIDIA H200 GPU.
Both throughput and memory footprint are measured over a complete forward and backward pass.
Throughput is measured in GB/s, reporting the median runtime together with the 20-80\% percentile statistics to account for runtime variability.
We benchmark two scaling dimensions: (1) planning horizon $T \in \{2^4, \dots, 2^{11}\}$ with logarithmic spacing, and (2) batch size $B \in \{2^4, \dots, 2^{13}\}$.
Across all results, we set the state dimension $d = 16$.
Throughput is computed based on the theoretical FLOPs $BTd^3$ of solving $B$-many batched $d$-dimensional LQR problem with horizon $T$ via Riccati iteration and normalized by wall-clock latency.
Memory footprint is evaluated separately by measuring peak GPU memory usage during execution, reported in GB.
For memory benchmarking, we fix either $B=1024$ (when varying $T$) or $T=64$ (when varying $B$) to isolate scaling effects.
All benchmarks compare different implementation providers under identical input generation and precision settings, and out-of-memory cases are recorded as zero throughput.

\subsection{Comparison with Decoding-Based Scaling}
\label{sec:tt_baselines}

\begin{wraptable}{r}{0.3\textwidth}
\vspace{-1.5em}
\centering
\caption{Comparison with decoding-based test-time scaling methods.s TTC-Net improves accuracy while requiring substantially fewer generated tokens.}
\label{tab:decoding_baselines}
\small
\begin{tabular}{lcc}
\toprule
Method & Acc. & Avg. Tokens \\
\midrule
BoN & 22.43 & 20731.10 \\
ToT & 21.08 & 22412.02 \\
TTC-Net & \textbf{23.34} & \textbf{2801.50} \\
\bottomrule
\end{tabular}
\vspace{-1em}
\end{wraptable}
A common approach to improving reasoning at test time is to allocate additional decoding computation. Methods such as Best-of-$N$ sampling (BoN) \citep{stiennon2020learning}, self-consistency \citep{wang2022self}, Tree-of-Thoughts (ToT) \citep{yao2024tree}, and planning-based decoding methods such as RAP~\citep{hao2023reasoning} improve performance by sampling, filtering, or searching over candidate reasoning traces. 
Unlike TTC-Net, which incorporates planning as an internal architectural mechanism over latent space, these approaches instantiate planning externally: the base model remains unchanged, while additional inference-time procedures are applied on top of autoregressive generation.

To compare these two forms of test-time computation, we evaluate TTC-Net against BoN and ToT under the same reasoning setting. All methods use chain-of-thought prompting. Tab.~\ref{tab:decoding_baselines} shows that TTC-Net achieves higher accuracy while generating substantially fewer tokens. The gap in output length is particularly significant: external search methods require more than $20$K generated tokens on average, whereas TTC-Net produces more concise reasoning traces.
This suggests that internal latent planning can improve reasoning quality without relying on extensive enumeration and filtering in the token space.
Nevertheless, TTC is complementary to decoding-based scaling techniques, and they can be combined to achieve stronger performance. TTC differs in that the planning computation is amortized through the model architecture.

We also isolate the interaction between TTC and explicit chain-of-thought generation.
To disable explicit CoT for LLMs, we enforce the following system prompt:
\begin{lstlisting}
Solve math problems silently. Do not disclose chain-of-thought, scratch work, intermediate reasoning, or hidden deliberation.

Return only the final answer in this exact format:

\[
\boxed{...}
\]}
\end{lstlisting}

\begin{wraptable}{r}{0.3\textwidth}
\vspace{-1.5em}
\centering
\caption{Ablation study between TTC and explicit CoT prompting on AMC.}
\label{tab:cot_ablation}
\small
\begin{tabular}{lcc}
\toprule
Model & w/ CoT & w/o CoT \\
\midrule
w/o TTC & 20.78 & 3.92 \\
w/ TTC & \textbf{23.34} & \textbf{7.83} \\
\bottomrule
\end{tabular}
\vspace{-1em}
\end{wraptable}
Tab. \ref{tab:cot_ablation} reports a $2\times2$ comparison on AMC, varying whether the model includes TTC layers and whether the inference prompt encourages explicit CoT.
TTC improves performance in both regimes.
Without explicit CoT, TTC increases accuracy from $3.92\%$ to $7.83\%$, indicating that latent planning provides a meaningful internal reasoning even when token-space reasoning steps are suppressed.
With CoT enabled, TTC further improves accuracy from $20.78\%$ to $23.34\%$, showing that TTC and explicit reasoning traces are complementary.

We note that TTC should not be viewed as a replacement for chain-of-thought reasoning.
Instead, TTC improves the latent computation underlying each generated reasoning step.
Explicit CoT remains important for difficult problems, while TTC provides an additional internal planning mechanism that can reduce reliance on long or redundant decoding-time search.

\subsection{Evaluation beyond Reasoning Benchmarks}
\label{sec:lm_harness}

Although our primary goal is to improve reasoning through architectural design, we also evaluate whether TTC-Net preserves general language modeling and understanding capabilities beyond math reasoning benchmarks. We evaluate the fine-tuned LLaMA-based checkpoints on standard LM evaluation benchmarks, including commonsense reasoning, reading comprehension, knowledge-intensive multiple-choice evaluation, and code generation. The results are reported in Tab.~\ref{tab:lm_harness}.
\begin{table}[h]
\centering
\caption{\textbf{Evaluation beyond math reasoning benchmarks.} TTC-Net improves several reasoning-heavy and code-related tasks while maintaining competitive performance on general language understanding benchmarks.}
\label{tab:lm_harness}
\begin{tabular}{lccccc}
\toprule
Model & HellaSwag & WinoGrande & ARC-C & MMLU & HumanEval \\
\midrule
LLaMA + SFT & \textbf{54.01} & \textbf{74.74} & 51.79 & 59.31 & 32.32 \\
TTC-Net & 53.75 & 72.09 & \textbf{59.22} & \textbf{64.02} & \textbf{49.39} \\
\bottomrule
\end{tabular}
\end{table}

TTC-Net obtains gains on ARC-Challenge, MMLU, and HumanEval, which require long-horizon reasoning or program synthesis ability.
On HellaSwag and WinoGrande, TTC-Net remains competitive with the raw SFT baseline, suggesting that the additional planning module preserves general language modeling ability learned in base models.
These results support the generality of TTC beyond the math benchmarks used in our main evaluation.

\subsection{Evaluation on Qwen Models}

We further validate TTC-Net on a stronger reasoning backbone, Qwen2.5-Math-7B. TTC layers are modular and can be inserted into transformer-based architectures without relying on a specific backbone.
As shown in Tab.~\ref{tab:qwen_results}, TTC-Net improves over standard full-parameter fine-tuning across all evaluated math benchmarks, confirming the consistency of our proposed methods.
\begin{table}[h]
\centering
\caption{\textbf{Results on Qwen2.5-Math-7B.} TTC-Net yields consistent improvements over standard fine-tuning on other LLM backbones.}
\label{tab:qwen_results}
\begin{tabular}{lcccc}
\toprule
Model & MATH-500 & AMC & AIME24 & AIME25 \\
\midrule
Qwen2.5-Math-7B & 43.8 & 33.0 & 6.7 & 6.7 \\
+ Fine-tuning & 74.4 & 51.7 & 20.4 & 18.7 \\
+ TTC-Net & \textbf{75.6} & \textbf{55.4} & \textbf{22.9} & \textbf{20.8} \\
\bottomrule
\end{tabular}
\end{table}

\subsection{Computational Analysis}
\label{sec:efficiency}

\begin{wraptable}{r}{0.3\textwidth}
\vspace{-1.5em}
\centering
\caption{Inference throughput of Transformer and TTC-Net on an NVIDIA H100 GPU.}
\label{tab:throughput}
\small
\begin{tabular}{lc}
\toprule
Model & Tokens/s \\
\midrule
Transformer & 47.16 \\
TTC-Net ($T=8$) & 45.77 \\
TTC-Net ($T=16$) & 44.61 \\
TTC-Net ($T=64$) & 41.43 \\
TTC-Net ($T=128$) & 36.18 \\
\bottomrule
\end{tabular}
\vspace{-1em}
\end{wraptable}
TTC-Net is designed to preserve efficient inference with hardware-level optimization. 
A naive implementation of finite-horizon optimal control would introduce substantial overhead, especially when the planning horizon is large.
TTC-Net avoids this bottleneck through a hardware-friendly LQR solver and fused CUDA kernels that reduce memory I/O traffic.

Tab. ~\ref{tab:throughput} reports inference throughput on an NVIDIA H100 GPU. Compared with a standard transformer, TTC-Net introduces a small overhead at short planning horizons. At $T=8$, throughput decreases from $47.16$ to $45.77$ tokens/s. Increasing the horizon leads to a gradual reduction in throughput, but the model remains practical even at $T=128$. This behavior reflects the intended accuracy-efficiency trade-off: larger horizons allocate more computation to latent planning, while the parallel solver and fused implementation keep the additional cost moderate.

\subsection{Analysis of Latent Trajectories}

\begin{wraptable}{r}{0.3\textwidth}
\vspace{-1.5em}
\centering
\caption{Linear probing accuracy of TTC latent states on Sudoku.}
\label{tab:sudoku_probe}
\vspace{-0.5em}
\small
\begin{tabular}{lc}
\toprule
Step & Cell Acc. \\
\midrule
0 & 42.50 \\
2 & 42.96 \\
4 & 46.20 \\
8 & \textbf{50.38} \\
\bottomrule
\end{tabular}
\vspace{-3em}
\end{wraptable}
The central hypothesis behind TTC-Net is that the explicit control mechanism can internalize meaningful planning trajectories.
In this section, we testify this hypothesis by analyzing TTC latent states and measuring how they align with the explicit reasoning process specific to targeted tasks.

\paragraph{Linear Probing on Sudoku.}
We study whether TTC latent states become progressively more informative over the planning horizon.
On Sudoku, we train a linear probe to decode intermediate TTC latent states into per-cell digit predictions.
The TTC states themselves are not directly supervised with cell-level labels during model training; only the final outcome is used to make the final prediction.
Tab.~\ref{tab:sudoku_probe} shows that cell-level decoding accuracy increases with the planning step, from $42.50\%$ at step $0$ to $50.38\%$ at step $8$. This trend indicates that TTC planning progressively refines the latent representation toward the terminal solution.

\paragraph{Latent Trajectory Alignment with CoT.}
\begin{wraptable}{r}{0.3\textwidth}
\vspace{-1.5em}
\centering
\caption{Trajectory-level alignment between TTC latent dynamics and explicit CoT representations.}
\label{tab:trajectory_alignment}
\small
\begin{tabular}{lc}
\toprule
Pairing & Similarity \\
\midrule
Same question & \textbf{0.534} \\
Random pairs & 0.212 \\
\bottomrule
\end{tabular}
\vspace{-1em}
\end{wraptable}
We measure the alignment between TTC latent trajectories and CoT token representations.
Let $\Mat{X} \in \mathbb{R}^{L \times d}$ denote token representations extracted from explicit CoT traces at the corresponding residual stream layer, where $L$ is the CoT length.
Likewise, we let $\Mat{H} \in \mathbb{R}^{T \times d}$ denote the latent trajectory produced by TTC over a planning horizon of length $T$.
Note that we extract $\Mat{H}$ from the first token after the prompt.
Because TTC planning steps and CoT tokens need not align one-to-one, we compute a monotone-path alignment score:
\begin{align*}
\mathrm{sim}(\Mat{H}, \Mat{X}) = \max_{p \in \mathcal{P}} \frac{1}{|p|} \sum_{(i,j) \in p} \frac{\Mat{H}_i^\top \Mat{X}_j}{\lVert\Mat{H}_i\rVert_2 \lVert\Mat{X}_j\rVert_2},
\end{align*}
where $\mathcal{P} =\{((i_i, j_1), \dots, (j_m, j_m)) | (i_1,j_1) = (1,1), (i_m,j_m)=(T, L), (i_{k+1}-i_k, j_{k+1}-j_k) \in \{(1,0),(0,1),(1,1)\}\}$ denotes the set of monotone alignment paths between TTC planning steps and CoT token positions.
The similarity score above intends to match each TTC latent to a consecutive chunk of tokens' representations, and the maximization problem can be solved via dynamic programming.
We compare this similarity score between latent trajectories and CoT representations derived from the same question and different question pairs.
Tab.~\ref{tab:trajectory_alignment} reports the resulting similarity on AMC.
TTC latent trajectories are more aligned with CoT representations from the same question than with randomly paired trajectories.
This suggests that the TTC dynamics are highly correlated with the explicit reasoning trace, internalizing token-space CoT with a more compact latent space.